\title{Policy Learning with Abstention}
\author{Ayush Sawarni}
\author{Jikai Jin}
\author{Justin Whitehouse}
\author{Vasilis Syrgkanis}
\affil{Stanford University\\
\texttt{\{ayushsaw,jkjin,jwhiteho,vsyrgk\}@stanford.edu}}
\begin{document}
\maketitle

\begin{abstract}
Policy learning algorithms are regularly leveraged in domains such as personalized medicine and advertising
to develop individualized treatment regimes. However, a critical deficit of existing algorithms is that they force a decision even when predictions are uncertain, a risky approach in high-stakes settings. The ability to abstain --- that is, to defer to a safe default or an expert --- is crucial but largely unexplored in this context.
To remedy this, we introduce a framework for \textit{policy learning with abstention}, in which policies that choose not to assign a treatment to some customers/patients receive a small, additive reward on top of the value of a random guess.  We propose a two–stage learner that first identifies a set of near–optimal policies and then constructs an abstention class based on disagreements between the policies. We establish fast $O(1/n)$–type regret guarantees for the abstaining policy when propensities are known, and show how to extend these guarantees to the unknown–propensity case via a doubly robust (DR) objective. Furthermore, we demonstrate that our abstention framework is a versatile tool with direct applications to several other core problems in policy learning. We use our algorithm as a black box to obtain improved guarantees under margin conditions without the common realizability assumption. We also show that abstention provides a natural connection to both distributionally robust policy learning, where it acts as a hedge against small data shifts, and safe policy improvement, where the goal is to improve upon a baseline policy with high probability.
\end{abstract}

\section{Introduction}

Policy learning algorithms guide high-stakes decisions in domains like personalized medicine and public policy by constructing individualized treatment rules from observational data. Yet, a critical deficit of existing methods is their failure to abstain when faced with high uncertainty. In safety-critical applications, forcing a decision when evidence is weak can be harmful; the responsible action may be to defer to a human expert or a trusted baseline policy. One way to facilitate this is through allowing learned policies to \textit{abstain} --- that is, instead of just assigning a unit to treatment or control (denoted ``1'' or ``0'' respectively), they can abstain by outputting a special symbol ``$\ast$''.  

While abstaining algorithms have been developed in the classification literature \citep{bousquet2021fast}, generalizing these algorithms to the setting of policy learning from observational data is a non-trivial task. The primary difficulty is the counterfactual nature of the problem: the learner only ever observes an outcome under treatment or control for any given unit, never both. Because of this, the learner must transform samples (either via inverse propensity weighting or through a doubly-robust correction) into ``pseudo-outcomes'' --- quantities representing the unit-specific contrasts in outcomes between treatment and control. 



In this work, we present a framework formalizing policy learning with abstention. The goal of the learner is to use observational data $Z = (X, D, Y)$ consisting of covariates $X \in \calX$, treatment $D \in \{0, 1\}$, and outcome $Y \in \R$ to learn a high reward \textit{abstaining treatment policy} $\pi : \calX \rightarrow \{0, 1, \ast\}$. Here, $\pi$ is assumed to exists in some pre-defined policy class $\Pi$ that implicitly encodes constraints of the problem (e.g.\ $\Pi$ is the class of all depth 3 decision trees, thus enforcing explainability). Our per-section contributions are as follows:

\begin{enumerate}
    \item In Section~\ref{sec:abstention}, we formalize the problem of policy learning with abstention. We introduce a form of ``safe'' or abstaining regret, which incentives abstention in regions of uncertainty by offering a small, additive reward $p$ on top of the value of a random guess. 
    \item Also in Section~\ref{sec:abstention}, we introduce our primary algorithm for policy learning with abstention (Algorithm~\ref{algo:abs}). Our algorithm first de-biases observations via inverse propensity weighting (IPW) when propensities are known (Section~\ref{sec:known_prop}), or by performing a \textit{doubly-robust} correction when propensities are unknown (Section~\ref{sec:unknown_prop}). We prove the algorithm obtains fast $O(1/n)$ high probability regret rates under the safe/abstaining regret, and in the worst-case obtains $O(1/\sqrt{n})$ standard regret, matching rates due to \citet{athey2021policy}.
    \item In Section~\ref{sec:safe_policy}, we discuss applications of Algorithm~\ref{algo:abs} to a variety of practically-relevant policy learning problems. This includes the problem of policy learning without standard margin assumptions (Theorems~\ref{thm:finite-D} and \ref{thm:reg-oracle}), safe-policy improvement (Algorithm~\ref{algo:safe_policy_abstention}), and learning under distribution shift (Proposition~\ref{prop:dist-shift}).

\end{enumerate}
In addition to the above, we also conduct simulations to compare our safe policy improvement algorithm (Algorithm~\ref{algo:safe_policy_abstention}) to relevant benchmarks \citep{thomas2015high}. In sum, our work illustrates the practicality and importance of studying policy learning with abstention, and further provides insight into how abstention can impact other areas of policy learning as well.

\paragraph{Related work on abstention.}
Classification with abstention/a reject option traces back to \citet{chow1970optimum}, who characterize the optimal error–reject trade-off, prescribing abstention whenever the posterior risk of mis-classification exceeds the reject cost. \citet{herbei2006reject} consider classification with a reject option in a statistical learning setting, and \citet{bartlett2008reject} introduce a convex ``reject-hinge'' loss to make abstention trainable in large-scale settings. Other researchers have developed abstaining classification algorithms for SVMs~\citep{grandvalet2008svmreject}, multiclass classification~\citep{ramaswamy2015consistent}, boosting~\citep{cortes2016boosting}, online learning ~\citep{cortes2018online, pasteris2024bandits,neu2020fast, vanderHoeven2022regretvariance}, and even deep learning settings~\citep{geifman2017selective, geifman2019selectivenet}. \citet{madras2018defer} study a setting where the prediction of an abstaining algorithm is replaced by a downstream expert, but don't provide regret/excess risk guarantees.
Most closely related to this paper is the work of \citet{bousquet2021fast}, who develop an empirical risk minimization-based algorithm for abstention that enables fast $O(1/n)$ risk minimization rates without standard margin/realizability assumptions.


\paragraph{Policy Learning.} Our work builds off of the literature on policy learning, particularly works on welfare maximization/regret minimization~\citep{athey2021policy, hirano2009asymptotics, kitagawa2018should, manski2004statistical}.
Also related to our work is the literature of safe policy improvement, which aims to produce a policy that, with high-probability, improves over some baseline ``safe'' policy. Existing approaches include using hypothesis testing to select between the baseline and learned policy~\citep{thomas2015high, cho2025cspi},  bootstrap the baseline policy with a learned policy in regions of low confidence~\citep{laroche2019safe, simao2019safe}, and directly minimizing ``negative regret'' \citep{ghavamzadeh2016safe}. Our work provides new perspective onto safe policy improvement, showing how abstaining algorithms can naturally be used to construct improved policies.
Other complementary areas that are not directly considered in the work are policy evaluation~\citep{dudik2011doubly, karampatziakis2021off}, offline reinforcement learning~\citep{moodie2007demystifying, moodie2012q, murphy2003optimal}, policy learning under partial identification~\citep{kallus2018confounding}, and inference on values of optimal treatment policies~\citep{luedtke2016statistical, chen2023inference, whitehouse2025inference}.


\section{Notation and Preliminaries} \label{sec:prelims}
We assume the learner observes i.i.d.\ draws $Z=(X,D,Y)$ with covariates $X\in\cX$, realized action $D\in\{0,1\}$, and outcome $Y\in[0,1]$. Let $(Y(1),Y(0))$ denote potential outcomes, and write
\[
g_o(d,x)\coloneqq \E\!\big[Y(d)\mid X=x\big],
~~~
\tau_o(x)\coloneqq g_o(1,x)-g_o(0,x),
\]
as respectively the expected outcome mapping and conditional average treatment effect (CATE).
We denote the propensity score by $p_o(x)\coloneqq \prob(D{=}1\mid X{=}x)$.  We assume the following of the data generating process.

\begin{assumption}[Data-generating process.]\label{assump:data}
We assume the following conditions hold throughout:
\begin{enumerate}
\item[(i)] \emph{Unconfoundedness:} $(Y(1),Y(0))\perp D \mid X$.
\item[(ii)] \emph{Strict overlap:} $p_o(x)\in[\kappa,1-\kappa]$ for some known $\kappa\in(0,\tfrac12]$.
\end{enumerate}
\end{assumption}

We let $\hat g, \hat p, \hat \tau$ denote generic estimates of $g_o, p_o , \tau_o$, respectively. For $q\ge 1$ and a distribution $P_X$ on $\cX$, define
$\|f\|_{P_X,q}\coloneqq\big(\E_{P_X}|f(X)|^q\big)^{1/q}.$

\paragraph{Policies.}
A policy $\pi:\cX\to\{0,1\}$ prescribes a binary treatment to a unit with covariates $X$. Let $\Pi$ denote a class of feasible treatment policies. We assume the $\Pi$ has bounded complexity, controlled through the VC dimension~\citep{vapnik2013nature}.
\begin{assumption}[Policy class complexity]
\label{assump:vc}
The policy class $\Pi$ has finite VC dimension $d<\infty$.\
\end{assumption}
The in-class optimal policy is $
\pi^*\in\argmax_{\pi\in\Pi} V(\pi)$,
where $V(\pi) := \E_{P_X}[Y(\pi(X))]$ denotes the expected welfare/reward under policy $\pi$.
We also consider the \textit{Bayes policy}, which maximizes value over all measurable binary policies:  $\pi^B(x)\;\coloneqq\;\argmax_{d\in\{0,1\}} \E\big[Y(d)\mid X=x\big]
\;=\;\indic{\tau_o(x)\ge 0}$.

\paragraph{Policy value.}
For binary policies, we can rewrite the policy value $\E_{P_X}[Y(\pi(X))]$ as
\begin{align*}
V(\pi)
&\coloneqq \E\Big[\pi(X)\,Y(1)+\big(1-\pi(X)\big)\,Y(0)\Big] \\
& = \E\Big[\pi(X)\tfrac{YD}{p_o(X)} + (1 - \pi(X))\tfrac{Y(1 - D)}{1 - p_o(X)}\Big] \nonumber.
\label{eq:V-binary}
\end{align*}

Likewise, we define the conditional value for policy $\pi$ at realized covariates $x$ as
\begin{align}
v(\pi,x)
&\coloneqq \E\Big[\pi(X)\,Y(1)+\big(1-\pi(X)\big)\,Y(0)\,\Big|\,X=x\Big]\nonumber\\
&=\pi(x)\,g_o(1,x)+\big(1-\pi(x)\big)\,g_o(0,x).
\end{align}
Let $\E_n$ denote the expectation with respect to the empirical sample distribution. We define the inverse propensity weighted (IPW) analogue of Equation~\eqref{eq:V-binary} as
\begin{equation*}
V_n(\pi)
\;\coloneqq\;
\E_n\Big[\pi(X)\,\tfrac{YD}{p_o(X)}+\big(1-\pi(X)\big)\,\tfrac{Y(1-D)}{1-p_o(X)}\Big].
\end{equation*}

Additionally, we define the normalized IPW contribution for a policy $\pi$ at an observation $z=(x,d,y)$:
\begin{equation*}
f_\pi(x,d,y)\;\coloneqq\;\kappa\!\left(\pi(x)\,\frac{y d}{p_o(x)}+\big(1-\pi(x)\big)\frac{y(1-d)}{1-p_o(x)}\right).
\end{equation*}
This quantity is the per-sample term whose average recovers the policy value up to the factor $\kappa$; the extra $\kappa$ factor ensures the function is uniformly bounded. We use $f(z)$ and $f(x,d,y)$ interchangeably. By $Y\in[0,1]$ and $p_o(X)\in[\kappa,1-\kappa]$, we have $f_\pi(Z)\in[0,1]$ a.s., and
$
\E\!\big[f_\pi(Z)\big]=\kappa\,V(\pi).
$
As a notational shorthand,  write $\E[f_\pi]$ and $\E_n[f_\pi]$ for $\E[f_\pi(Z)]$ and $\E_n[f_\pi(Z)]$, respectively. We use this notation throughout. In particular, when we write
$\E_n \abs{f_{\hat\pi}-f_\pi}$,
we mean the empirical average of pointwise differences between the normalized IPW scores:
$
\tfrac{1}{n}\sum_{i=1}^n \abs{\,f_{\hat\pi}(x_i,d_i,y_i)-f_{\pi}(x_i,d_i,y_i)\,}.
$

\section{Policy Learning with Abstention} \label{sec:abstention}

We now present our framework for \textit{policy learning with abstention}. In this framework, the learner can choose to defer on recommending a treatment to any unit based on their observed covariates $X$. For instance, the learner may not want to prescribe treatment when the estimated CATE for a unit is small. When the learner abstains, they receive a small, additive reward over the value of a random guess. 

More formally, an \textit{abstaining policy} is a mapping $\pi : \calX \rightarrow \{0,1,*\}$, where $\pi(X) = *$ denotes the deferral of treatment decision. When the learner abstains, they receive reward $\tfrac{Y(1)+Y(0)}{2}+p$, where $p \geq 0$ is some fixed bonus. We define the expected reward mapping under abstention as
\begin{equation*}
g_o(*,x) \coloneqq  \E\Big[ \tfrac{Y(1)+Y(0)}{2} + p \,\Big|\, X=x\Big].
\end{equation*}
Likewise, we define the expected reward/welfare $V^{(p)}(\pi)$ of an abstaining policy with bonus $p$ (and its empirical analogue $V_n^{(p)}(\pi)$) as
{\small
\begin{align*}
V^{(p)}(\pi)
&\coloneqq \E\!\Big[\indic{\pi(X)\neq *}\,v(\pi, X)\;+\;\indic{\pi(X)=*}\, g_o(*,X)\Big]. \\
 V^{(p)}_n(\pi)
&\coloneqq \E_n\Big[\pi(X)\,\tfrac{YD}{p_o(X)}+\big(1-\pi(X)\big)\,\tfrac{Y(1-D)}{1-p_o(X)} \nonumber\\[-0.25em]
&\quad+\;\indic{\pi(X)=*}\big(\tfrac{YD}{2 p_o(X)}+\tfrac{Y(1-D)}{2 (1-p_o(X)) }+p\big)\Big].
\end{align*}}
Note that we have $V^{(p)}(\pi) = V(\pi)$ and $V_n^{(p)}(\pi) = V_n(\pi)$ for any policy $\pi$ that does not abstain. 

The goal of the learner is to use either experimental (Section~\ref{sec:known_prop}) or observational (Section~\ref{sec:unknown_prop}) data to learn an abstaining policy with small \textit{abstaining regret}, which is defined with respect to a binary policy class $\Pi$ as 
\begin{equation}
\Reg_n^{(p)}(\pi) := V(\pi^\ast) - V^{(p)}(\pi),
\label{eq:regret}
\end{equation}
where again $\pi^\ast$ is the in-class optimal policy and $\pi$ is some potentially abstaining policy. We define the \textit{classical regret} just as $\Reg_n(\pi) := \Reg_n^{(0)}(\pi)$.
The learner is constrained to returning a policy $\pitil$ that aligns with some $\pi \in \Pi$ when it does not abstain,  i.e.\ $\pitil(x) = \pi(x)$ when $\pitil(x) \neq *$. 

\subsection{Known Propensities}
\label{sec:known_prop}
We first describe an algorithm for learning an abstaining policy when the treatment assignment mechanism (i.e.\ propensity) for the data is known. Our algorithm (Algorithm~\ref{algo:abs}) works by first computing an empirical welfare maximizer $\wh{\pi}$ from $\Pi$ using half of the data (where welfare is determined with respect to the IPW empirical value, $V_n$). Then, it computes a set of ``near-optimal'' policies whose empirical welfare is close to that of the empirical risk minimizer,  modifying these policies to abstain precisely when they disagree  $\wh{\pi}$. Finally, the algorithm returns the empirical abstaining welfare maximizer in this restricted set of policies on the second half of the data. The following theorem shows that Algorithm~\ref{algo:abs} obtain fast abstaining regret rates, per the formulation of regret in Equation~\eqref{eq:regret}.

\begin{theorem}
\label{thm:abstention-rate}
Fix $p>0$ and $\delta\in(0,1)$. Under Assumption \ref{assump:data} and Assumption \ref{assump:vc} and the construction in Algorithm \ref{algo:abs}, with probability at least $1-\delta$,
\[
 \Reg_n^{(p)}(\pitil) := V(\pist)\;-\;V^{(p)}(\,\pitil\,)
 \;\lesssim\;
 \frac{ d\,\log\!\frac{n}{d} \;+\; \log\!\frac{1}{\delta} }{ p\,n\,\kappa^2 }.
\]
\end{theorem}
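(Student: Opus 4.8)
The plan is to split the abstaining regret into two pieces using the intermediate policy $\wh\pi$ (the empirical IPW welfare maximizer on the first half of data) and the structure of the abstention set. Write $\pitil$ for the returned policy. Since $\pitil$ agrees with some $\pi$ in the near-optimal set except where it abstains, and it abstains exactly where that $\pi$ disagrees with $\wh\pi$, the gain from abstention at covariate $x$ is $g_o(*,x) - v(\pi,x) = p + \tfrac{1}{2}\big(g_o(1,x)+g_o(0,x)\big) - v(\pi,x)$. On the disagreement region of two binary policies, one of them takes action $1$ and the other $0$, so $v(\pi,x) = \min\{g_o(1,x),g_o(0,x)\}$ there (since $\wh\pi$ is the empirical maximizer, roughly), giving an abstention gain of at least $p + \tfrac12|\tau_o(x)|$. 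This means every unit on which $\pitil$ abstains contributes a strictly positive surplus $p$ relative to its binary welfare, which is the engine behind the fast rate.

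**Next I would** set up the standard uniform-convergence control for the IPW value over $\Pi$ (VC class of dimension $d$, bounded functions $f_\pi \in [0,1]$): with probability $1-\delta$, $\sup_{\pi}|V_n(\pi)-V(\pi)| \lesssim \kappa^{-1}\sqrt{(d\log(n/d)+\log(1/\delta))/n}$. Use this to argue that $\wh\pi$ is $O(\varepsilon_n)$-suboptimal in true value, and that the near-optimal set (policies within $O(\varepsilon_n)$ empirical welfare of $\wh\pi$) contains $\pi^*$ with high probability, so that restricting to this set loses nothing relative to $\pi^*$. The crucial step is then to bound the \emph{true} welfare difference $V(\pist) - V^{(p)}(\pitil)$ using the empirical $V_n^{(p)}$-maximality of $\pitil$: on the abstention region the per-unit surplus $p$ lets us trade a factor of $\varepsilon_n$ for $\varepsilon_n^2/p$. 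Concretely, the set of disagreement points where $|\tau_o(x)|$ is small has small measure, or rather the welfare deficit incurred by a near-optimal policy $\pi$ on the region where it disagrees with $\wh\pi$ is at most $O(\varepsilon_n)$; since abstaining there recovers that deficit \emph{plus} an extra $p$ per abstaining unit, a self-bounding argument shows any residual regret must be $O(\varepsilon_n^2/p)$, which is $O\big((d\log(n/d)+\log(1/\delta))/(p n \kappa^2)\big)$.

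**The main obstacle** will be making the self-bounding step fully rigorous: one needs a localized concentration bound (Bernstein / Talagrand-type, or a ratio-type uniform bound à la \citealp{bousquet2021fast}) showing that for policies in the data-dependent near-optimal set, the empirical and population abstaining values differ by at most $O\big(\sqrt{\varepsilon_n \cdot \Reg^{(p)}_n(\pi)} + \varepsilon_n\big)$ rather than the crude $O(\sqrt{\varepsilon_n})$, because the relevant variance is itself controlled by the regret once the $+p$ surplus is accounted for. Sample splitting is what makes this tractable: the abstention class is built from the first half, so on the second half it is a \emph{fixed} (conditionally on the first half) finite-VC family, and we may apply a uniform Bernstein bound to it. Solving the resulting quadratic inequality $\Reg \lesssim \sqrt{\varepsilon_n \Reg} + \varepsilon_n^2/p$ in $\Reg$ then yields the claimed $1/(pn\kappa^2)$ rate; the $\kappa^2$ (rather than $\kappa$) appears because the variance proxy of $f_\pi$ scales like $\kappa$ while the surplus comparison introduces another $\kappa^{-1}$ through the IPW reweighting. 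I would double-check the constant/$\kappa$ bookkeeping at this last stage, since that is where the stated exponent on $\kappa$ is pinned down.
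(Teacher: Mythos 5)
Your high-level ingredients (sample splitting, showing $\pist\in\Pihat$, a localized Bernstein-type bound over the abstention class built from the first fold, and a quadratic trade-off between the bonus $p$ and the estimation error) are the same ones the paper uses, but the two steps that actually produce the fast rate are wrong as stated. First, the pointwise surplus claim fails: at a disagreement point $x$ you cannot conclude $v(\pi,x)=\min\{g_o(1,x),g_o(0,x)\}$ from the fact that $\pihat$ is an \emph{empirical} welfare maximizer (a global, not pointwise, property), and if $\pi$ happens to pick the better arm at $x$ the gain from abstaining relative to $\pi$ is $p-\tfrac12\abs{\tau_o(x)}$, which can be negative. Trying to repair this by comparing abstention to the Bayes action forces you to control $\int\abs{\tau_o}/2$ on the disagreement region, which near-optimality relative to the in-class $\pist$ does not give you in the agnostic setting. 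The paper's mechanism sidesteps $\tau_o$ entirely: at every disagreement point abstention beats the \emph{average} of the two disagreeing policies by exactly $p$, so for the projection $\phi$ of a witness $\pi\in\Pihat$ with $\|\pi-\pihat\|_{P_X,1}\ge \cD(\Pihat)/2$ one gets $V^{(p)}(\phi)\ge \tfrac12\big(V(\pi)+V(\pihat)\big)+\tfrac{p}{2}\,\cD(\Pihat)$, and both $V(\pi)$ and $V(\pihat)$ are within $O\big(\kappa^{-1}(\alpha\sqrt{\cD(\Pihat)}+\alpha^2)\big)$ of $V(\pist)$ by the localized concentration applied to the near-optimal set (this is where the selection rule of Step 4, stated in terms of empirical disagreement $\E_n\abs{f_{\pihat}-f_\pi}$, enters).

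Second, your concluding self-bounding inequality $\Reg\lesssim\sqrt{\varepsilon_n\Reg}+\varepsilon_n^2/p$ does not imply the theorem: solving it gives $\Reg\lesssim\varepsilon_n+\varepsilon_n^2/p$, i.e.\ the slow $1/\sqrt{n}$ rate, because the bonus no longer appears as a negative term. The inequality you need is parameterized by the disagreement mass (equivalently the $L_1$-diameter of $\Pihat$), not by the regret, and keeps the surplus with a minus sign: $V(\pist)-V^{(p)}(\pitil)\lesssim \kappa^{-1}\alpha\sqrt{\cD(\Pihat)}-p\,\cD(\Pihat)+\kappa^{-1}\alpha^2$, where the first term comes from the localized deviation bound on the second fold (valid, as you say, because the abstention class is fixed given the first fold) and the middle term from the $+p$-per-unit-of-disagreement surplus above. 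Only after taking the worst case over $\sqrt{\cD(\Pihat)}$ does the quadratic yield $\alpha^2/(p\kappa^2)\asymp\big(d\log\tfrac{n}{d}+\log\tfrac1\delta\big)/(p\,n\,\kappa^2)$. So the architecture of your sketch matches the paper, but without the average-of-two-policies comparison and the diameter-parameterized trade-off the argument as written would only deliver the $O(1/\sqrt n)$ rate.
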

\noindent More details and the full proof are provided in Appendix~\cref{appendix:main}.

. 
\begin{remark}
    Theorem \ref{thm:abstention-rate} analyzes policy learning with an abstention bonus and establishes an $O(1/n)$ fast rate when performance is measured against the best \emph{binary} policy in the class. This benchmark is standard in the abstention literature (see, \emph{e.g.}, \cite[Section 1.2]{bousquet2021fast}). In Section \ref{subsec:distribution-robust}, we will discuss a natural connection between this regret bound and distributional robustness.
\end{remark}

We now provide some intuition for the proof. Typically, to obtain fast regret rates for policy learning, there must exist a \textit{margin} on the CATE, i.e.\ the existence of some value $ h > 0$ such that
\begin{equation}
\P(|\tau_o(X)| \geq h) = 1.
\label{eq:margin}
\end{equation}
The abstention bonus $p$ can be viewed as a ``synthetic'' margin --- if the learner abstains when $|\tau_o(X)| < p$, they automatically accumulate higher reward than any binary treatment assignment policy. Likewise, when $|\tau_o(X)| \geq p$, the learner will be more certain in their decisions, and hence unlikely to abstain. While this intuition is just heuristic, we provide a fully rigorous proof in Appendix~\ref{appendix:main}. One important thing to note is that, in Algorithm~\ref{algo:abs}, we never actually need to estimate the CATE. This is of particular importance when the CATE may be a highly complicated function and is thus difficult to capture using statistical learning methods.


When the additive bonus is $p = 0$, our algorithm can no longer be expected to obtain fast $O(1/n)$ rates. The following proposition shows that, in this undiscounted setting,  Algorithm~\ref{algo:abs} still obtains $O(1/\sqrt{n})$ regret rates outlined, which are known to be generally unimprovable without margin or realizability assumption~\citep{athey2021policy}. In this setting, we can convert an abstaining policy into a binary one by having it assign a treatment uniformly at random in the set $\{0,1\}$. The following proposition provides this slower regret rate, showing there is no added risk of using our algorithm over just empirical welfare maximization even in setting without the additive bonus. We prove this result in Appendix~\ref{appendix:main}.

\begin{proposition}[ERM benchmark at $p=0$]
\label{prop:p-zero-benchmark}
Let $\pi_p$ be the output of Algorithm \ref{algo:abs} for any fixed $p\in(0,1)$. Consider its value under $p{=}0$, denoted $V^{0}(\cdot)$. Then, with probability at least $1-\delta$,
\[
 \Reg_n(\pi_p) := V(\pist)-V^{0}(\pi_p)
 \lesssim
 \frac{1}{\kappa}\,
 \sqrt{ \frac{ d\,\log\!\frac{n}{d} \;+\; \log\!\frac{1}{\delta} }{ n } }.
\]
\end{proposition}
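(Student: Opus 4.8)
The plan is to leverage the guarantee already obtained in Theorem~\ref{thm:abstention-rate} together with a careful accounting of how the value changes when we re-evaluate the policy $\pi_p$ at bonus $p=0$. Recall that $\pi_p$ is the abstaining policy returned by Algorithm~\ref{algo:abs} run with bonus $p$, and that on the event of Theorem~\ref{thm:abstention-rate} (which holds with probability $\geq 1-\delta$) we have $V(\pist) - V^{(p)}(\pi_p) \lesssim \frac{d\log(n/d) + \log(1/\delta)}{p\,n\,\kappa^2}$. The key observation is that $V^{(0)}(\pi_p)$ and $V^{(p)}(\pi_p)$ differ only through the additive bonus collected on the abstention region: by the definitions of $V^{(p)}$ and $V^{(0)}$,
\[
V^{(p)}(\pi_p) - V^{(0)}(\pi_p) = p\cdot \P\big(\pi_p(X) = *\big) \le p.
\]
Therefore $V(\pist) - V^{(0)}(\pi_p) = \big(V(\pist) - V^{(p)}(\pi_p)\big) + p\,\P(\pi_p(X)=*) \le \frac{C\big(d\log(n/d) + \log(1/\delta)\big)}{p\,n\,\kappa^2} + p$ for an absolute constant $C$.

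Now I would optimize over the free parameter: the bound $\frac{A}{p} + p$ with $A = \frac{C(d\log(n/d)+\log(1/\delta))}{n\kappa^2}$ is minimized at $p = \sqrt{A}$, giving $2\sqrt{A} = \frac{2\sqrt{C}}{\kappa}\sqrt{\frac{d\log(n/d)+\log(1/\delta)}{n}}$, which is exactly the claimed rate. One subtlety: the proposition fixes an arbitrary $p \in (0,1)$ rather than letting us pick the optimal one, so strictly the statement as written should be read as holding for the algorithm run at the optimal choice $p^\star \asymp \sqrt{A} \wedge 1$ (if $\sqrt A \ge 1$ the rate is vacuous since regret is trivially $\le 1$). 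I would either state this explicitly or note that for any $p$ one gets $\frac{A}{p}+p$ and the displayed bound is what results from the best-case tuning; in any event the content is that there is no downside to running Algorithm~\ref{algo:abs} versus plain empirical welfare maximization.

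The only place requiring genuine care — and the main (mild) obstacle — is verifying the decomposition $V^{(p)}(\pi_p) - V^{(0)}(\pi_p) = p\,\P(\pi_p(X)=*)$ from the definitions. This follows because $g_o(*,x) = \E[\tfrac{Y(1)+Y(0)}{2}\mid X=x] + p$, so the bonus $p$ enters $V^{(p)}$ purely as $\E[\indic{\pi_p(X)=*}\,p] = p\,\P(\pi_p(X)=*)$, and the remaining terms (the $v(\pi_p,X)$ contribution on the non-abstention region and the $\tfrac{Y(1)+Y(0)}{2}$ contribution on the abstention region) are identical across the two values. Since $\P(\pi_p(X)=*)\le 1$, the extra term is at most $p$, completing the argument. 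Everything else is the elementary AM–GM-type balancing above; no new concentration inequality is needed beyond what Theorem~\ref{thm:abstention-rate} already supplies.
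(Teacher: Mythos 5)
There is a genuine gap: your argument does not prove the proposition as stated, which asserts the $O\!\left(\kappa^{-1}\sqrt{(d\log\frac nd+\log\frac1\delta)/n}\right)$ bound for \emph{every} fixed $p\in(0,1)$, i.e.\ whatever bonus the algorithm was actually run with. Using \cref{thm:abstention-rate} as a black box and subtracting the bonus via $V^{(p)}(\pi_p)-V^{(0)}(\pi_p)=p\,\P(\pi_p(X)=*)\le p$ (that identity is correct) only yields $\Reg_n(\pi_p)\lesssim \frac{d\log\frac nd+\log\frac1\delta}{p\,n\,\kappa^2}+p$, which matches the claimed rate only under the tuned choice $p\asymp \alpha/\kappa$ and is vacuous for, say, a constant $p=0.5$ or a very small $p=1/n$. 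You noticed this and proposed re-reading the statement as applying to the optimally tuned $p$, but that weakens the result the paper actually proves: the whole point of the proposition is that there is no downside to running \cref{algo:abs} at \emph{any} bonus level. The loss in your route is structural: \cref{thm:abstention-rate} only upper-bounds $V(\pist)-V^{(p)}(\pi_p)$, and when $p$ is large the abstaining policy may legitimately have $V^{(p)}(\pi_p)\gg V(\pist)$ precisely because of the bonus, so subtracting the worst-case $p$ throws away the matching gain. Recovering it requires going inside the analysis, not using the theorem statement alone.

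The paper's proof does exactly that: it substitutes $p=0$ into the intermediate inequality \eqref{eq:final_quadratic} and uses $\cD(\Pihat)\lesssim 1$. Equivalently, and perhaps more transparently: every candidate $\pi'\in\Pitil$ is the abstention projection of some $\pi\in\Pihat$ against $\pihat$, so on the disagreement region the undiscounted reward is the average of the two arms and hence $V^{(0)}(\pi')=\tfrac12\big(V(\pi)+V(\pihat)\big)$ exactly. By \cref{lem:almost_optimal} (with the trivial bound $\cD(\Pihat)\le 1$), both $V(\pist)-V(\pi)$ and $V(\pist)-V(\pihat)$ are $\lesssim \alpha/\kappa$, so the same holds for $V(\pist)-V^{(0)}(\pi')$ uniformly over $\Pitil$ — in particular for the returned $\pi_p$, no matter which element the second EWM step (which is the only place $p$ enters) selects. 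If you want to keep your value-decomposition flavor, the fix is to replace the black-box invocation of \cref{thm:abstention-rate} by this uniform control over $\Pitil$; as written, your proposal establishes a weaker statement than the one claimed.
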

\noindent The proof is provided in Appendix~\cref{appendix:main}.

\begin{algorithm}[t]
\caption{Policy Learning with Abstention}
\label{algo:abs}
\begin{small}
\begin{algorithmic}[1]
\STATE \textbf{Input:} Samples $\{(X_i,D_i,Y_i)\}_{i=1}^n$, policy class $\Pi$, overlap $\kappa$, confidence $\delta\in(0,1)$, bonus $p$, VC dimension $d$
Set $\displaystyle
\alpha \leftarrow \sqrt{\tfrac{d \log\!\frac{n}{d} + \log\!\frac{1}{\delta}}{n}}$.
\STATE \textbf{Split:} Partition the samples into two sets of size $n/2$: $\mathcal{D}_1, \mathcal{D}_2$. 
\STATE \textbf{EWM:} $\displaystyle
\pihat = \mathop{\mathrm{arg\,max}}_{\pi \in \Pi} \; V_n(\pi)$ \quad (computed on $\mathcal{D}_1$).\label{step_abs:first_ewm}
\STATE \textbf{Select Near-optimal policies:}
\vspace{-0.7em}
\[{\small
\widehat{\Pi} \!\leftarrow \!\Big\{ \pi \in \Pi \!:
V_n(\pihat) - V_n(\pi)
\le \frac{c}{\kappa}\Big(\alpha^2 + \alpha \sqrt{\E_n \abs{ f_{\pihat} - f_{\pi} }}\Big) \Big\}.}
\] \label{step_abs:almost_optimal}
\vspace{-1.2em}
\STATE \textbf{Abstention projection:} For each $\pi \in \widehat{\Pi}$, define
\[
{\pi'}(X) =
\begin{cases}
\pi(X), & \text{if } \pi(X) = \pihat(X),\\
*, & \text{otherwise},
\end{cases}
\quad
\widetilde{\Pi} \leftarrow \{\, {\pi'} : \pi \in \widehat{\Pi} \,\}.
\] \label{step_abs:abs_projection}
\STATE \textbf{EWM with abstention:} $\displaystyle
\pitil \in \mathop{\mathrm{arg\,max}}_{{\pi}\in \widetilde{\Pi}} \; V^{(p)}_{n}({\pi})$
\quad (evaluate $V^{(p)}_{n}$ on $\mathcal{D}_2$).\label{step_abs:second_ewm}
\STATE \textbf{Return} $\pitil$.
\end{algorithmic}
\end{small}
\end{algorithm}

\subsection{Unknown Propensities: Doubly Robust Learner}
\label{sec:unknown_prop}

In observational settings, where either the treatment policy is unknown or choice of treatment is endogenous, we can no longer directly run Algorithm~\ref{algo:abs}. One might think to salvage the algorithm by replacing the propensity $p_o(X)$ in the definition of $V_n$ with an ML estimate $\wh{p}(X)$. However, unless $\wh{p}$ converges to $p_o$ in probability at fast, parametric rates\footnote{In particular, one would need $\|p_o - \wh{p}\|_{P_X, 2} = O_\P(n^{-1/2})$}, Algorithm~\ref{algo:abs} will exhibit sub-optimal regret. Instead, one must leverage more sophisticated methods to still enable low-regret learning.

Taking inspiration from the literature on semiparametric estimation~\citep{chernozhukov2018double, bang2005doubly, foster2023orthogonal, athey2021policy}, we introduce a \textit{doubly-robust} analogue of Algorithm~\ref{algo:abs} that furnishes fast regret rates even when the propensity is unknown. Our algorithm uses ML estimates $\wh{g}, \wh{p}$ of the regression $g_o(d, x)$ and propensity $p_o(x)$ to ``de-bias'' observed outcomes $Y$ into more robust pseudo-outcomes. These nuisance estimates are assumed to be independent of the sample. In more detail, we define 

\begin{equation*}
    \wh{\varphi}(x, d, y) := \wh{g}(d, x) + \left(\frac{d \cdot D}{\wh{p}(x)} + \frac{(1-d)\cdot (1 - D)}{1 - \wh{p}(x)}\right)(y - \wh{g}(d, x))
\end{equation*}
This quantity is known as pseudo-outcomes because when $\wh{g} = g_o$ and $\wh{p} = p_o$, one can check that $\E[\wh{\varphi}(X, d, Y) \mid X] = \E[Y(d) \mid X]$. With pseudo-outcomes, we can then define the doubly-robust abstaining welfare/value on the sample, $V_{n, \dr}^{(p)}$, via
{\small
\begin{align*}
&V_{n, \dr}^{(p)} := \E_n \Big[\mathbbm{1}_{\pi(X)\neq *}\Big\{\pi(X)\wh{\varphi}(X, 1, Y) \\
&\;\; + (1 - \pi(X))\wh{\varphi}(X, 0, Y)\Big\} +\mathbbm{1}_{\pi(X)=*}\Big(\tfrac{\hat{\varphi}(X,1,Y)+\hat{\varphi}(X,0,Y)}{2}+p\Big)\Big]
\end{align*}
}
\noindent When $p = 0$, we arrive at the non-abstaining sample welfare $V_{n,\dr} := V^{(p)}_{n,\dr}$. To obtain a doubly-robust analogue of Algorithm~\ref{algo:abs}, use and expanded $\alpha$ \footnote{We may simply increase the constant in Step \ref{step_abs_dr:almost_optimal} of Algorithm~\ref{algo:abs_dr}; when the nuisance product error is $o_p(n^{-1/2})$, this yields the same rate as Theorem~\ref{thm:abstention-rate}.}  and simply replace every occurrence of $V_n$,  $V^{(p)}_n$ by $V_{n, \dr}$ and $V^{(p)}_{n, \dr}$, respectively. See Algorithm \ref{algo:abs_dr} in Appendix \ref{appendix:dr}.

We now state the main theorem of this subsection.

\begin{theorem}
\label{thm:dr-abstention}
For any $p > 0$, suppose Algorithm \ref{algo:abs} is run with $V^{(p)}_{n,\dr}, V_{n, \dr}$ in place of $V^{(p)}_n, V_n$ and expanded $\alpha$. Further, suppose Assumptions~\ref{assump:data} and \ref{assump:vc} hold and that the learner is given nuisance estimates $\wh{g}, \wh{p}$ that are independent of the sample. Then, for any $\delta\in(0,1)$, with probability at least $1-\delta$,
\[
\Reg_n^{(p)}(\pitil) 
\lesssim
\frac{d \log\!\frac{n}{d}+\log\!\frac{1}{\delta}}{p\,n\,\kappa^2}
+  \frac{\err_{\dr}^2}{p \, \kappa^2},
\]
where $\err_{\dr}$ is a known upper bound on the product error given by:
{\small
\[
 \E \l[(\hat{p}(X)\! -\!p_o(X))^2\sum_{d=0}^1(\hat{g}(d,X) \! -\!g_o(d,X))^2 \r]^{1/2}.
\]
}
\end{theorem}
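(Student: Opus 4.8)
The plan is to mimic the proof of Theorem~\ref{thm:abstention-rate} for the known-propensity case, replacing everywhere the IPW functional $V_n$ by its doubly-robust counterpart $V_{n,\dr}$, and then to carefully track the extra error terms introduced by the nuisance estimates $\wh g,\wh p$. The argument splits into three conceptual pieces. First, a \emph{localization/uniform-deviation} step: because the nuisances are fit on an independent sample, conditionally on them the pseudo-outcome map $z\mapsto\wh\varphi(x,d,y)$ is a fixed, bounded (up to a $1/\kappa$ factor, using strict overlap and $Y\in[0,1]$) transformation, so the class $\{z\mapsto \wh\varphi$-based value of $\pi : \pi\in\Pi\}$ still has VC-type complexity controlled by $d$. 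Hence the same Bernstein/Talagrand localization inequality used in Appendix~\ref{appendix:main} gives, with probability $1-\delta$, a bound of the form $|V_{n,\dr}(\pi)-\E V_{n,\dr}(\pi)|\lesssim \tfrac{1}{\kappa}(\alpha^2+\alpha\sqrt{\E_n|f^{\dr}_{\wh\pi}-f^{\dr}_\pi|})$ uniformly over $\Pi$, which is exactly what is needed for the near-optimal set $\wh\Pi$ in Step~\ref{step_abs:almost_optimal} to be well-behaved. Second, a \emph{bias-control} step: unlike the IPW case, $\E[V_{n,\dr}(\pi)]\neq V(\pi)$ in general; the gap is the classical product bias, and one shows $|\E V_{n,\dr}(\pi) - V(\pi)|\lesssim \tfrac{1}{\kappa}\,\errdr$ for every $\pi$ (and similarly for the abstaining value $V^{(p)}_{n,\dr}$, where the averaging in the abstention branch only changes constants), using Cauchy--Schwarz on the cross term $\E[(\tfrac{D}{\wh p}-\tfrac{D}{p_o})(Y-\wh g)\mid X]$ exactly as in the standard orthogonality argument of \citet{athey2021policy}. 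Third, the \emph{abstention/margin} step, which is unchanged from Theorem~\ref{thm:abstention-rate}: the bonus $p$ acts as a synthetic margin, so once the near-optimal set is pinned down, disagreement regions between policies in $\wh\Pi$ and $\wh\pi$ carry small CATE, and abstaining there converts the $\alpha\sqrt{\cdot}$ cross term into an $\alpha^2/p$ term, yielding the $O(1/(pn\kappa^2))$ rate.

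Concretely, I would first condition on the nuisance sample and on the high-probability event $\cE$ on which the uniform localized deviation bound above holds on both $\cD_1$ and $\cD_2$; on $\cE$ one gets $V(\wh\pi)\ge V(\pi^*) - O(\tfrac{1}{\kappa}\alpha^2) - O(\tfrac{1}{\kappa}\errdr)$ and $\pi^*\in\wh\Pi$ (after inflating the constant $c$ to absorb the $\errdr$ slack — this is precisely the ``expanded $\alpha$'' the theorem statement refers to). Then, following the known-propensity proof, decompose $\Reg^{(p)}_n(\pitil) = [V(\pi^*) - V^{(p)}(\pitil)]$ by inserting $\pm V^{(p)}_{n,\dr}(\pitil)$ and $\pm V^{(p)}_{n,\dr}(\pi^{*\prime})$ where $\pi^{*\prime}$ is the abstention-projection of $\pi^*$; use optimality of $\pitil$ in Step~\ref{step_abs:second_ewm} on $\cD_2$, the deviation bound to pass between empirical and population DR values (cost $O(\tfrac1\kappa(\alpha^2+\alpha\sqrt{\E_n|\cdot|}))$), and the bias bound to pass from population DR value to true value $V^{(p)}$ (cost $O(\tfrac1\kappa\errdr)$ per switch). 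The key structural observation, borrowed from the known-propensity analysis, is that for any $\pi\in\wh\Pi$ the quantity $\E_n|f^{\dr}_{\wh\pi}-f^{\dr}_\pi|$ (empirical mass of the disagreement region weighted by bounded pseudo-outcomes) is controlled by the abstaining value difference itself plus the localization radius, which is where the self-bounding that produces the fast rate kicks in; the bonus $p$ in $V^{(p)}_{n,\dr}$ ensures that abstaining on the disagreement region is strictly rewarded, so $\E_n|f^{\dr}_{\wh\pi}-f^{\dr}_{\pitil}|\lesssim \tfrac{1}{p}(\text{regret-like terms})$, and substituting this back and solving the resulting quadratic in $\sqrt{\text{regret}}$ gives the claimed bound with the additive $\errdr^2/(p\kappa^2)$ term (the square appears because $\errdr$ enters linearly at the level of values and is then divided through by $p$ after the self-bounding, just as $\alpha^2$ becomes $\alpha^2/p$).

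The main obstacle I anticipate is making the second (bias-control) step interact cleanly with the self-bounding / localization step rather than naively. If one only uses the crude bound $|\E V_{n,\dr}(\pi) - V(\pi)|\le \tfrac1\kappa\errdr$ uniformly, one has to be careful that the near-optimal set $\wh\Pi$ is defined purely in terms of \emph{empirical} DR values, so the inclusion $\pi^*\in\wh\Pi$ and the near-optimality of every $\pi\in\wh\Pi$ both degrade by $O(\tfrac1\kappa\errdr)$, and these $\errdr$ slacks must be threaded through the same quadratic-in-$\sqrt{\text{regret}}$ manipulation that produces the fast rate — otherwise one ends up with an unwanted $\errdr/(p\kappa^2)$ (linear, not squared) term. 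The resolution is to treat the $\errdr$ contribution exactly parallel to the $\alpha^2$ contribution throughout: it enters additively at the value level, gets inherited by the disagreement-region mass bound, and is divided by $p$ once via the synthetic-margin argument, landing as $\errdr^2/(p\kappa^2)$ only if we are willing to also pay a benign $\errdr^2/(p\kappa^2)$-type term from completing the square (alternatively, absorbing a lower-order $\errdr$ into the leading term under the implicit assumption that $\errdr = O(\alpha)$, i.e.\ the nuisance product error is $o_p(n^{-1/2})$, which is the regime flagged in the paper's footnote). A secondary, more routine obstacle is verifying the boundedness and VC-preservation claims for the pseudo-outcome class so that the off-the-shelf localization inequality applies with the stated $\kappa$-dependence — this is standard but needs the strict-overlap assumption on $\wh p$ as well as on $p_o$, which should be imposed (or noted as WLOG by clipping $\wh p$ to $[\kappa,1-\kappa]$).
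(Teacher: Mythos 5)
Your overall architecture matches the paper's: condition on the independently-fitted nuisances so the pseudo-outcome class inherits VC-type complexity, reuse the localized concentration lemma, run the same synthetic-margin/quadratic-in-$\sqrt{\cD(\Pihat)}$ argument, and quantify the DR bias through the product error $\errdr$. However, there is a genuine gap in your bias-control step, and it is exactly the point you flag as the "main obstacle" without actually resolving it. You propose the uniform bound $|\E V_{n,\dr}(\pi)-V(\pi)|\lesssim \kappa^{-1}\errdr$ and then claim the $\errdr$ slack can be "treated parallel to $\alpha^2$" and emerge as $\errdr^{2}/(p\kappa^{2})$ after completing the square. That mechanism does not work: in the quadratic step, only coefficients multiplying $\sqrt{\cD(\Pihat)}$ get squared and divided by $p$; a term that enters additively at the value level (which is what a uniform bias bound gives) survives linearly, so you would end up with an $\errdr/\kappa$ term in the regret rather than $\errdr^{2}/(p\kappa^{2})$. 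Your fallback — assuming $\errdr = O(\alpha)$, i.e.\ nuisance product error $o_p(n^{-1/2})$ — proves a strictly weaker statement than the theorem, which holds for arbitrary $\errdr$.

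The missing idea is the paper's disagreement-scaled bias bounds (\cref{lem:did-V,lem:did-Vp,lem:did-Vp-star}): because the pointwise DR discrepancy $\big(1-\tfrac{p_o}{\hat p}\big)(g_o-\hat g)$ is the \emph{same} for two policies wherever they agree, the bias of a \emph{difference} of values cancels off the disagreement region, giving
\[
\abs{\big(V_{\dr}(\pi_1)-V_{\dr}(\pi_2)\big)-\big(V(\pi_1)-V(\pi_2)\big)}
\;\lesssim\;\kappa^{-1}\,\errdr\,\norm{\pi_1-\pi_2}{P_X,2}
\;\le\;\kappa^{-1}\,\errdr\,\sqrt{\cD(\Pihat)},
\]
and analogously for the abstaining values against the projected comparator. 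This is what lets $\errdr$ ride along with $\alpha$ as the coefficient of $\sqrt{\cD(\Pihat)}$ (the paper packages this as $\alpha_\dr=\alpha+\errdr$, which is also the concrete meaning of the "expanded $\alpha$" in the selection radius, stated there in terms of the policy disagreement $\E_n|\pihat-\pi|$ rather than score differences), so that the single quadratic optimization yields $\alpha_\dr^{2}/(p\kappa^{2})$ and hence the two claimed terms. Without this disagreement-scaled structure, your proof as written delivers a regret bound with a linear $\errdr$ contribution, which is both weaker than the theorem and would require essentially parametric nuisance rates to be negligible, defeating the purpose of the doubly-robust construction.
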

\noindent See Appendix~\cref{appendix:dr} for the full proof and supporting lemmas.

The above theorem yields a bound on regret that has two terms --- one quickly decaying term followed by another, random term whose decay is non-obvious. We now discuss conditions under which this second term is negligible. Note that one can obtain (via Cauchy-Schwarz) the following upper bound on the product error in terms of individual error rates in nuisance functions,
\[
 \big\|\hat p-p\big\|_{P_X, 4} \sum_{d \in \{0,1 \}}\big\|\ \hat g(d,\cdot)-g_o(d,\cdot)\big\|_{P_X,4}
\]
Thus, for the second term in Theorem~\ref{thm:dr-abstention} to be negligible, we need $\err_{\dr} \leq c_{\delta}/\sqrt{n}$ with high probability, which will occur if {\small $\max\{\|\hat p-p\|_{P_X,4},\|\hat g(0,\cdot)-g_o(0,\cdot)\|_{P_X,4},\|\hat g(1,\cdot)-g_o(1,\cdot)\|_{P_X,4}\}$} $\lesssim n^{-1/4}$ with probability at least $1 - \delta$. In particular, this can be accomplished under a variety of learnability assumptions (finite VC-dimension, learnability by trees, etc.) 
\newline
\begin{remark}
In the statement of Theorem~\ref{thm:dr-abstention}, we assume that the nuisance estimates $\wh{g}$ and $\wh{p}$ are independent of the sample. In practice, this could be accomplished by actually performing a three-fold split of the data, reserving the third fold for nuisance estimation and using the first two folds as outlined in Algorithm~\ref{algo:abs}. Another more sophisticated approach would be to use $\calD_2$ to produce nuisance estimates $\wh{g}_1, \wh{p}_1$ to use in defining $V_{n, \dr}$ (which is defined in terms of samples in $\calD_1$), and analogously use $\calD_1$ to build estimates $\wh{g}_2, \wh{p}_2$ used in definition $V^{(p)}_{n, \dr}$.
\end{remark}

\section{Applications }\label{sec:applications}

In the previous section, we established a formal framework for performing policy learning with abstention. We now explore connections between our results in Section~\ref{sec:abstention} (in particular, Algorithm~\ref{algo:abs}) and other aspects of policy learning. This includes developing (non-abstaining) algorithms for policy learning without standard margin assumptions, safely improving policies relative some baseline treatment strategy, and relating abstaining value/welfare $V^{(p)}$ to policy learning in the presence of distribution shift.

\subsection{Fast Learning Rates Without Standard Margin Assumptions}
 
In policy learning, fast regret rates are sometimes achievable when the CATE is deterministically bounded away from zero, as outlined in the margin condition of Equation~\eqref{eq:margin}. In particular, prior works have shown that $O(n^{-1})$ rates are possible in the \emph{realizable} setting, where the Bayes-optimal policy $\pi^B$ lies in the class $\Pi$ \citep{kitagawa2018should, Luedtke2020}. However, it is generally impossible to know if a policy class $\Pi$ (say, all depth 3 decision trees) contains the optimal treatment policy in advance. Thus, one should aim to obtain fast regret rates under in \textit{agnostic} settings, where $\Pi$ may not contain the optimal treatment strategy.

In the classification literature, recent works achieve fast learning rates in agnostic settings~\citep{ben2014sample, bousquet2021fast}. These works eschew the realizability assumption, instead imposing a \textit{finite combinatorial diameter} on the policy class. In words, a class of policies $\Pi$ which bounds the maximal number of points of disagreement between any two policies in the class. More formally, the combinatorial diameter is defined by

\begin{small}
   \begin{equation*}
D \;\coloneqq\; \max_{\pi_1,\pi_2\in\Pi} \sum_{x\in\cX} \indic{\pi_1(x)\neq \pi_2(x)}.
\end{equation*} 
\end{small}

Note that $D$ may be infinite even for classes with finite VC dimension. 

We prove an analogous result for policy learning (Algorithm \ref{thm:finite-D}) under the assumption of a finite combinatorial diameter. Further, we consider settings where the policy class does not have a finite combinatorial diameter but instead we have access to a CATE estimation oracle (Algorithm \ref{thm:reg-oracle}). The proofs are deferred to Appendix \ref{appendix:applications}. We start by stating our main theorem for agnostic policy learning under an assumption of finite combinatorial dimension.
{\textcolor{black}{
\begin{algorithm}[t]
\caption{Policy Learning under Margin Assumption}
\label{algo:fast-rate-wrapper}
\begin{algorithmic}[1]
\STATE \textbf{Input:} Samples $\{(X_i,D_i,Y_i)\}_{i=1}^n$, policy class $\Pi$, overlap $\kappa$, margin $h>0$, confidence $\delta$, \textsf{Mode}$\in\{\textsf{FiniteD},\textsf{CATE-Oracle}\}$.
\STATE \textbf{Split:} Partition the sample into three equal parts: $\mathcal{D}_1,\mathcal{D}_2,\mathcal{D}_3$ (sizes $n/3$).
\STATE \textbf{Abstention stage:} On $\mathcal{D}_1\cup\mathcal{D}_2$, run \cref{algo:abs} with bonus $p=h/2$ to obtain a policy $\pitil$ that may abstain. Let $\cX_{rem}\subseteq\cX$ be the covariate set where $\pitil$ abstains.
\IF{\textsf{Mode}=\textsf{FiniteD}}
  \STATE \textbf{Refine on $\cX_{rem}$ (finite $D$):} On $\mathcal{D}_3$, run EWM over all the policies on  $\cX_{rem}$.
  \STATE $\phi \leftarrow \argmax \E_n[ v(\pi, X) | X \in \cX_{rem}]$.
\ELSIF{\textsf{Mode}=\textsf{CATE-Oracle}}
  \STATE \textbf{Refine on $\cX_{rem}$ (CATE oracle):} On $\mathcal{D}_3$, estimate $\tauh$ on $\cX_{rem}$ and set $\phi(x)\leftarrow\indic{\tauh(x)>0}$ for $x\in\cX_{rem}$.
\ENDIF
\STATE Define: $
\pi_{\text{final}}(x)=
\begin{cases}
\pitil(x), & x\notin \cX_{rem},\\
\phi(x), & x\in \cX_{rem}
\end{cases}$
\STATE \textbf{Output:} $\pi_{\text{final}}$.
\end{algorithmic}
\end{algorithm}
}
 }
\begin{theorem}
\label{thm:finite-D}
Assume the margin condition $\P(\abs{\tau_o(X)}\geq h)=1$ and that Assumptions~\ref{assump:data} and \ref{assump:vc} hold. Further, assume $\Pi$ has finite combinatorial diameter $D$. Then the output $\pi_{\text{final}}$ of Algorithm~\ref{algo:fast-rate-wrapper} satisfies
\[
\Reg_n(\pi_{\text{final}}) := V(\pi^*) - V(\pi_{\text{final}})
\lesssim
\frac{ D \;+\; d\log\!\frac{n}{d} \;+\; \log\!\frac{1}{\delta} }{ \kappa^2 \, h\, n }
\]
with probability at least $1 - \delta$.
\end{theorem}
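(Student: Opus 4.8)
My plan is to split the regret of $\pifinal$ according to whether $X$ falls in the abstention region $\cXrem=\{x\in\cX:\pitil(x)=\ast\}$ returned by the first stage, using crucially that $\mathcal{D}_3$ is independent of the data defining $\pitil$ and $\cXrem$. First I would record two structural facts. (a) By the construction in Algorithm~\ref{algo:abs}, $\pitil$ is the abstention-projection of some $\pi^\circ\in\Pihat\subseteq\Pi$, so $\cXrem=\{x:\pi^\circ(x)\neq\pihat(x)\}$ is a disagreement set of two members of $\Pi$ and hence $|\cXrem|\leq D$. (b) Since $\pifinal$ equals the binary policy $\pitil$ off $\cXrem$ and equals $\phi$ on $\cXrem$, and $g_o(\ast,\cdot)$ is exactly the conditional value of abstaining, one gets the identity
\[
\Reg_n(\pifinal)=\Reg_n^{(p)}(\pitil)\;+\;\E\!\big[\indic{X\in\cXrem}\big(g_o(\ast,X)-v(\phi,X)\big)\big],\qquad p=h/2.
\]

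Next I would bound the two terms. The first is immediate from Theorem~\ref{thm:abstention-rate} applied to the $2n/3$ samples of $\mathcal{D}_1\cup\mathcal{D}_2$ with bonus $p=h/2$: with probability at least $1-\delta/3$, $\Reg_n^{(p)}(\pitil)\lesssim \frac{d\log(n/d)+\log(1/\delta)}{\kappa^2 h n}$. For the second, I would write the integrand as $g_o(\ast,x)-v(\phi,x)=\big(p-\tfrac{1}{2}|\tau_o(x)|\big)+\big(v(\pi^B,x)-v(\phi,x)\big)$; the margin hypothesis $|\tau_o(x)|\geq h$ with $p=h/2$ kills the first bracket, leaving the bound $\P(X\in\cXrem)\cdot\E[v(\pi^B,X)-v(\phi,X)\mid X\in\cXrem]$. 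This conditional expectation is the policy-learning regret of $\phi$ against the best labeling of $\cXrem$ under the conditional law of $X$ given $X\in\cXrem$; because $\phi$ is chosen by EWM over \emph{all} $2^{|\cXrem|}\leq 2^D$ labelings of $\cXrem$, the optimum $\pi^B$ restricted to $\cXrem$ is in the class, so the problem is realizable.

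Conditioning on $\mathcal{D}_1\cup\mathcal{D}_2$ (hence on $\cXrem$), I would then run a self-bounding Bernstein argument on the independent sample $\mathcal{D}_3$, exactly mirroring the proof of Theorem~\ref{thm:abstention-rate} but with the synthetic margin $p$ replaced by the true margin $h$ and the variance of the IPW excess welfare controlled via $\P(\phi(X)\neq\pi^B(X)\mid X\in\cXrem)\leq \tfrac{1}{h}\E[v(\pi^B,X)-v(\phi,X)\mid X\in\cXrem]$. With a union bound over the at most $2^D$ labelings this gives, with probability $1-\delta/3$, conditional regret $\lesssim \frac{D+\log(1/\delta)}{\kappa^2 h\,n_3}$, where $n_3=\#\{i\in\mathcal{D}_3: X_i\in\cXrem\}$. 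To finish, if $\P(X\in\cXrem)\,n\leq D+\log(1/\delta)$ then the second term of the display is trivially $\lesssim \frac{D+\log(1/\delta)}{\kappa^2 h n}$ since the integrand is at most $1$; otherwise a multiplicative Chernoff bound gives $n_3\gtrsim \P(X\in\cXrem)\,n$ with probability $1-\delta/3$, so the factor $\P(X\in\cXrem)$ cancels $1/n_3$ and again the second term is $\lesssim \frac{D+\log(1/\delta)}{\kappa^2 h n}$. Adding the two bounds and taking a union bound over the failure events gives the claim.

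The hard part will be the refinement analysis on $\cXrem$: establishing $|\cXrem|\leq D$ so the refinement class has log-cardinality $O(D)$, and re-deriving the fast (self-bounding Bernstein / localized) rate for EWM on $\cXrem$ under the true margin while tracking the random subsample size $n_3$ so that the $\P(X\in\cXrem)$ factors cancel and no spurious $\log n$ attaches to the $D$ term. The regret identity, the appeal to Theorem~\ref{thm:abstention-rate}, and the reduction of the margin part to a nonpositive quantity are routine.
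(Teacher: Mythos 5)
Your proposal is correct and follows essentially the same route as the paper: the same decomposition of $\Reg_n(\pifinal)$ into the abstaining regret of $\pitil$ (bounded via Theorem~\ref{thm:abstention-rate} with $p=h/2$, using the margin to compare the abstention reward with the Bayes value) plus the refinement regret on $\cX_{rem}$, the same observation that $|\cX_{rem}|\le D$ makes the refinement problem a realizable EWM over at most $2^D$ labelings, and the same Bernstein/union-bound fast-rate argument on $\mathcal{D}_3$ combined with a Chernoff bound on the abstention-region subsample size and a trivial bound when $\P(X\in\cX_{rem})$ is small. The paper's only cosmetic difference is that it cites Theorem~2.3 of \citet{kitagawa2018should} for the conditional fast rate, while explicitly noting that the per-policy Bernstein plus union bound you propose would do equally well.
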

\noindent Proof details are provided in Appendix~\cref{appendix:applications}.

Next, suppose we have an oracle for estimator $\wh{\tau}_o$ that satisfies for any $(X,D,Y) \sim \prob$ the following convergence rate with probability greater than $1-\delta$,
\[
\norm{\tauh - \tau}{\prob, 2} \leq c_\delta n^{-\beta}.
\]

\begin{theorem}
\label{thm:reg-oracle}
Assume the margin condition $\P(\abs{\tau_o(X)}\geq h)=1$ and that Assumptions~\ref{assump:data} and \ref{assump:vc} hold. Further, suppose $\tauh$ satisfies the regression-oracle condition with exponent $\beta>0$ and let $\pi^B$ bet the Bayes optimal policy. Then the output $\pi_{\text{final}}$ of Algorithm~\ref{algo:fast-rate-wrapper} satisfies 
\[
V(\pi^*) - V(\pi_{\text{final}})
\lesssim & \; \mathrm{Reg}_1 + \mathrm{Reg}_2 + \mathrm{Reg}_3\]
where,
\begin{small}
\[
\mathrm{Reg}_1  &\lesssim n^{-1} \; \frac{ d\log\!\frac{n}{d} \;+\; \log\!\frac{1}{\delta} }{ \kappa^2 \, h }, 
\\
\mathrm{Reg}_2 &\lesssim  n^{-2 \beta} \frac{c_\delta \l( V(\pi^B(X)) - V(\pi^*(X)\r)^{1-2 \beta}}{h^{2-2\beta}}, \\ 
\mathrm{Reg}_3 & \lesssim n^{- \frac{1 }{2}- \beta} \;  \frac{\l(d\log\!\frac{n}{d} + \log\!\frac{1}{\delta} \r)^{\frac{1}{2} - \beta}}{\kappa^{1- 2\beta} h^{2-2\beta} }.
\]
\end{small}
with probability at least $1 - \delta$.  
\end{theorem}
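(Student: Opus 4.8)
The plan is to split the regret of $\pifinal$ into the abstaining regret of the first-stage policy $\pitil$ (handled by \Cref{thm:abstention-rate} run with bonus $p=h/2$) and a ``refinement'' error supported on the abstention region $\cXrem$, and to control the latter by bounding the mass of $\cXrem$ and invoking the CATE oracle conditionally on $\cXrem$. Concretely, on $\mathcal{D}_1\cup\mathcal{D}_2$ (of size $\Theta(n)$) the algorithm runs \Cref{algo:abs} with $p=h/2$ to produce $\pitil$, and $\pifinal$ equals $\pitil$ off $\cXrem=\{x:\pitil(x)=*\}$ (where $\pitil$ does not abstain) and equals $\phi$ on $\cXrem$, so
\begin{align*}
V(\pist)-V(\pifinal)\;=\;\big(V(\pist)-V^{(p)}(\pitil)\big)\;+\;\E\!\big[\indic{X\in\cXrem}\big(g_o(*,X)-v(\phi,X)\big)\big];
\end{align*}
the first term equals $\Reg_n^{(p)}(\pitil)$, which \Cref{thm:abstention-rate} bounds by $\mathrm{Reg}_1\lesssim (d\log\tfrac nd+\log\tfrac1\delta)/(\kappa^2 h n)$ on a high-probability event, so writing $\mathrm{(II)}$ for the second term it remains to show $\mathrm{(II)}\lesssim\mathrm{Reg}_2+\mathrm{Reg}_3$.

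\textbf{Reducing (II) to a restricted sign-classification error.} Using $g_o(*,x)=\tfrac12(g_o(1,x)+g_o(0,x))+p$ and $v(\pi^B,x)-\tfrac12(g_o(1,x)+g_o(0,x))=\tfrac12|\tau_o(x)|$, a case split on $\{\phi(x)=\pi^B(x)\}$ gives for $x\in\cXrem$
\begin{align*}
g_o(*,x)-v(\phi,x)=
\begin{cases}
p-\tfrac12|\tau_o(x)|\ \le\ 0, & \phi(x)=\pi^B(x),\\[2pt]
p+\tfrac12|\tau_o(x)|\ \le\ |\tau_o(x)|, & \phi(x)\ne\pi^B(x),
\end{cases}
\end{align*}
both inequalities using $p=h/2$ and the margin $|\tau_o(x)|\ge h$. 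Hence $\mathrm{(II)}\le\E[\indic{X\in\cXrem}|\tau_o(X)|\indic{\phi(X)\ne\pi^B(X)}]$, which is the excess $0/1$-type risk on $\cXrem$ of the plug-in rule $\phi=\indic{\tauh>0}$ relative to $\pi^B=\indic{\tau_o\ge0}$. A sign flip at $x$ forces $|\tauh(x)-\tau_o(x)|\ge|\tau_o(x)|\ge h$, so pointwise $|\tau_o(x)|\indic{\phi(x)\ne\pi^B(x)}\le|\tauh(x)-\tau_o(x)|^2/h$, and integrating against the law of $X$ conditioned on $\{X\in\cXrem\}$ yields $\mathrm{(II)}\le h^{-1}\,\P(X\in\cXrem)\,\norm{\tauh-\tau_o}{\P(\cdot\mid X\in\cXrem),2}^{2}$.

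\textbf{Small abstention mass and the oracle rate.} Since $\pitil$ is the $V^{(p)}_n$-maximizer over the abstention-projected class $\Pitil$, we have $\cXrem=\{x:\pi(x)\ne\pihat(x)\}$ for some $\pi\in\Pihat$ (the one whose abstention projection is $\pitil$). The test defining $\Pihat$ (Step~\ref{step_abs:almost_optimal} of \Cref{algo:abs}, whose disagreement term is $\le 1$) together with standard VC uniform-deviation bounds makes $\pihat$ and $\pi$ each $O\!\big(\kappa^{-1}\sqrt{(d\log\tfrac nd+\log\tfrac1\delta)/n}\big)$-suboptimal in $\Pi$, and the margin converts suboptimality into disagreement from $\pist$: for any binary $\pi'$, $\P(\pi'\ne\pist)\le h^{-1}\big((V(\pist)-V(\pi'))+2(V(\pi^B)-V(\pist))\big)$, since the ``approximation set'' $\{x:\pist(x)\ne\pi^B(x)\}$ has mass $\le (V(\pi^B)-V(\pist))/h$ under the margin. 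Combining these,
\begin{align*}
\P(X\in\cXrem)\ \lesssim\ \frac{V(\pi^B)-V(\pist)}{h}\ +\ \frac1{\kappa h}\sqrt{\frac{d\log\tfrac nd+\log\tfrac1\delta}{n}}.
\end{align*}
For the second factor, $\phi$ is built on the independent fold $\mathcal{D}_3$ from the sub-sample with covariates in $\cXrem$, which conditionally on $\mathcal{D}_1,\mathcal{D}_2$ has size $\asymp\P(X\in\cXrem)\,n$, so the regression-oracle condition gives $\norm{\tauh-\tau_o}{\P(\cdot\mid X\in\cXrem),2}^2\lesssim c_\delta^2(\P(X\in\cXrem)\,n)^{-2\beta}$. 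Plugging both estimates into the bound from the previous paragraph,
\begin{align*}
\mathrm{(II)}\ \lesssim\ \frac{c_\delta^2\,n^{-2\beta}}{h}\ \P(X\in\cXrem)^{\,1-2\beta},
\end{align*}
and inserting the mass bound and splitting via $(a+b)^{1-2\beta}\le a^{1-2\beta}+b^{1-2\beta}$ (for $0<\beta\le\tfrac12$) gives, up to the oracle constant, exactly $\mathrm{Reg}_2$ from the $V(\pi^B)-V(\pist)$ piece and $\mathrm{Reg}_3$ from the stage-$1$ statistical piece. A union bound over the $O(1)$ good events then closes the proof.

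\textbf{Main obstacle.} The crux is the mass bound on $\cXrem$: it requires the internals of the proof of \Cref{thm:abstention-rate} — that $\pist\in\Pihat$ with high probability and that every element of $\Pihat$ is $O(1/\sqrt n)$-suboptimal — followed by the margin argument, whose delicate point is isolating the approximation set $\{x:\pist(x)\ne\pi^B(x)\}$ and handling it separately. Two lesser subtleties are that the oracle is invoked conditionally on the \emph{random} set $\cXrem$ (legitimate because $\mathcal{D}_3$ is independent of $\mathcal{D}_1,\mathcal{D}_2$) and that in the degenerate regime $\P(X\in\cXrem)\,n=o(1)$ the set $\cXrem$ is essentially empty and $\mathrm{(II)}$ is negligible with no oracle input. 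The decomposition, the pointwise identity for $g_o(*,x)-v(\phi,x)$, and the Hölder-type bookkeeping are all routine.
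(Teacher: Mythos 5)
Your proposal is correct and follows essentially the same route as the paper's proof: the same decomposition into the abstaining regret of $\pitil$ (bounded via Theorem~\ref{thm:abstention-rate} with $p=h/2$) plus a refinement error on $\cX_{rem}$, the same margin-based reduction of that error to $\E[\indic{X\in\cX_{rem}}|\tau_o(X)|\indic{\phi\neq\pi^B}]\le h^{-1}\E[(\tauh-\tau_o)^2\mid X\in\cX_{rem}]\,\P(X\in\cX_{rem})$, the same Chernoff/effective-sample-size invocation of the oracle on $\cX_{rem}$, and the same mass bound on $\cX_{rem}$ via near-optimality of the policies in $\Pihat$ combined with the margin (your detour through $\pist$ versus the paper's direct comparison to $\pi^B$ is only a cosmetic rearrangement). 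Your handling of the degenerate small-$\P(X\in\cX_{rem})$ regime and the split $(a+b)^{1-2\beta}\le a^{1-2\beta}+b^{1-2\beta}$ also match the paper's bookkeeping.
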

\noindent Proof details are provided in Appendix~\cref{appendix:applications}.

Theorem \ref{thm:reg-oracle} unifies the regression-rate guarantee \citep{Luedtke2020} and fast rate in the realizable case \citep{kitagawa2018should}. In particular, when $\beta = 1/2$, one recovers $n^{-1}$ regret. More generally, if $\beta < 1/2$ but the policy class $\Pi$ is expressive enough that $V(\pi^B(X)) - V(\pi^*(X))$ is small (e.g., $\leq c/n^{-\alpha}$ for some $\alpha>0$), the resulting bound improves upon a direct plug-in classifier ($\pi(X) = \indic{\hat \tau(X) > 0}$). Thus, either a strong CATE oracle or an ``almost'' realizable policy class is sufficient to surpass the $1/\sqrt{n}$ regret.

\begin{figure*}[t]
  \centering
  \includegraphics[width=1\linewidth]{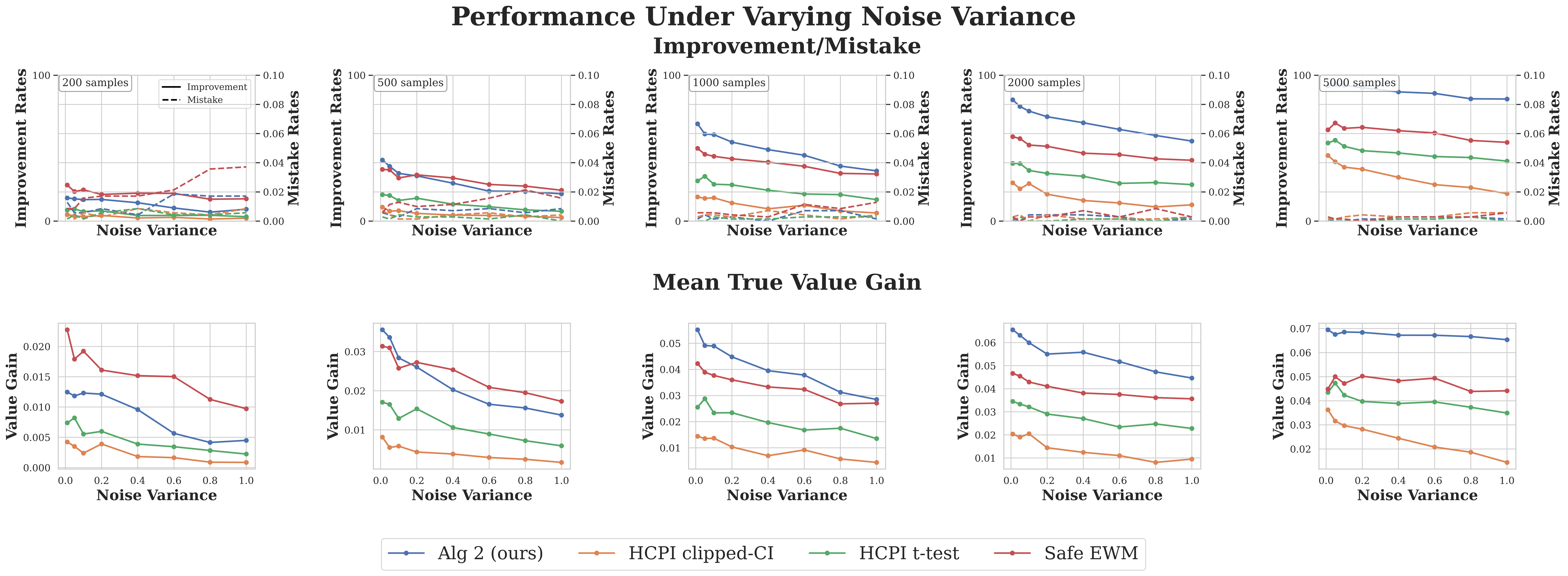}
  \caption{Safe Policy Improvement — performance across varying noise variance. We compare Algorithm~2 (ours), HCPI (two variants), and Safe EWM.}
  \label{fig:varying_noise}
\end{figure*}

\subsection{Safe Policy Improvement}
\label{sec:safe_policy}
Safe policy improvement (SPI) addresses the deployment problem under uncertainty: one should adopt a new policy only if it can be certified to outperform a baseline policy $\omega$ with high probability.  Following recent work on safe policy improvement \cite{thomas2015high, cho2025cspi}, we use sample splitting (into $\calD_{\text{train}}$ and $\calD_{\text{test}}$) to separate the tasks of \emph{candidate policy selection} and \emph{safety testing}. This splitting enables one-sided lower confidence bounds (LCBs) on the improvement over the baseline: $V(\hat\pi)-V(\omega)$. We treat Algorithm \ref{algo:abs} as a black box to propose a \(\{0,1,*\}\)-valued policy $\pitil$ learned on $\mathcal{D}_{train}$. To obtain a deployable binary policy, we impute abstentions with the baseline policy to obtain a candidate policy Equation \eqref{eq:replace}. 


On $D_{\text{test}}$, we estimate $V(\hat\pi)-V(\omega)$ via IPW/DR and compute a one-sided LCB at level $1-\delta$; we generate a sequence of candidate policies by running the abstention learner over a grid of bonuses, $\mathcal{P}=\{p_1<\cdots<p_k\}$, which trades off overlap (more abstention) against improvement (more overrides), and apply a Bonferroni adjustment with $z_{1-\delta/k}=\Phi^{-1}(1-\delta/k)$. Algorithm~\ref{algo:safe_policy_abstention} then: (i) selects the grid, (ii) learns abstaining policies on $D_{\text{train}}$, (iii) replaces abstentions with $\omega$, and (iv) tests each on $D_{\text{test}}$, returning the first with $\mathrm{LCB}>0$.


\begin{small}
\begin{algorithm}[tb]
\caption{Safe Policy Learning with Abstention}\label{algo:safe_policy_abstention}
\begin{algorithmic}[1]
  \STATE Select a sequence of abstention bonuses ${\cal{P}} = \{ p_1, p_2, \ldots, p_k \}$, with $p_1 < \ldots < p_k $.
\STATE Split the dataset into $D_{train}$ and $D_{test}$ of size $n_{train}$ and $n_{test}$ respectively.
\FOR{ $p \in \cal{P}$}
    \STATE Run  \cref{algo:abs}  for $V^{(p)}(\pi)$ on $D_{train}$ to obtain an abstaining  policy $\pitil$
    \STATE Replace abstention with the baseline policy 
\begin{equation}\label{eq:replace}
   \hat\pi(x) \;=\;
\begin{cases}
\omega(x), & \tilde\pi(x)=*,\\
\tilde\pi(x), & \text{otherwise}.
\end{cases} 
\end{equation}
\vspace{-1.5em}
    \STATE \texttt{Hypothesis test: }
    \STATE Estimate $V_n(\pihat) - V_n(\omega)$ on $D_{test}$. 
    \STATE Compute the one-sided $(1-\delta)$ lower confidence bound:
  $\mathrm{LCB}=V_n(\pihat) - V_n(\omega) - z_{1-\frac \delta k}\,\frac{\hat\sigma_k}{\sqrt{n_{\text{test}}}}.$
    \STATE \textbf{if} $\mathrm{LCB}>0$ \textbf{then} \textbf{return} $\hat\pi$ \textbf{else} continue.

\ENDFOR
\end{algorithmic}
\end{algorithm}
\end{small}
\begin{figure*}[t]
  \centering
  \includegraphics[width=1\linewidth]{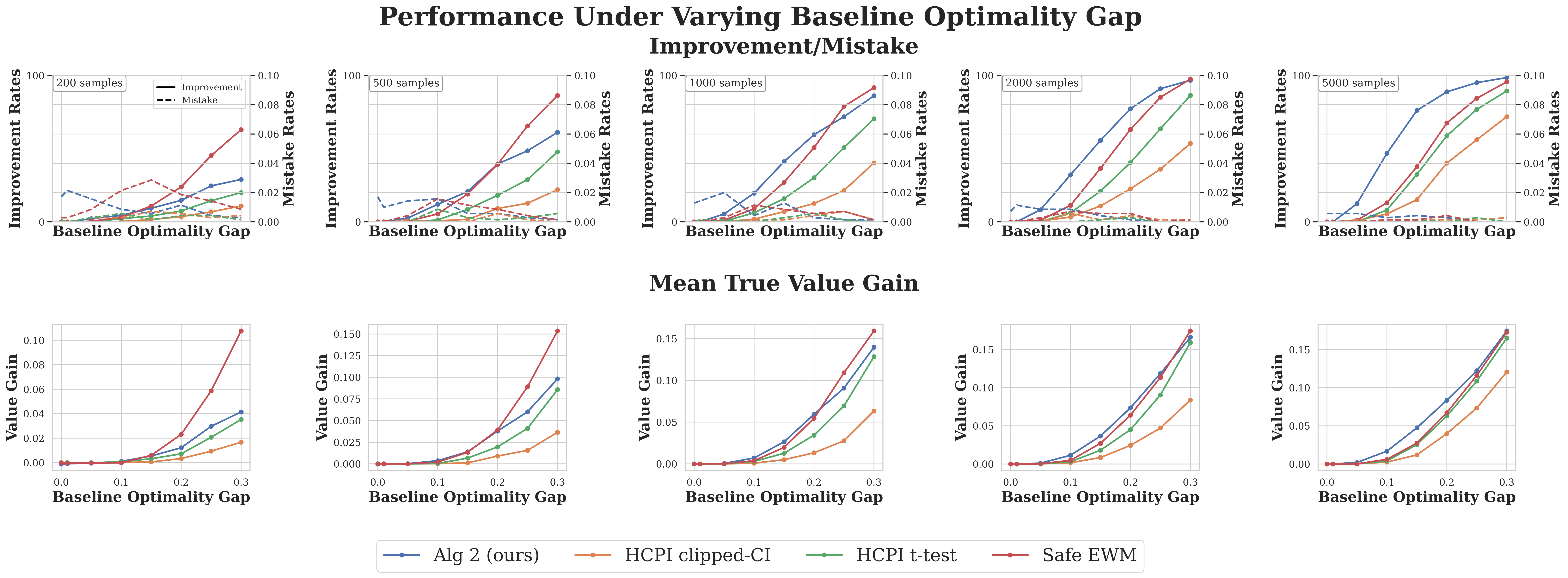}
  \caption{Safe Policy Improvement — performance across varying gap between baseline and optimal policy value. We compare Algorithm \ref{algo:safe_policy_abstention} (ours), HCPI (two variants), and Safe EWM.}
  \label{fig:varying_base_opti_gap}
\end{figure*}

\paragraph{Safe Policy Improvement Experiments}

We compare Algorithm \ref{algo:safe_policy_abstention} with an Empirical Welfare Maximizer (EWM) and prior work on synthetic data. In particular, we consider the two variants of High-Confidence Policy Improvement (HCPI) described in \cite{thomas2015high}: one based on finite-sample confidence intervals and one based on a $t$-test statistic. For EWM, we choose the empirical welfare maximizer as the candidate policy and test it against the baseline in exactly the same way as in Algorithm \ref{algo:safe_policy_abstention} (with $k=1$).

The design comprises two variants: (1) we use a baseline policy at a fixed a and vary the noise variance $(0.01\text{--}1.0)$ (see \cref{fig:varying_noise}); and (2) we vary the baseline–optimal value gap at a fixed noise level (see \cref{fig:varying_base_opti_gap}). Each parameter setting is replicated 100 times; policy value is computed by IPW (with known propensities). For Algo \ref{algo:safe_policy_abstention}, we use a grid of abstention bonuses \(\mathcal{P} = \{0, 0.01, 0.05, 0.10, 0.20\}\). We report value gain, mistake rate (probability of returning a policy worse than the baseline), and improvement rate (probability of returning a policy strictly better than the baseline). Full configuration and implementation details are deferred to Appendix \ref{appendix:experiments}.

We evaluate all methods at significance level \(\delta = 0.05\). Across more than \(2{,}000\)  repetitions, our Safe Policy Learner controls the Type-I error at or below \(0.05\). In the very small–sample regime ($n\!\le\!500$), EWM performs better than the alternatives, but once $n\!\ge\!1000$ our method dominates: \emph{(i)} it achieves the highest improvement rate and the largest mean value gain among accepted policies across baseline–optimal gaps (\cref{fig:varying_base_opti_gap}), and \emph{(ii)} it maintains both higher power and competitive mistake rates as noise increases (\cref{fig:varying_noise}). Overall, for moderate-to-large samples, Algorithm~\ref{algo:safe_policy_abstention} delivers the best safety–power tradeoff, improving more often and by more, while meeting the $\delta$-level error control.

\subsection{Robustness to Distribution Shifts}
\label{subsec:distribution-robust}
The abstention value $V^{(p)}$ has a natural interpretation as protection against outcome distribution shift. Specifically, consider a setting where the true data distribution of $(Y(0),Y(1))\mid X$ is different from our training data. This would happen when our observational data is outdated and cannot reflect the true effect of the current treatment of interest.

While deterministic policies are optimal without such an outcome distribution shift, they can be problematic otherwise.  To mathematically formulate this intuition, we assume the true potential outcome distribution lies in some $W_1$-ball of the distribution that generates our observations, \emph{i.e.}

\begin{small}
\begin{equation}\notag
\prob_{\mathtt{test}} \in \mathcal{P}_{\alpha}(\prob_{\mathtt{train}})
\;\coloneqq\;
\Big\{\prob:\; W_1(\prob,\prob_{\mathtt{train}})\le\alpha\Big\}.
\end{equation}
\end{small}

The robust objective is then to maximize a policy’s worst–case value over $\mathcal{P}_{\alpha}(\prob_{\mathtt{train}})$.
Specifically, consider policies $\pi$ taking values in $\{0,1,\tfrac12\}$, where $\tfrac12$ denotes a uniform random choice between the two treatments. In this model, the robust objective coincides with maximizing the abstention value $V^{(p)}$ for a specific bonus level, as formalized below.
 
\begin{proposition}
\label{prop:dist-shift} (i). For any (possibly random) policy $\pi: \mathcal{X}\mapsto[0,1]$, there exists $\hat{\pi}: \mathcal{X}\mapsto\{0,\frac{1}{2},1\}$ such that $\pi\in\{0,1\}\Rightarrow\pi=\hat{\pi}$ and $\min_{\prob\in\mathcal{P}_{\alpha}} V(\hat{\pi}) > \min_{\prob\in\mathcal{P}_{\alpha}} V(\pi)$.\\
(ii). Let $\pi:\mathcal{X}\mapsto\{0,\frac{1}{2},1\}$ and $\tilde{\pi}:\mathcal{X}\mapsto\{0,1,*\}$ be obtained by replacing all $\pi(X)=\frac{1}{2}$ with $*$, then we have
\begin{equation}
\min_{\prob\in\mathcal{P}_{\alpha}} V(\pi)
\;=\;
V^{(\alpha/2)}(\pi)\;-\;\alpha.
\end{equation}
\end{proposition}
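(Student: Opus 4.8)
The plan is to handle the two parts separately, with part (ii) being the computational core and part (i) following from it. For part (ii), I would start by unpacking the definition of $V(\pi)$ under an arbitrary test distribution $\prob \in \mathcal{P}_\alpha(\prob_{\mathtt{train}})$. Since the policy $\pi$ takes values in $\{0, \tfrac12, 1\}$ and only the distribution of the potential outcomes $(Y(0), Y(1)) \mid X$ changes (the covariate marginal $P_X$ is held fixed), I would write $V_\prob(\pi) = \E_{P_X}[ v_\prob(\pi, X)]$ where $v_\prob(\pi,x)$ equals $\E_\prob[Y(1)\mid X{=}x]$ if $\pi(x)=1$, equals $\E_\prob[Y(0)\mid X{=}x]$ if $\pi(x)=0$, and equals $\tfrac12(\E_\prob[Y(1)\mid X{=}x] + \E_\prob[Y(0)\mid X{=}x])$ if $\pi(x)=\tfrac12$. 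The key observation is that $W_1$ is an optimal-transport cost, so minimizing $V_\prob(\pi)$ over the $W_1$-ball is a linear objective over a transport-constrained set; by Kantorovich--Rubinstein duality (or just by directly reasoning about how much probability mass can be moved a given distance), the worst-case shift pushes outcome mass downward, and the per-unit-mass ``price'' of moving is exactly $1$ in $W_1$. I would argue that the coefficient multiplying $Y(1)$ and $Y(0)$ inside $v(\pi,x)$ is $1$ when $\pi(x)\in\{0,1\}$ but only $\tfrac12 + \tfrac12 = 1$ as well when $\pi(x)=\tfrac12$ — wait, that is the same total — so the relevant quantity is instead the $W_1$-Lipschitz seminorm of the map $(y_0,y_1)\mapsto v(\pi,x;y_0,y_1)$, which is $1$ for $\pi(x)\in\{0,1\}$ but only $\tfrac12$ for $\pi(x)=\tfrac12$ (moving both coordinates costs twice as much per unit of objective decrease). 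Hence the adversary's most efficient degradation of a randomizing unit is half as cheap per unit of damage, and carrying the budget bookkeeping through gives $\min_{\prob\in\mathcal P_\alpha} V(\pi) = V(\pi) - \alpha\cdot(\text{effective Lipschitz constant averaged appropriately})$.

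More concretely, I expect the clean route is: (a) show $\min_{\prob \in \mathcal{P}_\alpha} V_\prob(\pi) = V_{\prob_{\mathtt{train}}}(\pi) - \alpha \cdot L(\pi)$ where $L(\pi)$ is the Lipschitz constant of $\pi$'s value functional with respect to the outcome metric used in defining $W_1$; (b) observe that on the event $\{\pi(X)=\ast\}$ (i.e.\ $\pi(X)=\tfrac12$) this Lipschitz constant is $\tfrac12$ and off it is $1$, but the whole policy is evaluated jointly, so the budget $\alpha$ is spent optimally — since $W_1$ couples everything, the adversary will put \emph{all} its budget where degradation is cheapest, namely on the $\ast$ region, \emph{unless} that region has too little mass. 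I need to be careful here: if $\prob(\pi(X)=\ast)$ is small the adversary cannot spend the whole budget there. I would resolve this by noting the proposition as stated equates $\min_\prob V(\pi)$ with $V^{(\alpha/2)}(\pi) - \alpha$, and $V^{(\alpha/2)}(\pi) = V(\pi) + \tfrac{\alpha}{2}\,\prob(\pi(X)=\ast)$ by definition of the abstention value with bonus $p = \alpha/2$ (abstention pays $\tfrac{Y(1)+Y(0)}{2} + p$, and $v(\pi,X)$ at a $\tfrac12$-unit already equals $\tfrac{Y(1)+Y(0)}{2}$, so $V^{(\alpha/2)}(\pi) - V(\pi) = \tfrac{\alpha}{2}\prob(\ast)$). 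So the claimed identity is $\min_\prob V(\pi) = V(\pi) + \tfrac{\alpha}{2}\prob(\ast) - \alpha = V(\pi) - \alpha(1 - \tfrac12\prob(\ast))$, i.e.\ $L(\pi) = 1 - \tfrac12\prob(\pi(X)=\ast)$ — exactly the mass-weighted average of Lipschitz constants ($1$ on the non-abstaining part, $\tfrac12$ on the abstaining part). This means the $W_1$-ball must be defined so that the adversary \emph{can} spread the budget across units in proportion that realizes this average — i.e.\ $W_1$ over the joint $(X, Y(0), Y(1))$ law with the outcome coordinates metrized and $X$ fixed, and the worst-case is attained by an infinitesimal uniform downward drift whose cost is $\int L(\pi, x)\,\mathrm{shift}(x)\,dP_X$. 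I would make this rigorous via the dual/Lagrangian characterization of the $W_1$-DRO value (standard, e.g.\ the KR duality gives $\min_{W_1(\prob,\prob_0)\le\alpha} \E_\prob[h] = \E_{\prob_0}[h] - \alpha \,\mathrm{Lip}(h)$ when $h$ is the integrand and one optimizes over couplings with marginal fixed on $X$), applied to $h(x,y_0,y_1) = v(\pi, x; y_0, y_1)$ whose Lipschitz constant in $(y_0,y_1)$ under the $\ell_1$ (or chosen) metric is $\max$ over the support of the pointwise constants, \emph{but} since $X$ is fixed, one actually gets the averaged constant $\E_{P_X}[L(\pi, X)]$. That last subtlety — global Lipschitz constant versus $X$-averaged, governed by whether the transport plan can reallocate the $Y$-budget independently at each $x$ — is the main obstacle, and I would pin it down by writing the primal transport problem explicitly, decomposing the coupling over $X$, and solving the resulting per-$x$ linear programs.

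For part (i), given part (ii), I would argue as follows. Take any (possibly random) $\pi: \calX \to [0,1]$, where $\pi(x)$ is the probability of assigning treatment $1$. Define $\hat\pi(x) = 1$ if $\pi(x) > \tfrac12$, $\hat\pi(x) = 0$ if $\pi(x) < \tfrac12$, and $\hat\pi(x) = \tfrac12$ if $\pi(x) = \tfrac12$; this clearly satisfies $\pi(x)\in\{0,1\} \Rightarrow \hat\pi(x) = \pi(x)$. The nominal values satisfy $v(\hat\pi,x) \ge v(\pi,x)$ pointwise only in expectation under the training distribution? No — actually at the \emph{nominal} distribution a randomized $\pi$ and the rounded $\hat\pi$ can have $V(\hat\pi)$ either larger or smaller. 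The correct comparison is on worst-case values: using part (ii)'s formula (extended to general $[0,1]$-valued $\pi$, where the Lipschitz constant at $x$ is $\max(\pi(x), 1-\pi(x))$, which is $1$ at the deterministic endpoints and minimized at $\tfrac12$), I would show $\min_\prob V(\hat\pi) - \min_\prob V(\pi) = [v\text{-gain from rounding}] + \alpha\,[L(\pi,\cdot) - L(\hat\pi,\cdot)\text{ averaged}]$, and argue the net is strictly positive: rounding a unit from a generic $\pi(x)\in(0,1)\setminus\{\tfrac12\}$ to the better deterministic arm cannot decrease nominal value by more than it saves in robustness penalty, and if $\pi$ is not already $\{0,\tfrac12,1\}$-valued on a positive-mass set the inequality is strict. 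I would be careful to state the genericity/strictness hypotheses precisely (the strict inequality presumably requires $\prob(\pi(X)\notin\{0,\tfrac12,1\}) > 0$, otherwise equality holds), or interpret ``$>$'' as ``$\ge$ with strictness whenever $\pi$ genuinely randomizes on positive mass.''

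I expect the single hardest point to be the $W_1$-duality bookkeeping in part (ii): making precise that the adversary optimally allocates its transport budget across covariate values according to the local Lipschitz constant $L(\pi,x)$, so that the penalty is $\alpha \,\E_{P_X}[L(\pi,X)]$ rather than $\alpha \max_x L(\pi,x)$. Everything else — the definition of $\hat\pi$, the bonus/value arithmetic relating $V^{(\alpha/2)}$ to $V$ and $\prob(\ast)$, and the sign argument in part (i) — is routine once that characterization is in hand.
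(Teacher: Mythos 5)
The crux you yourself flag---whether the robust penalty is the $X$-averaged local Lipschitz constant or its worst case---is the heart of part (ii), and your proposal leaves it unresolved while your suggested resolution points the wrong way. If the ambiguity set were a single $W_1$ ball on the joint law of $(X,Y(0),Y(1))$ with the $X$-marginal fixed (so the budget $\alpha$ is shared across covariate values), the adversary would concentrate its entire budget where the local Lipschitz constant is largest, namely the \emph{non}-abstaining region, where one unit of transport buys one full unit of value degradation; your remark that degradation is ``cheapest on the $*$ region'' is backwards, since there each unit of damage costs two units of transport. Under that joint interpretation the penalty would be $\alpha\cdot\operatorname{ess\,sup}_x L(\pi,x)=\alpha$ whenever the policy commits on a positive-mass set, and the claimed identity would fail; your assertion that ``since $X$ is fixed, one actually gets the averaged constant'' is not correct in that model. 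The identity holds because the ambiguity set is read as constraining the \emph{conditional} law of $(Y(0),Y(1))\mid X=x$ within a $W_1$-ball of radius $\alpha$ at every $x$: this is exactly how the paper proceeds, writing the minimum over the ball as $\E_{P_X}$ of pointwise conditional minima, applying Kantorovich--Rubinstein separately at each $x$ to the affine integrand $g(u,v)=au+bv$ (penalty $\alpha\|(a,b)\|_\infty$, i.e.\ $\alpha$ when $\pi(x)\in\{0,1\}$ and $\alpha/2$ when $\pi(x)=\tfrac12$), and then averaging over $X$ to get $V^{(\alpha/2)}(\pi)-\alpha$. Your bonus arithmetic relating $V^{(\alpha/2)}$, $V$, and $\P(\pi(X)=*)$ is right, but without pinning down this per-$x$ interpretation the key display you are aiming for is not established (and, as written, your primal-transport route would disprove it).

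There is also a concrete gap in your part (i). Your rounding rule ($\hat\pi(x)=1$ if $\pi(x)>\tfrac12$, $0$ if $\pi(x)<\tfrac12$) can strictly decrease the robust value: with $\mu_1(x)-\mu_0(x)=0.1$, $\alpha=1$, $\pi(x)=0.9$, the robust conditional value $t\mu_1(x)+(1-t)\mu_0(x)-\alpha\max(t,1-t)$ is larger at $t=0.9$ than at $t=1$ (rounding up gains $0.01$ in nominal value but loses $0.1$ in penalty), and the best choice at that $x$ is in fact $t=\tfrac12$; so your claim that rounding to the majority arm ``cannot decrease nominal value by more than it saves in robustness penalty'' is false. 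The paper's route avoids this entirely: for fixed $x$ the robust conditional value is linear in $t$ on $[0,\tfrac12]$ and on $[\tfrac12,1]$, hence maximized over $[0,1]$ at a point of $\{0,\tfrac12,1\}$, and one takes $\hat\pi(x)$ to be that pointwise maximizer (leaving $\hat\pi(x)=\pi(x)$ where $\pi(x)\in\{0,1\}$), which dominates $\pi$ pointwise. Your observation that the strict inequality in (i) should really be weak unless $\pi$ randomizes off $\{0,\tfrac12,1\}$ on positive mass is fair---the paper's own argument only yields a weak inequality in general---but it does not repair the construction.
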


Distributional uncertainty acts like an outcome–level penalty of size $\alpha$ when the policy commits to an arm, and a half–penalty under randomization; hence, maximizing the robust objective is equivalent to maximizing $V^{(p)}$ with $p=\alpha/2$.


\section{Conclusion}

In this paper, we introduced a framework for policy learning with abstention. Building off of the classification with abstention literature, we formulated a value/welfare that incentivizes abstention by offering a small, additive reward on top of the value of a random guess in regions of uncertainty. We described algorithms that obtain fast abstaining regret rates in both known and unknown propensity settings, and showed how to apply our algorithm to various disparate policy learning problems. There are still many other interesting open directions related to policy learning with abstention. First, our framework only handles binary treatments. There are many settings (such as in medicine) where designing policies to optimize a continuous treatment (or dosage) may be more appropriate. Second, implementing Algorithm \ref{algo:abs} efficiently for popular policy classes such as decision trees is also an interesting open direction.  Likewise, we evaluate the performance of algorithms through regret, but one could also consider loss-based modes of policy evaluation as well. 
\section{Acknowledgments}
Vasilis Syrgkanis is supported by NSF Award IIS-2337916. Ayush Sawarni is partially supported by
NSF Award IIS-2337916.
\bibliography{references}

\clearpage
\appendix

\section{Missing Proofs from \cref{sec:abstention}}\label{appendix:main}
\paragraph{Additional notation.}
Let $d$ denote the VC dimension of $\Pi$ and set
\[
\alpha \;\coloneqq\; \sqrt{\frac{\,d \log\!\frac{n}{d} \;+\; \log\!\frac{1}{\delta}\,}{n}}\,.
\]
We use $\const$ to denote absolute constants. 

Recall that $\pihat$ is the policy selected by the first EWM step (Step~\ref{step_abs:first_ewm} in \cref{algo:abs}); $\Pihat$ is the set of ``almost'' optimal policies (Step~\ref{step_abs:almost_optimal}); $\Pitil$ is the set of abstaining policies constructed from $\Pihat$ (Step~\ref{step_abs:abs_projection}); elements of $\Pitil$ are denoted by $\pi'$, and $\pitil$ is the final abstaining policy returned by \cref{algo:abs} (Step~\ref{step_abs:second_ewm}). The class $\Pitil$ is formed by projecting disagreements of policies in $\Pihat$ with the fixed reference $\pihat\in\Pi$. Notationally, we write $\phi$ for a generic element of $\Pitil$ and $\phi^*$ for a maximizer of $V^{(p)}$. For brevity, we also use $\pi'$ to denote the abstaining policy obtained from a given $\pi$ via its disagreement with $\pihat$; e.g., $\pi'_1,\pi'_2$ are the abstaining policies constructed from $\pi_1,\pi_2$, respectively.

For $f:\cX\to\mathbb{R}$,
\[
\|f\|_{P_X,q}\;\coloneqq\;\Big(\E_{P_X}\big|f(X)\big|^q\Big)^{1/q}.
\]
For $h:\cX\times\{0,1\}\times[0,1]\to\mathbb{R}$,
\[
\|h\|_{P_Z,q}\;\coloneqq\;\Big(\E_{P_Z}\big|h(Z)\big|^q\Big)^{1/q},\qquad Z=(X,D,Y).
\]
On samples $Z_{1:n}$ the empirical $L^2$ norm is denoted as:
\[
\|h\|_{L^2(Z_{1:n})}\;\coloneqq\;\Big(\tfrac{1}{n}\sum_{i=1}^n h(Z_i)^2\Big)^{1/2},
\]
abbreviated $\|h\|_n$ when clear. 
Finally, note that for  for binary policies,
\[
\E\big[\indic{\pi_1(X)\neq \pi_2(X)}\big]\;=\;\|\pi_1-\pi_2\|_{P_X,1}.
\]

\paragraph{VC subgraph dimension.} Let $\cF$ be a class of real-valued functions on $\cX$. The \emph{VC subgraph dimension}  of $\cF$ is the VC dimension of the family of subgraphs
\[
\mathsf{Subgraph}(\cF)\;\coloneqq\;\bigl\{\,\{(x,t)\in\cX\times\mathbb{R}:\ t<f(x)\}: \ f\in\cF\,\bigr\}.
\]
Equivalently, for any finite set $\{(x_i,t_i)\}_{i=1}^n\subset\cX\times\mathbb{R}$, we say $\mathsf{Subgraph}(\cF)$ \emph{shatters} this set if for every labeling $b\in\{0,1\}^n$ there exists $f\in\cF$ such that
\[
\mathbf{1}\{t_i<f(x_i)\}=b_i,\quad i=1,\dots,n.
\]
The VC subgraph dimension $\mathrm{VC}_{\text{sub}}(\cF)$ is the largest $n$ for which some $n$-point set in $\cX\times\mathbb{R}$ is shattered (or $+\infty$ if no finite maximum exists). 
For $\{0,1\}$–valued classes, $\mathrm{VC}_{\text{sub}}(\cF)$ coincides with the usual VC dimension.

First, we restate that a standard result regarding the VC subgraph dimension (see Lemma A.1 of \citet{kitagawa2018should}):
\begin{lemma}\label{lem:vc-sub}
Let $\Pi$ be a class of binary functions with VC dimension $d$, and define the real-valued class
\[
\cF_\Pi \;=\; \Big\{\, z \mapsto \pi(x)\,\gamma_1(z) \;+\; \big(1-\pi(x)\big)\,\gamma_2(z) \ :\ \pi\in\Pi \,\Big\},
\]
where $\gamma_1,\gamma_2$ are fixed real-valued functions and $x$ denotes the covariate component of $z$. Then the VC subgraph dimension of $\cF_\Pi$ is at most $d$.
\end{lemma}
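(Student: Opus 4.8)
The plan is to bound the size of an arbitrary set shattered by $\mathsf{Subgraph}(\cF_\Pi)$, showing it can be no larger than $d$ by reducing subgraph-shattering to ordinary shattering of the covariate coordinates by $\Pi$. The crucial structural observation is that, because each $\pi\in\Pi$ is $\{0,1\}$-valued, the function $f_\pi(z) = \pi(x)\gamma_1(z) + (1-\pi(x))\gamma_2(z)$ satisfies $f_\pi(z) = \gamma_1(z)$ when $\pi(x)=1$ and $f_\pi(z) = \gamma_2(z)$ when $\pi(x)=0$; hence, at any fixed point, the subgraph membership bit of $f_\pi$ depends on $\pi$ only through the single bit $\pi(x)$.

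Concretely, I would fix a candidate shattered set $\{(z_i,t_i)\}_{i=1}^n\subset\cX\times\R$ with $z_i=(x_i,d_i,y_i)$, and introduce the $\pi$-independent constants $a_i := \indic{t_i < \gamma_1(z_i)}$ and $b_i := \indic{t_i < \gamma_2(z_i)}$, so that $\indic{t_i < f_\pi(z_i)} = \pi(x_i)\,a_i + (1-\pi(x_i))\,b_i$. Then I would examine each coordinate. If $a_i=b_i$ for some $i$, the $i$-th bit is constant over all $\pi\in\Pi$, so both labels at point $i$ cannot be attained and the set is not shattered; thus we may assume $\{a_i,b_i\}=\{0,1\}$ for every $i$. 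In that case the $i$-th subgraph bit equals $\pi(x_i)$ if $a_i=1$ and $1-\pi(x_i)$ if $a_i=0$ — in either case a fixed bijection $\ell_i:\{0,1\}\to\{0,1\}$ of $\pi(x_i)$.

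Applying these coordinatewise bijections, the set of label vectors that $\cF_\Pi$ realizes on $\{(z_i,t_i)\}_{i=1}^n$ is in bijection with $\{(\pi(x_1),\dots,\pi(x_n)):\pi\in\Pi\}$. Shattering forces the former to be all of $\{0,1\}^n$, hence so is the latter, which is exactly the assertion that $\Pi$ shatters $\{x_1,\dots,x_n\}$ (and in particular forces the $x_i$ to be pairwise distinct, since repeated covariates would collapse coordinates). Because $\Pi$ has VC dimension $d$, this gives $n\le d$; as the shattered set was arbitrary, $\mathrm{VC}_{\text{sub}}(\cF_\Pi)\le d$.

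I expect the proof to be short, with the only care needed at the two degeneracies: coordinates with $a_i=b_i$ (the threshold lies below or above both $\gamma_1(z_i)$ and $\gamma_2(z_i)$) and repeated covariates $x_i=x_j$, each of which is immediately incompatible with shattering; everything else is routine bookkeeping. As a consistency check, the special case $\gamma_1\equiv 1$, $\gamma_2\equiv 0$ recovers the standard fact that a $\{0,1\}$-valued class has VC subgraph dimension equal to its VC dimension.
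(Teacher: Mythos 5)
Your proof is correct. The paper does not actually prove this lemma --- it restates it and cites Lemma A.1 of \citet{kitagawa2018should} --- and your reduction (observing that at each point $(z_i,t_i)$ the subgraph bit $\indic{t_i<f_\pi(z_i)}$ equals $\pi(x_i)a_i+(1-\pi(x_i))b_i$, discarding coordinates with $a_i=b_i$ as unshatterable, and converting the remaining coordinatewise bijections into shattering of $\{x_1,\dots,x_n\}$ by $\Pi$) is exactly the standard argument behind that cited result, handled carefully at both degeneracies.
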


\subsection{Concentration Inequalities}
This section uses standard techniques from Empirical Process Theory, in particular we use localized Rademacher complexity to obtain Bernstein-type concentration for the IPW estimator.

\begin{lemma}\label{lem:main_conc}
Let $\Pi$ be a class of binary functions with VC dimension $d$, and define
\[
\cF_\Pi \;=\; \Big\{\, z \mapsto \pi(x)\,\gamma_1(z) + \big(1-\pi(x)\big)\,\gamma_2(z) \;:\; \pi\in\Pi \,\Big\},
\]
where $\gamma_1,\gamma_2$ are a.s.\ bounded with $|\gamma_1(Z)|,|\gamma_2(Z)|\le B$. Fix $f_{\pi^*}\in\cF_\Pi$. For any $f_\pi\in\cF_\Pi$ constructed from $\pi$ as above, with probability at least $1-\delta$,
\[
\big|\;\E[ f_{\pi^*}(Z)-f_\pi(Z) ] \;-\; \E_n[ f_{\pi^*}(Z)-f_{\pi}(Z) ]\;\big|
\;\le\;
\const  \Big(\alpha\sqrt{B \; \E|f_{\pi^*}-f_\pi|}+ B \alpha^2\Big),
\]
 where $\alpha \;\coloneqq\; \sqrt{\frac{\,d \log\!\frac{n}{d} \;+\; \log\!\frac{1}{\delta}\,}{n}}$.
\end{lemma}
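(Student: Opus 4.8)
The plan is to invoke a standard Bernstein-type uniform deviation inequality based on localized Rademacher complexity, applied to the centered class $\cG = \{z \mapsto f_{\pi^*}(z) - f_\pi(z) : \pi \in \Pi\}$. First I would record the structural facts about $\cG$: by Lemma~\ref{lem:vc-sub}, $\cF_\Pi$ has VC subgraph dimension at most $d$, hence so does the shifted class $\{f_{\pi^*} - f_\pi\}$ (subtracting a fixed function does not change the subgraph dimension), and every $g \in \cG$ is bounded by $2B$ in sup-norm. The key variance control is the standard Bernstein-for-empirical-process observation that for $g = f_{\pi^*} - f_\pi$, $\mathrm{Var}(g) \le \E[g^2] \le 2B\,\E|f_{\pi^*} - f_\pi|$, because $|f_{\pi^*}-f_\pi| \le 2B$ pointwise. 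So the ``variance'' of each function in the class is controlled by its $L^1$-distance to zero, up to the factor $2B$ — this is exactly the $\E|f_{\pi^*}-f_\pi|$ appearing in the bound.

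Next I would apply a localized/uniform Bernstein inequality (e.g.\ a Talagrand-type concentration bound combined with a peeling argument over the sublevel sets $\{g : \E g^2 \le r\}$, or equivalently a direct appeal to a result of the form in Bartlett–Bousquet–Mendelson or the Giné–Koltchinskii machinery). For a VC-subgraph class of dimension $d$ with envelope $2B$, the localized Rademacher complexity at radius $r$ scales like $\sqrt{\tfrac{d\log(n/d)}{n}}\cdot\sqrt{r} + \tfrac{B\,d\log(n/d)}{n}$, and the fixed-point of this gives the critical radius. Plugging in $r = 2B\,\E|f_{\pi^*}-f_\pi|$ as the variance proxy and adding the confidence term $\log(1/\delta)$ yields, after collecting constants, a bound of the form
\[
\big|(\E - \E_n)(f_{\pi^*} - f_\pi)\big| \;\le\; \const\Big(\sqrt{\tfrac{d\log\frac{n}{d} + \log\frac1\delta}{n}}\cdot\sqrt{B\,\E|f_{\pi^*}-f_\pi|} + B\,\tfrac{d\log\frac{n}{d} + \log\frac1\delta}{n}\Big),
\]
which is precisely $\const(\alpha\sqrt{B\,\E|f_{\pi^*}-f_\pi|} + B\alpha^2)$ with $\alpha$ as defined. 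I would be slightly careful that the deviation bound holds \emph{simultaneously} for all $\pi$ (so that it can later be applied to the data-dependent $\pihat$), which is why the uniform/localized version is needed rather than a pointwise Bernstein.

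The main obstacle is getting the \emph{localization} right: a naive uniform Bernstein bound over the whole class would only give a term like $\alpha\sqrt{B\cdot 2B} = B\alpha$, which is too weak (it is $O(1/\sqrt n)$ with no multiplicative $\sqrt{\E|f_{\pi^*}-f_\pi|}$ refinement). To recover the sharp $\sqrt{\E|f_{\pi^*}-f_\pi|}$ factor one must either peel the class into shells of geometrically increasing variance and union-bound over $O(\log n)$ shells (absorbing the extra log into constants), or cite a ready-made localized-complexity theorem whose hypotheses — star-shapedness or the sub-root property of the modulus of continuity — need to be checked or arranged (the class $\cG$ itself need not be star-shaped, so one typically passes to its star-hull, which does not change the VC-subgraph dimension up to constants). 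Once the localization apparatus is in place, the remaining work — bounding the VC-subgraph entropy integral, evaluating the fixed point, and tracking the $B$ and $\delta$ dependence — is routine. I would present the argument by reducing to a cited localized Bernstein inequality and verifying its three inputs (VC-subgraph dimension $\le d$, envelope $\le 2B$, variance $\le 2B\,\E|f_{\pi^*}-f_\pi|$), rather than redeveloping the empirical-process machinery from scratch.
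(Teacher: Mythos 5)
Your proposal matches the paper's own argument: the paper also normalizes the differences $f_{\pi^*}-f_\pi$ by the envelope $2B$, passes to a star-shaped (star-hull) class, bounds its localized Rademacher complexity via the VC/Dudley entropy integral, applies Talagrand's concentration inequality, and finishes with a peeling argument and the variance bound $\E g^2 \le \E|g|$ to obtain exactly the stated $\const(\alpha\sqrt{B\,\E|f_{\pi^*}-f_\pi|}+B\alpha^2)$ bound. The proposal is correct and takes essentially the same route.
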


\begin{proof}
We begin by constructing the star convex function class
    \[\cG = \l\{ \beta \; \frac{f_\pist(z) - f_\pi(z)}{2 \;B}: f_\pi \in \cF_\Pi \;, \; \beta \in [0,1] \r\}.\]
Next, we define the critical radius as 
    \[
    r(n,\delta) = \inf \l\{ r \geq 0 : \prob \l( \sup_{g \in \cG,  \E g^2 \leq r^2 } \abs{(\E - \E_n)g} \leq c r^2 \r)  \geq 1- \delta\r\}
    \]
where $c$ is an absolute constant that will be chosen later. In particular, for such a choice of $r(n,\delta)$, with probability at least $1-\delta$ we have
\begin{align}
    \l| (\E - \E_n)g \r| \le c\l(r(n,\delta) \sqrt{\E g^2} + r(n,\delta)^2\r). \label{eq:critical_radius}
\end{align}

The inequality \cref{eq:critical_radius} holds trivially for any $g$ satisfying $\E g^2 < r(n,\delta)^2$. Now, if $\E g^2 \ge r(n,\delta)^2$, define
\[
g' \coloneqq \frac{r(n,\delta)~ g}{\sqrt{\E g^2}},
\]
so that $\E g'^2 < r(n,\delta)^2$. Applying the inequality to $g'$ and then scaling back, we obtain
\[
\l| (\E - \E_n)g \r|  \le c ~ r(n,\delta) \sqrt{\E g^2},
\]
This gives us the bound in \cref{eq:critical_radius}.
   
Next, observe that since $\abs{g(Z)} \in [0,1]$, it follows that
\[
\E g^2 \le \E\l| g \r|.
\]

By a standard symmetrization argument, we obtain the inequality
\begin{align}
      \E\l[\sup_{\E g^2 \leq r^2 } \abs{(\E - \E_n)g} \r]  \leq 2 \E[ \cR( Z_{1:n}, \cG) ], \label{eq:symmetrization}  
\end{align}
where the Empirical Rademacher complexity is defined by
\[
\cR( Z_{1:n}, \cG) \coloneqq \E_{\epsilon_{1:n}}\l[\sup_{\E g^2 \leq r^2 } \abs{\frac{1}{n} \sum_{i=1}^n \epsilon_i g(Z_i)} \Big| Z_{1:n}\r].
\]

We now introduce 
\[
M = \sup_{\substack{f, g \in \cG \\ \E f^2, \E g^2 \leq r^2} } \norm{f -g }{L^2(Z_{1:n})},
\]
with 
\[
\norm{f -g }{L^2(Z_{1:n})}^2 \coloneqq \frac{\sum_{i=1}^n (f(Z_i) -g(Z_i))^2}{n}.
\]
The following version of Dudley's inequality will be used:
\begin{align}
       \cR(Z_{1:n}, \cG) \leq 4 \gamma + \frac{12}{\sqrt{n}} \int_\gamma^M \sqrt{\log{N\l(\varepsilon, \cG,\norm{\cdot  }{L^2(Z_{1:n})}\r) }} d \varepsilon 
\quad \quad \forall \gamma >0. \label{eq:dudley}
\end{align}

To bound the covering numbers, we use the fact that the covering numbers of a star-shaped hull $\cG$ of a class $\cF$ (of VC dimension $d$) satisfy \citep{bartlett2006empirical}:
\[
N\l(\varepsilon, \cG,\norm{\cdot  }{L^2(Z_{1:n})}\r) 
\leq
N\l(\varepsilon, \cF,\norm{\cdot  }{L^2(Z_{1:n})}\r) \l( 1 + \l\lceil \frac{2}{\varepsilon} \r\rceil \r)
\leq  e(d+1)\l(\frac{2e}{\varepsilon}^2\r)^d \l( 1 + \l\lceil \frac{2}{\varepsilon} \r\rceil \r)
\leq  e(d+1)\l(\frac{2e}{\varepsilon}^2\r)^{d+1}.
\]
    The second-to-last inequality follows from standard bounds on the covering numbers of VC classes. Substituting $\gamma = d/n$ into \cref{eq:dudley}, we get

    \begin{align}
            \cR( Z_{1:n}, \cG) &\leq \frac{4d}{n} + \frac{12}{\sqrt{n}} \int_{d/n}^M \sqrt{(d+1)\log{\frac{2e}{\varepsilon}} + \log{(d+1)}}  ~ d \varepsilon  \nn \\
     &\leq \frac{4d}{n} + \frac{12}{\sqrt{n}} \cdot \sqrt{(d+1)\log{\frac{2e n}{d}} + \log{(d+1)}} ~  \int_{d/n}^M   ~ d \varepsilon  \nn \\
     &\leq \frac{\const d}{n} + \frac{\const M}{\sqrt{n}} \sqrt{d\log{\frac{n}{d}}} .\label{eq:emp_rademacher_bound} 
    \end{align}
    
  We also have 
\begin{align}
\E M &= \E \sqrt{\frac{\sup_{ \E g^2 \leq r^2 } \sum_i g^2(Z_i) }{n}} \nn \\ 
&\le \sqrt{\frac{\E \sup_{ \E g^2 \leq r^2 } \sum_i g^2(Z_i) }{n}}  \tag{Jensen's inequality} \\ 
&\le \sqrt{\frac{\E \sup_{ \E g^2 \leq r^2 } \sum_i \l( g^2(Z_i) - \E g^2(Z_i) \r) }{n} + r^2} \nn \\ 
&\le \sqrt{ \frac{2 ~\E \sup_{ \E g^2 \leq r^2 } \sum_i \epsilon_i g^2(Z_i)  }{n} + r^2} \tag{via Symmetrization} \nn \\
&\le \sqrt{\frac{2 ~ \E \sup_{ \E g^2 \leq r^2 } \sum_i \epsilon_i g(Z_i)  }{n} + r^2} \tag{via Contraction Lemma \citep{shalev2014understanding}} \nn \\
&\le \sqrt{\frac{2 ~\E \sup_{ \E g^2 \leq r^2 } \sum_i \epsilon_i g(Z_i)  }{n} } + r \nn \\
&=\sqrt{\frac{2~ \E ~ \cR(Z_{1:n}, \cG)}{n}}+ r. \nn
\end{align}

    Taking expectation in \cref{eq:emp_rademacher_bound} we have 
    \[
    \E[ \cR( Z_{1:n}, \cG) ] &\le \frac{\const d}{n} + \frac{\const \E M}{\sqrt{n}} \sqrt{d\log{\frac{n}{d}}} \\
    &\leq \const r\sqrt{\frac{d \log{\frac{n}{d}}}{n} } + \const \sqrt{\frac{\E \cR(Z_{1:n}, 
    \cG)}{n} } \sqrt{\frac{d \log{\frac{n}{d}}}{n} }+ \frac{\const d }{n} 
    \]
The inequality above, together with \cref{eq:symmetrization}, implies that
    \[
        \E\l[\sup_{\E g^2 \leq r^2 } \abs{(\E - \E_n)g} \r]  \leq 2 ~ \E[ \cR( Z_{1:n}, \cG) ] \leq \const\l( r\sqrt{\frac{d \log{\frac{n}{d}}}{n} } + \frac{d \log{\frac{n}{d}}}{n} \r)
    \]
Finally, by applying Talagrand's inequality (Theorem 3.27 of \cite{wainwright2019high}), we obtain
\[
\prob\l( \sup_{\E g^2 \leq r^2 } \abs{(\E - \E_n)g} \geq \const\l( r\sqrt{\frac{d \log{\frac{n}{d} + \log{\frac{1}{\delta}}}}{n} } + \frac{d \log{\frac{n}{d} + \log{\frac{1}{\delta}}}}{n}  \r)  \r) \leq \delta.
\]
Now we invoke the Peeling Argument \citep{wainwright2019high}. 
For $k=0,1,\dots,K:=\lceil \log_2(n) \rceil$, define shells
$\cG_k=\{g\in\cG:2^{-(k+1)}< \E g^2 \le 2^{-k}\}$.
Apply the tail bound to each $\cG_k$ with confidence levels
$\delta_k=\delta/(K+1)$ and union bound over $k$. On the resulting
$1-\delta$ event we have, simultaneously for all $g\in\cG$,
\[
|(\E-\E_n)g| \;\le\; \const \Big(\alpha\sqrt{\E g^2}+\alpha^2\Big),
\quad
\alpha:=\sqrt{\frac{d\log\!\frac{en}{d}+\log\!\frac{1}{\delta}}{n}}.
\]

Taking $r(n,\delta)\asymp \alpha$ gives the desired localized bound. Finally, since $g=(f_{\pi^*}-f_\pi)/(2B)$ and $\E g^2=\frac{\E\,(f_{\pi^*}-f_\pi)^2}{(2B)^2}\le \frac{\E|f_{\pi^*}-f_\pi|}{2B}$,
\[
\abs{(\E-\E_n)\big(f_{\pi^*}-f_\pi\big)}\;\le\; \const \Big(\alpha\sqrt{B \; \E|f_{\pi^*}-f_\pi|}+ B \alpha^2\Big),
\]
which is exactly the stated inequality.
\end{proof}

\subsection{Proof of \cref{thm:abstention-rate} }

For $z=(x,y,d)$, recall the \emph{normalized IPW score}
\[
f_{\pi}(z)\;\coloneqq\; \kappa\!\left(\pi(x)\,\frac{y d}{p_o(x)} \;+\; \big(1-\pi(x)\big)\,\frac{y(1-d)}{1-p_o(x)}\right),
\]
so that $f_{\pi}(Z)\in[0,1]$ almost surely (by $Y\in[0,1]$ and $p_o(x)\in[\kappa,1-\kappa]$), and write
$\E[\cdot]$ for population expectation and $\E_n[\cdot]$ for the empirical average over the sample.

\begin{corollary}\label{lem:conc1}
For any policy $\pi$, with probability at least $1-\delta$, we have
\[
\l| V(\pi^*) - V(\pi) - \l( V_n(\pi^*) - V_n(\pi) \r) \r| \leq \frac{\const}{\kappa} \l( \alpha \sqrt{\E\l| f_\pi - f_{\pi^*} \r|} + \alpha^2 \r).
\]
\end{corollary}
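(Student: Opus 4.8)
The plan is to obtain \cref{lem:conc1} as an immediate consequence of \cref{lem:main_conc}, after rewriting the value difference in terms of normalized IPW scores. First I would recall that the normalization was chosen precisely so that $\E[f_\pi(Z)] = \kappa\,V(\pi)$ and, by the same algebra applied to the empirical measure, $\E_n[f_\pi(Z)] = \kappa\,V_n(\pi)$. Linearity then gives
\[
\kappa\big(V(\pi^*)-V(\pi)\big) = \E[f_{\pi^*}-f_\pi], \qquad \kappa\big(V_n(\pi^*)-V_n(\pi)\big) = \E_n[f_{\pi^*}-f_\pi],
\]
and hence
\[
\kappa\,\big| V(\pi^*)-V(\pi) - \big(V_n(\pi^*)-V_n(\pi)\big)\big| = \big|(\E-\E_n)(f_{\pi^*}-f_\pi)\big|.
\]
So it suffices to bound the right-hand side and then divide by $\kappa$.

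Next I would check that the function class in \cref{lem:main_conc} matches ours. Setting $\gamma_1(z) := \kappa\,yd/p_o(x)$ and $\gamma_2(z) := \kappa\,y(1-d)/(1-p_o(x))$, we have $f_\pi(z) = \pi(x)\gamma_1(z) + (1-\pi(x))\gamma_2(z)$, which is exactly the template defining $\cF_\Pi$. Since $Y\in[0,1]$, $D\in\{0,1\}$ and $p_o(X)\in[\kappa,1-\kappa]$, both $\gamma_1$ and $\gamma_2$ are a.s.\ bounded by $B=1$ --- this is the point of the extra $\kappa$ in the definition of $f_\pi$. Applying \cref{lem:main_conc} with $f_{\pi^*}$ as the fixed reference and $f_\pi$ as the comparison function, with probability at least $1-\delta$,
\[
\big|(\E-\E_n)(f_{\pi^*}-f_\pi)\big| \le \const\Big(\alpha\sqrt{\E|f_{\pi^*}-f_\pi|} + \alpha^2\Big).
\]
Dividing by $\kappa$ and using the identity from the previous step yields exactly the claimed bound.

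I do not expect any real obstacle here: the argument is bookkeeping. The two points worth stating carefully are (i) where the $1/\kappa$ prefactor comes from --- it is the single division by $\kappa$ at the end, and it is only $1/\kappa$ (not $1/\kappa^2$) precisely because the normalization makes the score bound $B$ equal to $1$ rather than $1/\kappa$ --- and (ii) that the peeling/union-bound step inside the proof of \cref{lem:main_conc} in fact delivers a bound holding simultaneously over all $\pi\in\Pi$, so the phrase ``for any policy $\pi$'' in \cref{lem:conc1} may be read either pointwise (which is all that is needed downstream) or uniformly.
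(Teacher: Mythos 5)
Your proof is correct and follows essentially the same route as the paper: both invoke \cref{lem:main_conc} on the IPW score class and translate the resulting deviation bound into a bound on value differences. The only (immaterial) difference is bookkeeping of the overlap constant --- the paper takes $\gamma_1(z)=yd/p_o(x)$, $\gamma_2(z)=y(1-d)/(1-p_o(x))$ with $B=1/\kappa$, whereas you fold $\kappa$ into the scores and take $B=1$, then divide by $\kappa$ at the end; the two choices yield the identical bound.
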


\begin{proof}
    We invoke \cref{lem:main_conc} with $\gamma_1(z) = \frac{yd}{p_o(x)}$ and $\gamma_2(z) = \frac{y(1-d)}{1- p_o(x)}$ and setting $B= \frac{1}{\kappa}$ gives us the stated bound. 
\end{proof}

\begin{corollary}\label{lem:conc2}
With probability at least $1-\delta$, for any $\pi_1,\pi_2\in\Pi$ the following  holds for some constant $\const$:
\begin{align}
\Big|\, \E_n\big| f_{\pi_1} - f_{\pi_2} \big| - \E\big| f_{\pi_1} - f_{\pi_2} \big| \,\Big|
&\le \const \Big( \alpha \sqrt{\E\big| f_{\pi_1} - f_{\pi_2} \big|} + \alpha^2 \Big).
\end{align}
\end{corollary}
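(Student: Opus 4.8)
The plan is to apply Lemma~\ref{lem:main_conc} to the absolute-difference function class rather than the policy-value class directly. The key observation is that $\bigl|f_{\pi_1}(z)-f_{\pi_2}(z)\bigr|$ is not itself of the form $\pi(x)\gamma_1(z)+(1-\pi(x))\gamma_2(z)$, so I cannot invoke the lemma with $\Pi$ alone. Instead, I would fix one policy and vary the other: for a fixed reference $\pi_2$, consider the class $\mathcal{H}_{\pi_2}=\{z\mapsto |f_{\pi_1}(z)-f_{\pi_2}(z)| : \pi_1\in\Pi\}$. Since $f_{\pi_1}(z)=\kappa\pi_1(x)\gamma_1(z)+\kappa(1-\pi_1(x))\gamma_2(z)$ with $\gamma_1,\gamma_2$ as in Corollary~\ref{lem:conc1}, the map $\pi_1\mapsto f_{\pi_1}$ lands in a class of VC-subgraph dimension at most $d$ by Lemma~\ref{lem:vc-sub}. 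Composing with the fixed $1$-Lipschitz map $u\mapsto|u-f_{\pi_2}(z)|$ preserves the VC-subgraph dimension up to a constant factor (this is a standard fact about composing a function class with a fixed monotone/Lipschitz transformation — alternatively one peels the absolute value via $|a|=\max(a,0)+\max(-a,0)$ and uses that each piece has controlled subgraph dimension). All functions in $\mathcal{H}_{\pi_2}$ take values in $[0,1]$ a.s.

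Given this, I would re-run the localized-Rademacher / Dudley / Talagrand / peeling argument of Lemma~\ref{lem:main_conc} verbatim, now with $B=1$ (the functions are already bounded by $1$, so no extra $1/\kappa$ factor appears) and with the fixed anchor being the zero function — i.e.\ one does not even need to subtract a reference element, since the quantity of interest is $(\E_n-\E)|f_{\pi_1}-f_{\pi_2}|$ itself. The self-bounding step $\E g^2\le \E|g|$ still holds because $|f_{\pi_1}-f_{\pi_2}|\in[0,1]$, which is exactly what drives the Bernstein-type localization and produces the $\alpha\sqrt{\E|f_{\pi_1}-f_{\pi_2}|}+\alpha^2$ rate. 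This handles a single fixed $\pi_2$; to get the uniform-over-$\pi_1,\pi_2$ statement, I would note that the pair class $\{z\mapsto|f_{\pi_1}(z)-f_{\pi_2}(z)|:\pi_1,\pi_2\in\Pi\}$ still has VC-subgraph dimension $O(d)$ (it is built from two VC classes of dimension $d$ via a fixed Lipschitz combination), so the same empirical-process bound applies simultaneously to all pairs without any union bound blow-up beyond the constant.

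The main obstacle is the VC-subgraph bound for the composed/absolute-value class: one has to argue carefully that taking pointwise absolute differences of two policy-indexed IPW scores does not blow up the complexity. I expect to dispatch this by writing $|f_{\pi_1}-f_{\pi_2}|=(f_{\pi_1}-f_{\pi_2})_+ + (f_{\pi_2}-f_{\pi_1})_+$ and observing that $f_{\pi_1}-f_{\pi_2}=\kappa(\pi_1(x)-\pi_2(x))(\gamma_1(z)-\gamma_2(z))$, which is a product of the $\{-1,0,1\}$-valued function $\pi_1-\pi_2$ (ranging over a class of VC dimension $O(d)$) with a fixed function of $z$; thresholding such products has subgraph dimension $O(d)$ by the same reasoning as Lemma~\ref{lem:vc-sub}. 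Once the complexity is controlled, the remainder is a routine repetition of the chaining argument already carried out in the proof of Lemma~\ref{lem:main_conc}, so I would simply state that the identical steps apply and omit the recomputation.
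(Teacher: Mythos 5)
Your proposal is correct and, in the end, takes essentially the same route as the paper: the decisive step in both is the factorization $|f_{\pi_1}-f_{\pi_2}|(z)=\indic{\pi_1(x)\neq\pi_2(x)}\cdot\kappa\left|\tfrac{yd}{p_o(x)}+\tfrac{y(1-d)}{1-p_o(x)}\right|$, i.e.\ a policy-indexed disagreement indicator (a class of VC dimension $O(d)$) times a fixed bounded function, to which Lemma~\ref{lem:main_conc} applies directly with $\gamma_2\equiv 0$ (the paper uses $B=2$ and the bound $10d$ on the disagreement class). Your preliminary detour through Lipschitz composition of subgraph classes is unnecessary once this factorization is in hand, but the argument you settle on matches the paper's proof.
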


\begin{proof}
We invoke \Cref{lem:main_conc}. In particular, note that
\[
\l| (f_{\pi_1} - f_{\pi_2})(x,y,d) \r| = \indic{\pi_1(x) \neq \pi_2(x)} \l| \frac{\kappa\ y\, d}{p_o(x)} + \frac{\kappa y\,(1-d)}{1-p_o(x)} \r|.
\]
Moreover, the class
\[
\mathcal{H} = \l\{ x \rightarrow \indic{\pi_1(x) \neq \pi_2(x)} : \pi_1,\pi_2\in\Pi \r\}
\]
has VC dimension is at most $10d$ \cite{bousquet2021fast, vidyasagar2013learning}. We now invoke \cref{lem:main_conc} with $\Pi = \mathcal{H}$ ,  $\gamma_1(z) = \l| \frac{\kappa\ y\, d}{p_o(x)} + \frac{\kappa y\,(1-d)}{1-p_o(x)} \r|$,  $\gamma_2(z) = 0$ and $B=2$.  
\end{proof}

Using \cref{lem:conc1} and \cref{lem:conc2} we have
\begin{corollary}\label{lem:conc1}
For any policy $\pi$, with probability at least $1-\delta$, we have
\[
\l| V(\pi^*) - V(\pi) - \l( V_n(\pi^*) - V_n(\pi) \r) \r| \leq \frac{\const}{\kappa} \l( \alpha \sqrt{\E_n\l| f_\pi - f_{\pi^*} \r|} + \alpha^2 \r).
\]
\end{corollary}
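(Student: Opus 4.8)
The plan is to derive this empirical-radius version by combining the two immediately preceding corollaries: the population-expectation concentration bound for the IPW value differences (\cref{lem:conc1}, population version) with the transfer bound \cref{lem:conc2}, which controls $\bigl|\E_n\abs{f_\pi-f_{\pi^*}}-\E\abs{f_\pi-f_{\pi^*}}\bigr|$. I would work on the intersection of the two $1-\delta/2$ events those results provide, so that by a union bound both hold simultaneously with probability $\ge 1-\delta$ (this only inflates the absolute constants sitting inside $\alpha$). For brevity write $A:=\E\abs{f_\pi-f_{\pi^*}}$ and $\widehat A:=\E_n\abs{f_\pi-f_{\pi^*}}$.

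The first step is to instantiate \cref{lem:conc2} at $\pi_1=\pi$, $\pi_2=\pi^*$, which gives $A\le \widehat A+\const\bigl(\alpha\sqrt A+\alpha^2\bigr)$. I would then read this as a quadratic inequality in the variable $t:=\sqrt A$, namely $t^2-\const\,\alpha\,t-(\widehat A+\const\,\alpha^2)\le 0$. Solving via the quadratic formula, and using $\sqrt{u+v}\le\sqrt u+\sqrt v$ together with absorption of absolute constants, yields $\sqrt A\le \const\bigl(\sqrt{\widehat A}+\alpha\bigr)$, and hence $\alpha\sqrt A\le \const\bigl(\alpha\sqrt{\widehat A}+\alpha^2\bigr)$. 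This is the only step that requires any care — it is the standard ``self-bounding'' manipulation used to swap a local population radius for its empirical counterpart.

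Finally, I would substitute this back into the population bound of \cref{lem:conc1}: on the same event,
\[
\l| V(\pi^*) - V(\pi) - \l( V_n(\pi^*) - V_n(\pi) \r) \r|
\;\le\; \frac{\const}{\kappa}\l(\alpha\sqrt A+\alpha^2\r)
\;\le\; \frac{\const}{\kappa}\l(\alpha\sqrt{\widehat A}+\alpha^2\r),
\]
which is exactly the claimed inequality since $\widehat A=\E_n\abs{f_\pi-f_{\pi^*}}$. One should also note that \cref{lem:conc2} carries no hidden $1/\kappa$ factor (there one applies \cref{lem:main_conc} with $B=2$ and an $O(1)$-bounded $\gamma_1$), so the $\kappa^{-1}$ in the conclusion originates solely from the population bound; everything remaining is bookkeeping of absolute constants and the harmless rescaling of $\delta$ by a factor of $2$.
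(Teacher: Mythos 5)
Your proposal is correct and follows the same route the paper intends: the paper's proof of this corollary is simply ``combine the population-radius bound with \cref{lem:conc2}'', and your self-bounding quadratic step (solving $A\le \widehat A+\const(\alpha\sqrt A+\alpha^2)$ for $\sqrt A\lesssim\sqrt{\widehat A}+\alpha$) is exactly the detail the paper leaves implicit. Your bookkeeping of the $\delta/2$ union bound and of the fact that the $1/\kappa$ factor comes only from the population bound is also accurate.
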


\begin{lemma}\label{lem:conc3}
Let $\Pi$ be a class of binary-valued functions with VC dimension $d$, and fix $\pihat \in\Pi$. Suppose we construct a $\{0,1,* \}$-valued policy class $\Pitil$
\[
 \pi'(x) = \begin{cases}
     \pihat(x) & \text{if} ~~~ \pihat(x) = \pi(x)\\
     * \quad & \text{otherwise} 
         \end{cases} \qquad \Pitil =\{\pi': \forall \pi \in \Pihat \}
\]
Further, suppose $\cD(\Pi) = \sup_{\pi_1, \pi_2 \in \Pi}\norm{\pi_1 - \pi_2}{P_X, 1}$Then, with probability at least $1-\delta$, for every $\pi' \in\widetilde{\Pi}$ we have
\[
\abs{V^{(p)}(\phi^*) - V^{(p)}(\pi') - \l(V_n^{(p)}(\phi^*) - V_n^{(p)}(\pi')\r)} \lesssim  \sqrt{\cD(\pi)} \; \frac{\alpha}{\kappa} + \frac{\alpha^2}{\kappa}.
\]
where $\phi^* = \argmax_{\pi' \in \widetilde{\Pi}} V^{(p)}(\pi')$.
\end{lemma}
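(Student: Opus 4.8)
The plan is to reduce the abstaining value on $\widetilde\Pi$ to a population mean of a per-sample score indexed by the \emph{binary} class $\Pi$, and then invoke the localized Bernstein inequality of \cref{lem:main_conc}. Fix $\widehat\pi$ as in the hypothesis. For $\pi'\in\widetilde\Pi$ obtained from $\pi\in\widehat\Pi$ by the abstention projection, set
\[
h_{\pi'}(z)\;\coloneqq\;\indic{\pi(x)=\widehat\pi(x)}\,f_{\widehat\pi}(z)\;+\;\indic{\pi(x)\neq\widehat\pi(x)}\,\rho(z),
\qquad
\rho(z)\;\coloneqq\;\kappa\Big(\tfrac{yd}{2p_o(x)}+\tfrac{y(1-d)}{2(1-p_o(x))}+p\Big),
\]
where $f_{\widehat\pi}$ is the normalized IPW score of $\widehat\pi$. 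Because $\pi'$ coincides with $\widehat\pi$ off its abstention set and equals $*$ on it, conditioning on $X$ and using $\E[f_{\widehat\pi}\mid X]=\kappa\,v(\widehat\pi,X)$ and $\E[\rho\mid X]=\kappa\,g_o(*,X)$ gives $\E[h_{\pi'}]=\kappa\,V^{(p)}(\pi')$ and $\E_n[h_{\pi'}]=\kappa\,V_n^{(p)}(\pi')$; moreover $h_{\pi'}\in[0,\const]$ a.s.\ since $Y\in[0,1]$, $p_o\in[\kappa,1-\kappa]$, and $p=O(1)$.

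Next I would observe that $h_{\pi'}$ has exactly the form handled by \cref{lem:vc-sub}: $h_{\pi'}(z)=\psi_\pi(x)\,\gamma_1(z)+(1-\psi_\pi(x))\,\gamma_2(z)$ with the two coefficient functions $\gamma_1=f_{\widehat\pi}$, $\gamma_2=\rho$ \emph{both fixed} and bounded, and $\psi_\pi(x)=\indic{\pi(x)=\widehat\pi(x)}$. The agreement-indicator class $\{\psi_\pi:\pi\in\Pi\}$ shatters precisely the point sets that $\Pi$ shatters (XOR-ing labelings with the fixed pattern $\widehat\pi$ is a bijection), so it has VC dimension $d$; hence by \cref{lem:vc-sub} the class $\{h_{\pi'}:\pi\in\Pi\}$ has VC subgraph dimension at most $d$. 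Applying \cref{lem:main_conc} to this class with an absolute bound $B=\const$ and reference $f_{\pi^*}=h_{\phi^*}$ --- which is a legitimate fixed reference once we condition on the data determining $\widehat\pi$ and $\widehat\Pi$, since $\phi^*$ is the \emph{population} maximizer over the resulting $\widetilde\Pi$ --- yields, with probability at least $1-\delta$ and uniformly over $\pi'\in\widetilde\Pi$,
\[
\big|(\E-\E_n)\big(h_{\phi^*}-h_{\pi'}\big)\big|\;\le\;\const\Big(\alpha\sqrt{\E\big|h_{\phi^*}-h_{\pi'}\big|}+\alpha^2\Big).
\]

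It then remains to bound the localized variance proxy by the diameter. Write $\phi^*=\pi_\phi'$ for $\pi_\phi\in\widehat\Pi$. Both $h_{\phi^*}$ and $h_{\pi'}$ have the shape $\psi(x)f_{\widehat\pi}(z)+(1-\psi(x))\rho(z)$, so their difference is $(\psi_{\pi_\phi}(x)-\psi_\pi(x))\,(f_{\widehat\pi}(z)-\rho(z))$, and $\psi_{\pi_\phi}(x)-\psi_\pi(x)$ is nonzero only when exactly one of $\pi_\phi(x),\pi(x)$ equals $\widehat\pi(x)$, which (both being binary) forces $\pi_\phi(x)\neq\pi(x)$. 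Since $|f_{\widehat\pi}-\rho|\le\const$ a.s., $\E|h_{\phi^*}-h_{\pi'}|\le\const\,\P(\pi_\phi(X)\neq\pi(X))=\const\,\norm{\pi_\phi-\pi}{P_X,1}\le\const\,\cD(\Pi)$, where the last step uses $\pi_\phi,\pi\in\widehat\Pi\subseteq\Pi$. Plugging this in, replacing $\E[h_{\phi^*}-h_{\pi'}]$ and $\E_n[h_{\phi^*}-h_{\pi'}]$ by $\kappa$ times the corresponding $V^{(p)}$-differences, and dividing by $\kappa$, gives $|V^{(p)}(\phi^*)-V^{(p)}(\pi')-(V_n^{(p)}(\phi^*)-V_n^{(p)}(\pi'))|\lesssim\sqrt{\cD(\Pi)}\,\alpha/\kappa+\alpha^2/\kappa$, as claimed. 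The main obstacle is the bookkeeping in the first two steps: checking that the three-branch definition of $V_n^{(p)}$ collapses, on an abstention-projected policy, into the two-term template in which \emph{both} coefficient functions are $\pi$-independent (this is what keeps the effective complexity at $d$, not $2d$), and being careful that the quantity controlling the localization is the disagreement probability between the underlying \emph{binary} generators $\pi_\phi,\pi$ rather than between their abstaining projections, so that it is bounded by $\cD(\Pi)$. A secondary point is the data-dependence of $\widetilde\Pi$ and of the population-defined $\phi^*$, which is handled by conditioning on the fold used to build $\widehat\pi$ and $\widehat\Pi$ before applying \cref{lem:main_conc} on the independent fold.
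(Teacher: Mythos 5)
Your proposal is correct and follows essentially the same route as the paper: both define the class of per-sample scores indexed by $\pi\in\Pi$ in which the two coefficient functions (the IPW score of the fixed $\pihat$ and the abstention reward) are $\pi$-independent, observe that the agreement-indicator class $\{\indic{\pi(x)=\pihat(x)}:\pi\in\Pi\}$ has VC dimension $d$, apply \cref{lem:main_conc} with a fixed comparator, and bound the variance proxy by $\cD(\Pi)$ via the fact that two abstention projections w.r.t.\ the same $\pihat$ disagree exactly where the underlying binary policies do. Your extra bookkeeping (verifying $\E[h_{\pi'}]=\kappa V^{(p)}(\pi')$ and the conditioning on the first fold to treat $\pihat$, $\widehat\Pi$, and $\phi^*$ as fixed) only makes explicit what the paper leaves implicit.
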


\begin{proof}
Define the function class induced by the disagreements between $\pihat$ and any  $\pi \in \Pi$ 
\[
\cF_\Pi &= \Bigg\{ (x,d, y) \rightarrow \indic{\pi(x)\neq \pihat(x)}  \l(\frac{1}{2}\l( \frac{\kappa yd}{p_o(x)}+\frac{\kappa y(1-d)}{1-p_o(x)}\r)+ p\r) \\
&~~~~~+ \indic{\pi(x)=\pihat(x)}\l( \pihat(x)\frac{\kappa yd}{p_o(x)}+(1-\pihat(x))\frac{\kappa y(1-d)}{1-p_o(x)}\r) : \pi\in\Pi \Bigg\}.
\]
Notice that the class $\mathcal{H}= \l\{\indic{\pi(x)=\pihat(x)}:\pi\in\Pi\r\}$ has VC dimension $d$ ($\pihat$ is a fixed function). We invoke \cref{lem:main_conc}  
with $\gamma_1(z) = \frac{1}{2}\l( \frac{\kappa yd}{p_o(x)}+\frac{\kappa y(1-d)}{1-p_o(x)}\r)+ p$ and $\gamma_2(z) = \pihat(x)\frac{\kappa yd}{p_o(x)}+(1-\pihat(x))\frac{\kappa y(1-d)}{1-p_o(x)} $ and $B = 2$.  Moreover, we have for any $f_{\pi_1}, f_{\pi_2} \in \cF_\Pi$ we have
\[
\E \abs{ f_{\pi_1} - f_{\pi_2} }&\le 2\; \E [\indic{\pi_1 \neq \pi_2} ] \tag{since $\frac{\kappa yd}{p_o(x)} \, , \, \frac{\kappa y(1-d)}{1-p_o(x)} \, , \, p \leq 1 $}\\
&= 2 \; \norm{{\pi_1}-{\pi_2}}{P_X, 1} \leq 2\cD(\Pi)
\]
Plugging this upper bound in the bound in \cref{lem:main_conc} gives us the required bound. 
\end{proof}

\begin{lemma} \label{lem:almost_optimal}
    With probability greater than $1-\delta$ any $\pi \in \Pihat$ satisfies 
    \[
        V(\pist) - V(\pi) \lesssim \frac{1}{\kappa}\l(\alpha \sqrt{\cD(\Pihat)} + \alpha^2\r)
    \]
    where $\cD(\Pihat) =  \sup_{\pi_1,\pi_2} \norm{\pi_1 - \pi}{P_X, 1} $.
\end{lemma}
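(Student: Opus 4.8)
The plan is to chain three facts — the empirical optimality of $\pihat$, the membership rule defining $\Pihat$, and the empirical-process bounds in Corollaries~\ref{lem:conc1} and~\ref{lem:conc2} — and then convert every empirical disagreement quantity into the population diameter $\cD(\Pihat)$. Throughout, $\const$ denotes the absolute constants produced by the concentration lemmas, and $c$ is the constant in Step~\ref{step_abs:almost_optimal} of Algorithm~\ref{algo:abs}; we take $c$ large enough that $c\ge\const$.

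\textbf{Step 0: the anchors $\pihat$ and $\pist$ lie in $\Pihat$.} That $\pihat\in\Pihat$ is immediate since $V_n(\pihat)-V_n(\pihat)=0$. For $\pist$: applying the empirical-norm version of Corollary~\ref{lem:conc1} with $\pi=\pihat$ and using in-class optimality $V(\pihat)\le V(\pist)$ gives, on a $1-\delta$ event,
\[
V_n(\pihat)-V_n(\pist)\;\le\; \frac{\const}{\kappa}\Big(\alpha\sqrt{\E_n\abs{f_{\pihat}-f_{\pist}}}+\alpha^2\Big),
\]
so $\pist$ satisfies the selection rule once $c\ge\const$. Hence both $\pihat,\pist\in\Pihat$, and consequently $\norm{\pihat-\pi}{P_X,1},\norm{\pi-\pist}{P_X,1}\le\cD(\Pihat)$ for every $\pi\in\Pihat$.

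\textbf{Step 1: the inequality chain.} Fix $\pi\in\Pihat$. By the empirical-norm version of Corollary~\ref{lem:conc1},
\[
V(\pist)-V(\pi)\;\le\;\big(V_n(\pist)-V_n(\pi)\big)+\frac{\const}{\kappa}\Big(\alpha\sqrt{\E_n\abs{f_\pi-f_{\pist}}}+\alpha^2\Big);
\]
since $\pihat$ maximizes $V_n$ over $\Pi$ we have $V_n(\pist)-V_n(\pi)\le V_n(\pihat)-V_n(\pi)$; and since $\pi\in\Pihat$, the membership rule gives $V_n(\pihat)-V_n(\pi)\le\frac{c}{\kappa}\big(\alpha^2+\alpha\sqrt{\E_n\abs{f_{\pihat}-f_\pi}}\big)$. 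Combining, $V(\pist)-V(\pi)$ is at most $\frac{\const}{\kappa}\big(\alpha^2+\alpha\sqrt{\E_n\abs{f_{\pihat}-f_\pi}}+\alpha\sqrt{\E_n\abs{f_\pi-f_{\pist}}}\big)$.

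\textbf{Step 2: from empirical disagreement to diameter, and conclusion.} The pointwise bound $\abs{f_{\pi_1}(z)-f_{\pi_2}(z)}\le\indic{\pi_1(x)\neq\pi_2(x)}$ (for each $z$ only one IPW term survives, and it is bounded by $1$) yields $\E\abs{f_{\pi_1}-f_{\pi_2}}\le\norm{\pi_1-\pi_2}{P_X,1}$. Applying Corollary~\ref{lem:conc2} to the pairs $(\pihat,\pi)$ and $(\pi,\pist)$ — legitimate since both pairs lie in $\Pihat\times\Pihat$ by Step~0 — gives
\[
\E_n\abs{f_{\pihat}-f_\pi}\;\le\;\cD(\Pihat)+\const\big(\alpha\sqrt{\cD(\Pihat)}+\alpha^2\big)\;\lesssim\;\cD(\Pihat)+\alpha^2,
\]
and identically for $\E_n\abs{f_\pi-f_{\pist}}$; using $\sqrt{a+b}\le\sqrt a+\sqrt b$ then gives $\alpha\sqrt{\E_n\abs{f_{\pihat}-f_\pi}},\ \alpha\sqrt{\E_n\abs{f_\pi-f_{\pist}}}\lesssim\alpha\sqrt{\cD(\Pihat)}+\alpha^2$. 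Substituting into Step~1 yields $V(\pist)-V(\pi)\lesssim\frac{1}{\kappa}\big(\alpha\sqrt{\cD(\Pihat)}+\alpha^2\big)$ uniformly over $\pi\in\Pihat$; a union bound over the finitely many events from Corollaries~\ref{lem:conc1} and~\ref{lem:conc2}, with $\delta$ rescaled by a constant, completes the argument.

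\textbf{Main obstacle.} The one genuinely delicate point is that $\Pihat$ is data-dependent, so $\cD(\Pihat)$ and the policies $\pi,\pihat,\pist$ at which we instantiate the concentration bounds are all random. This is exactly why we rely on the \emph{uniform} (localized, peeled) guarantees of Lemma~\ref{lem:main_conc} — and hence Corollaries~\ref{lem:conc1} and~\ref{lem:conc2} — which hold simultaneously over the whole VC class on a single good event and may therefore be evaluated at random elements. The only other bookkeeping subtlety is ensuring the algorithm's constant $c$ dominates the absolute constants of the concentration lemmas, which is what makes Step~0 (and hence the use of the diameter bound for the pair $(\pi,\pist)$) valid.
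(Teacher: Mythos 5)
Your proof is correct and follows essentially the same route as the paper: decompose $V(\pist)-V(\pi)$ into a concentration term plus the empirical gap, control the gap via the empirical maximality of $\pihat$ and the selection rule of Step~\ref{step_abs:almost_optimal}, then pass from empirical to population disagreement via Corollary~\ref{lem:conc2} and the pointwise bound $\abs{f_{\pi_1}-f_{\pi_2}}\le\indic{\pi_1\neq\pi_2}$ to reach $\cD(\Pihat)$. Your Step~0 (showing $\pist\in\Pihat$ so the diameter bound applies to the pair $(\pi,\pist)$) is the same argument the paper carries out at the start of the proof of Theorem~\ref{thm:abstention-rate}; including it here just makes the lemma self-contained.
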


\begin{proof}
For each $\pi\in\Pihat$, we obtain
    \begin{align}
            V(\pi^*) - V(\pi)
            &= V(\pist) - V(\pi) - \l(V_n(\pist) - V_n(\pi)\r) + \l(V_n(\pist) - V_n(\pi)\r) \nn \\
            &\lesssim \frac{\alpha}{\kappa} \sqrt{\E\abs{f_\pist - f_\pi}} + \frac{\alpha^2}{\kappa} + \l(V_n(\pist) - V_n(\pi)\r) &\text{(via \cref{lem:conc1})} \nn \\
            &\lesssim \frac{\alpha}{\kappa} \sqrt{\E\abs{f_\pist - f_\pi}} + \frac{\alpha^2}{\kappa} + \l(V_n(\pihat) - V_n(\pi)\r) &\text{(since $\pihat$ is the empirical maximizer)} \nn \\
            &\lesssim \tfrac{1}{\kappa}\l(\alpha \sqrt{\E\abs{f_\pist - f_\pi}} + \alpha \sqrt{\E_n\abs{f_\pihat - f_\pi}} + \alpha^2\r) \nn\\
            &\lesssim \tfrac{1}{\kappa}\l(\alpha \sqrt{\E\abs{f_\pist - f_\pi}} + \alpha \sqrt{\E\abs{f_\pihat - f_\pi}} + \alpha^2\r) &\text{(via \cref{lem:conc2})} \nn\\
            &\lesssim \tfrac{1}{\kappa}\l(\alpha \sqrt{\norm{\pist - \pi }{P_X,1}} + \alpha \sqrt{\norm{\pihat - \pi }{P_X,1}} + \alpha^2\r) \nn\\
            &\lesssim \tfrac{1}{\kappa}\l(\alpha \sqrt{\cD(\Pihat)} + \alpha^2\r).\nn 
    \end{align}
\end{proof}


\begin{proof}[Proof of \cref{thm:abstention-rate} ]

We first show that $\pi^* \in \Pihat$. With probability at least $1 - \delta$, we have
    \begin{align*}
        V_n(\pihat) - V_n(\pist)
        &\leq V_n(\pihat) - V_n(\pist) - \l( V(\pihat) - V(\pist) \r) \\
        &\leq  \alpha \sqrt{\E\abs{f_\pist - f_\pihat}} + \alpha^2 \quad \tag{via  \Cref{lem:conc1}} \\
        &\lesssim \alpha \sqrt{\E_n\abs{f_\pist - f_\pihat}} + \alpha^2 \quad \tag{via  \Cref{lem:conc2}}.
    \end{align*}
    Hence, $\pist \in \Pihat$ for an appropriate constant in  Step \ref{step_abs:almost_optimal}   in \cref{algo:abs}.
    
    For the second EWM on $\Pitil$, let $\phi^* \in \Pitil$ be the policy that maximizes the true value $V^{(p)}$, and let $\phi$ be an abstaining policy constructed from $\pi$ ($\phi(x) = *$ if $\indic{\pi(x) \neq \pihat(x)}$ and $\pihat(x)$ otherwise) where $\pi$ satisfies  
    \[
    \norm{\pi - \pihat}{P_X,1} \geq \frac{\cD(\Pihat)}{2}.
    \]
    Such a $\pi$ exists by the definition of $\cD(\Pihat)$ and the triangle inequality. Since $\pi$ and $\pihat$ are binary, by the definition of $f_\pi$ we have
    \begin{equation}
        \notag
        \E\abs{f_\pi-f_\pihat} = \kappa \E\abs{ (\pi(X)-\pihat(X))\left(\frac{YD}{p_o(x)}-\frac{Y(1-D)}{1-p_o(x)}\right)}
        = \kappa \E\l[ \indic{\pi(X) \neq \pihat(X)}\l( \frac{YD}{p_o(x)}+\frac{Y(1-D)}{1-p_o(x)}\r)\r].
    \end{equation}
    Thus, 
    \begin{align}
        V^{(p)}(\phi^*)
        &\geq V^{(p)}(\phi) \nonumber \\
        & = \E\l[\l(\frac{\pi(X)+\pihat(X)}{2}\r) \frac{YD}{p_o(x)} + \l(1-\frac{\pi(X)+\pihat(X)}{2}\r) \frac{Y(1-D)}{1-p_o(x)} + p \cdot \indic{\pi (X) \neq \pihat (X) } \r] \nn \\
        &\geq \E\l[\l(\frac{\pi(X)+\pihat(X)}{2}\r) \frac{YD}{p_o(x)} + \l(1-\frac{\pi(X)+\pihat(X)}{2}\r) \frac{Y(1-D)}{1-p_o(x)}\r] + p \cdot \norm{\pi - \pihat}{P_X,1}     \nn \\
        &= \frac{V(\pihat)+V(\pi)}{2} + \frac{p}{2} \cD(\Pihat) \nn 
    \end{align}
    Further using \cref{lem:almost_optimal} we have 
    \begin{align}
    V^{(p)}(\phi^*) - V(\pist)&\gtrsim    p ~\cD(\Pihat) - \frac{\alpha}{\kappa}\sqrt{\cD(\Pihat)}+ \frac{\alpha^2}{\kappa}.\nn
    \end{align}
    
Since $\pitil$ maximizes $V_n^{(p)}$, using \Cref{lem:conc3} we get
    \[
    V^{(p)}(\pitil) - V^{(p)}(\phi^*) \gtrsim  - \frac{\alpha\sqrt{\cD(\Pihat)}+\alpha^2}{\kappa},
    \]
    which implies
    \begin{equation}
            V^{(p)}(\pitil) - V(\pist) \gtrsim   p ~\cD(\Pihat) - \frac{\alpha}{\kappa}\sqrt{\cD(\Pihat)}+ \frac{\alpha^2}{\kappa}. \label{eq:final_quadratic}
    \end{equation}

    Minimizing the quadratic with respect to $\sqrt{\cD(\Pihat)}$,
    \[
    V(\pist) - V^{(p)}(\pitil)   \lesssim  \frac{\alpha^2}{\kappa^2 p},
    \]
    Finally, substituting $\alpha \coloneqq  \sqrt{\frac{\,d \log\!\frac{n}{d} \;+\; \log\!\frac{1}{\delta}\,}{n}}$ gives us the desired regret bound. 
\end{proof}

\begin{proof}[Proof of  \cref{prop:p-zero-benchmark}]
    Substituting $p=0$ in \cref{eq:final_quadratic} and noting that $\cD(\Pihat) \leq \const$ for some constant $\const$, proves the result.
\end{proof}

\section{Doubly Robust Abstention: Additional Details and Proofs}\label{appendix:dr}

\begin{algorithm}[t]
\caption{Unkown Propensities: DR learner. }
\label{algo:abs_dr}
\begin{small}
\begin{algorithmic}[1]
\STATE \textbf{Input:} Samples $\{(X_i,D_i,Y_i)\}_{i=1}^n$, policy class $\Pi$, overlap $\kappa$, confidence $\delta\in(0,1)$, bonus $p$, VC dimension $d$, Propensity estimate $\hat{p}$, Outcome regression estimate $\hat g$.
\STATE \textbf{Set:} $\alpha_\dr \leftarrow \sqrt{\tfrac{d \log\!\frac{n}{d} + \log\!\frac{1}{\delta}}{n}} + \err_\dr$.
\STATE \textbf{Split:} Partition the samples into two sets of size $n/2$: $\mathcal{D}_1, \mathcal{D}_2$.
\STATE \textbf{EWM:} $\displaystyle
\pihat = \argmax_{\pi \in \Pi} \; \vdrn(\pi)$ \quad (computed on $\mathcal{D}_1$).\label{step_abs_dr:first_ewm}
\STATE \textbf{Select near-optimal policies:}
{\small
\[
\widehat{\Pi} \!\leftarrow \!\Big\{ \pi \in \Pi \!:\;
\vdrn(\pihat) - \vdrn(\pi)
\le \frac{\const}{\kappa}\Big(\alpha_\dr^2 + \alpha_\dr \sqrt{\E_n \abs{ \pihat - \pi }} \Big) \Big\}.
\]}\label{step_abs_dr:almost_optimal}
\STATE \textbf{Abstention projection:} For each $\pi \in \widehat{\Pi}$, define
\[
{\pi'}(X) =
\begin{cases}
\pi(X), & \text{if } \pi(X) = \pihat(X),\\
*, & \text{otherwise},
\end{cases}
\quad
\widetilde{\Pi} \leftarrow \{\, {\pi'} : \pi \in \widehat{\Pi} \,\}.
\]\label{step_abs_dr:abs_projection}
\STATE \textbf{EWM with abstention:} $\displaystyle
\pitil \in \mathop{\mathrm{arg\,max}}_{{\pi}\in \widetilde{\Pi}} \; \vdrnp({\pi})$
\quad (evaluate $\vdrnp$ on $\mathcal{D}_2$).\label{step_abs_dr:second_ewm}
\STATE \textbf{Return} $\pitil$.
\end{algorithmic}
\end{small}
\end{algorithm}

\cref{algo:abs_dr} is identical in structure to the known–propensity version, with two key edits.

(i) \emph{Scores:} every occurrence of the outcome $Y$ in the value computations is replaced by the DR pseudo–outcome
$\wh{\varphi}$ defined below, and we optimize the DR objectives $\vdrn(\cdot)$ and $\vdrnp(\cdot)$ in Steps~\ref{step_abs_dr:first_ewm} and \ref{step_abs_dr:second_ewm}.

(ii) \emph{Radius:} the selection radius incorporates nuisance error,
\[
\alpha_\dr \;\leftarrow\; \sqrt{\tfrac{d\log\!\frac{n}{d}+\log\!\frac{1}{\delta}}{n}} \;+\; \err_\dr,
\]
so the near–optimal set in Step~\ref{step_abs_dr:almost_optimal} uses $\alpha_\dr$ and the  disagreement between with the EWM $\E_n|\pihat-\pi|$. Where $\errdr$ is any upper bound on product error in the nuisance: $\err_{\dr} \; \geq \; \E\!\Big[(\wh p(X)-p_o(X))^2\textstyle\sum_{d=0}^1(\wh g(d,X)-g_o(d,X))^2\Big]^{1/2}$.
In practice, we simply increase the constant in Step \ref{step_abs:almost_optimal} of Algorithm~\ref{algo:abs}; when the nuisance product error is $o_p(n^{-1/2})$, this yields the same rate as Theorem~\ref{thm:abstention-rate}. Such product rates are a common assumption in the doubly robust literature \citep{foster2023orthogonal} and hold for several function classes as discussed in \cite[Section 5 and Appendix E]{foster2023orthogonal}. 
Moreover, a natural choice of $\errdr$ is the $L_4$ error rates:
\[
\|\wh p(X)-p_o(X)\|_{P_X,4}\cdot \sum_{d\in\{0,1\}} \|\wh g(d,X)-g_o(d,X)\|_{P_X,4}.
\]
The minimax $L_4$ error rates are well-understood for many nonparametric classes of interest, such as smooth classes \citep{stone1980optimal,stone1982optimal}, Holder classes \citep{lepskii1992asymptotically}, Besov classes \citep{donoho1998minimax} and convex function classes \citep{guntuboyina2015global}.

\paragraph{Pseudo–outcomes and DR objectives.}
For $d\in\{0,1\}$ and nuisance estimates $\wh g(d,x)$, $\wh p(x)$, define
\[
\wh{\varphi}(x,d,y)
\;:=\;
\wh g(d,x)
\;+\;
\Big(\tfrac{d \cdot D}{\wh p(x)}+\tfrac{(1-d)\cdot (1-D)}{1-\wh p(x)}\Big)\big(y-\wh g(d,x)\big).
\]
Under $g_o(\cdot,x)\in[0,1]$ and $p_o(x)\in[\kappa,1-\kappa]$, the DR value functionals are
\[
\vdr(\pi)&=\E\!\big[\pi(X)\wh\varphi(X,1,Y)+(1-\pi(X))\wh\varphi(X,0,Y)\big],\quad\\
\vdrp(\pi)&=\E\!\Big[\indic{\pi\neq *}\,(\cdot)+\indic{\pi=*}\big(\tfrac{\wh\varphi(X,1,Y)+\wh\varphi(X,0,Y)}{2}+p\big)\Big],
\]

Unlike the IPW case, $\vdrn$ and $\vdrnp$ are, in general, \emph{biased} plug–ins; we control this bias by introducing $\err_{\dr}$ (or an upper bound on it).  The concentration lemmas mirror the known–propensity proofs. The resulting bounds (e.g., \cref{lem:conc_dr1,lem:conc_dr2} and their abstention analog \cref{lem:conc_dr3}) yield the same $n,d,\delta$–dependence as in the IPW case, up to constants.  The bias induced by estimating nuisances in $\vdrn$ and $\vdrnp$ is quantified by \cref{lem:did-V,lem:did-Vp}, which bound the drift from $V$ and $V^{(p)}$ in terms of $\err_{\dr}$ and the policy disagreement.

\subsubsection{Proof of \cref{thm:dr-abstention}}

\begin{lemma}\label{lem:conc_dr1}
For any policy $\pi$, with probability at least $1-\delta$, 
\[
\l| V_\dr(\pi^*) - V_{\dr}(\pi) - \l( V_{\dr, n}(\pi^*) - V_{\dr,n}(\pi) \r) \r| \leq \frac{\const}{\kappa} \l( \alpha \sqrt{\E\l| \pi - {\pi^*} \r|} + \alpha^2 \r).
\]
\end{lemma}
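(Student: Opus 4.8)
\textbf{Proof proposal for \cref{lem:conc_dr1}.}

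The plan is to run exactly the argument behind \cref{lem:conc1} (the IPW concentration bound), with the normalized IPW score replaced by the doubly robust score. Conditionally on the (sample-independent) nuisance estimates $\wh g,\wh p$, set
\[
f_\pi^{\dr}(z)\;\coloneqq\;\pi(x)\,\wh\varphi(x,1,y)\;+\;\bigl(1-\pi(x)\bigr)\,\wh\varphi(x,0,y),
\]
so that $\vdr(\pi)=\E\bigl[f_\pi^{\dr}(Z)\bigr]$ and $\vdrn(\pi)=\E_n\bigl[f_\pi^{\dr}(Z)\bigr]$; consequently the left-hand side of the lemma is precisely $\bigl|(\E-\E_n)\bigl(f_{\pist}^{\dr}-f_\pi^{\dr}\bigr)\bigr|$. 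Since $\wh p$ is taken to respect overlap (clip it into $[\kappa,1-\kappa]$ if needed), $Y\in[0,1]$ and $\wh g(d,\cdot)\in[0,1]$, the functions $\gamma_1(z)\coloneqq\wh\varphi(x,1,y)$ and $\gamma_2(z)\coloneqq\wh\varphi(x,0,y)$ are a.s.\ bounded by $B\coloneqq c/\kappa$ for an absolute constant $c$. Crucially, because the nuisances are independent of $Z_{1:n}$, conditionally on them $\gamma_1,\gamma_2$ are \emph{fixed} functions, so $\cF_\Pi^{\dr}=\{f_\pi^{\dr}:\pi\in\Pi\}$ is exactly of the form to which \cref{lem:main_conc} applies (with VC subgraph dimension at most $d$ via \cref{lem:vc-sub}).

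Applying \cref{lem:main_conc} to $\cF_\Pi^{\dr}$ with reference element $f_{\pist}^{\dr}$ gives, with probability at least $1-\delta$,
\[
\bigl|(\E-\E_n)\bigl(f_{\pist}^{\dr}-f_\pi^{\dr}\bigr)\bigr|\;\le\;\const\Bigl(\alpha\sqrt{B\,\E\bigl|f_{\pist}^{\dr}-f_\pi^{\dr}\bigr|}\;+\;B\alpha^2\Bigr).
\]
It remains to convert $\E|f_{\pist}^{\dr}-f_\pi^{\dr}|$ into the policy disagreement. Since $\pist,\pi$ are $\{0,1\}$-valued,
\[
f_{\pist}^{\dr}(z)-f_\pi^{\dr}(z)\;=\;\bigl(\pist(x)-\pi(x)\bigr)\bigl(\wh\varphi(x,1,y)-\wh\varphi(x,0,y)\bigr),
\]
and $|\wh\varphi(x,1,y)-\wh\varphi(x,0,y)|\le 2B\le c/\kappa$, so $\E|f_{\pist}^{\dr}-f_\pi^{\dr}|\le (c/\kappa)\,\E|\pist-\pi|$. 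Substituting this and $B\le c/\kappa$ into the display above and absorbing absolute constants (note $\sqrt{B\cdot(c/\kappa)}=\Theta(1/\kappa)$), we obtain $\bigl|(\E-\E_n)(f_{\pist}^{\dr}-f_\pi^{\dr})\bigr|\le\frac{\const}{\kappa}\bigl(\alpha\sqrt{\E|\pist-\pi|}+\alpha^2\bigr)$, which is the claim.

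I do not expect a genuine obstacle: the localization/peeling machinery is inherited wholesale from \cref{lem:main_conc}, and the independence of the nuisances from the sample is exactly what licenses that lemma verbatim. The only new bookkeeping relative to the IPW case is (i) verifying uniform boundedness of the pseudo-outcome, which is what forces the standard clipping of $\wh p$ to $[\kappa,1-\kappa]$, and (ii) tracking the $1/\kappa$ scale of $B$ through the square-root term so that the final bound carries a $1/\kappa$ (not $1/\kappa^2$) in front of $\alpha\sqrt{\cdot}$. Note also that this lemma only controls fluctuations of $\vdrn$ around the \emph{DR population functional} $\vdr$; the bias $\vdr-V$ coming from imperfect nuisances is a separate matter, handled downstream through $\errdr$ in \cref{lem:did-V} (and its abstention analogue), and is why the algorithm's selection radius uses $\alpha_\dr=\alpha+\errdr$ rather than $\alpha$.
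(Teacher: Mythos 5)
Your proposal is correct and follows essentially the same route as the paper's proof: apply \cref{lem:main_conc} to the class of DR scores with $\gamma_1=\wh\varphi(\cdot,1,\cdot)$, $\gamma_2=\wh\varphi(\cdot,0,\cdot)$, boundedness constant $B\asymp 1/\kappa$ (the paper leaves the overlap of $\wh p$ implicit where you make the clipping explicit), and then convert $\E|f^{\dr}_{\pist}-f^{\dr}_\pi|$ into $\E|\pist-\pi|$ via the pointwise bound $|f^{\dr}_{\pist}-f^{\dr}_\pi|\le B|\pist-\pi|$. No substantive differences.
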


\begin{proof}
Define the DR score with fixed nuisances (or computed on a disjoint fold):
\[
f^{\dr}_{\pi}(Z) \;\coloneqq\; \pi(X)\,\wh\varphi(X,1,Y) + \big(1-\pi(X)\big)\,\wh\varphi(X,0,Y),
\]
so that $V_\dr(\pi)=\E[f^{\dr}_{\pi}(Z)]$ and $V_{\dr,n}(\pi)=\E_n[f^{\dr}_{\pi}(Z)]$.  
Under bounded outcomes $Y\in[0,1]$ and strict overlap $p_o(X)\in[\kappa,1-\kappa]$, each pseudo-outcome is a.s.\ bounded:
\[
\big|\wh\varphi(X,1,Y)\big|,\ \big|\wh\varphi(X,0,Y)\big| \;\le\; \frac{C}{\kappa}
\quad\text{a.s.}
\]
 set $B\coloneqq 2/\kappa$. Apply \cref{lem:main_conc} to the class
\[
\cF_\Pi \;=\; \Big\{\, z \mapsto \pi(x)\,\wh\varphi(x,1,y)+\big(1-\pi(x)\big)\,\wh\varphi(x,0,y):\ \pi\in\Pi \,\Big\},
\]
with $\gamma_1=\wh\varphi(\cdot,1,\cdot)$, $\gamma_2=\wh\varphi(\cdot,0,\cdot)$ and the fixed $f^{\dr}_{\pi^*}$. We obtain, with probability at least $1-\delta$,
\[
\Big|\, \E\big[f^{\dr}_{\pi^*}-f^{\dr}_{\pi}\big] - \E_n\big[f^{\dr}_{\pi^*}-f^{\dr}_{\pi}\big] \,\Big|
\;\le\; \const \Big( \alpha\sqrt{B\,\E\!\big| f^{\dr}_{\pi^*}-f^{\dr}_{\pi}\big|} + B\alpha^2 \Big).
\]
Since $\big| f^{\dr}_{\pi^*}-f^{\dr}_{\pi} \big| \le B\,\big|\pi^*-\pi\big|$ pointwise, we have
$\E| f^{\dr}_{\pi^*}-f^{\dr}_{\pi} | \le B\,\E|\pi^*-\pi|$, and the RHS becomes
\[
  \alpha\sqrt{B^2\,\E|\pi^*-\pi|} + B\alpha^2 
\lesssim \frac{\const}{\kappa}\Big( \alpha\sqrt{\E|\pi^*-\pi|} + \alpha^2 \Big),
\]
which is the claim.
\end{proof}

\begin{lemma}\label{lem:conc_dr2}
With probability at least $1-\delta$, for any $f_{\pi_1},f_{\pi_2}\in\cF_\Pi$ the following inequalities hold:
\begin{align*}
\Big|\, \E_n\big| {\pi_1} - {\pi_2} \big| - \E\big| {\pi_1} - {\pi_2} \big| \,\Big|
&\le \const \Big( \alpha \sqrt{\E\big| {\pi_1} - {\pi_2} \big|} + \alpha^2 \Big),\\
\Big|\, \E_n\big| {\pi_1} - {\pi_2} \big| - \E\big| {\pi_1} - {\pi_2} \big| \,\Big|
&\le \const \Big( \alpha \sqrt{\E_n\big| {\pi_1} - {\pi_2} \big|} + \alpha^2 \Big).
\end{align*}
\end{lemma}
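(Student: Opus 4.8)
The plan is to observe that \cref{lem:conc_dr2} is, structurally, the doubly–robust restatement of \cref{lem:conc2}: since $\pi_1,\pi_2$ are $\{0,1\}$-valued, the quantity $|\pi_1(X)-\pi_2(X)|=\indic{\pi_1(X)\neq\pi_2(X)}$ depends only on the covariate $X$ and involves neither the outcome $Y$ nor the pseudo-outcome $\wh\varphi$, so the nuisance estimates play no role whatsoever. Consequently the proof reduces to a localized uniform deviation bound for the disagreement–indicator class
\[
\mathcal{H}=\bigl\{\,x\mapsto\indic{\pi_1(x)\neq\pi_2(x)}:\ \pi_1,\pi_2\in\Pi\,\bigr\},
\]
which has VC dimension at most $10d$ \citep{bousquet2021fast,vidyasagar2013learning}.

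First I would apply \cref{lem:main_conc} with $\mathcal{H}$ in place of $\Pi$, taking $\gamma_1\equiv 1$, $\gamma_2\equiv 0$ (so $B=1$) and fixing the anchor $f_{\pi^*}$ to be the constant–zero function, which lies in the induced class since $\indic{\pi\neq\pi}=0$. Because \cref{lem:main_conc} delivers its bound simultaneously over the whole class, this yields, on a single event of probability at least $1-\delta$ and uniformly over all pairs $\pi_1,\pi_2\in\Pi$,
\[
\bigl|\,\E_n|\pi_1-\pi_2|-\E|\pi_1-\pi_2|\,\bigr|\;\le\;\const\Bigl(\alpha\sqrt{\E|\pi_1-\pi_2|}+\alpha^2\Bigr),
\]
where the factor $10d$ in the VC dimension of $\mathcal{H}$ is absorbed into the absolute constant inside $\alpha$ (it only rescales the logarithmic term and leaves the rate unchanged). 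This is precisely the first displayed inequality.

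Then I would obtain the second inequality by the standard self–bounding rearrangement. Abbreviating $a=\E|\pi_1-\pi_2|$ and $\hat a=\E_n|\pi_1-\pi_2|$, the first inequality gives $a\le \hat a+\const(\alpha\sqrt{a}+\alpha^2)$, a quadratic inequality in $\sqrt{a}$ whose solution is $\sqrt{a}\le \const(\sqrt{\hat a}+\alpha)$, hence $a\le \const(\hat a+\alpha^2)$. Substituting this bound on $a$ into the right–hand side of the first inequality replaces $\sqrt{a}$ by $\sqrt{\hat a}+\alpha$ and produces
\[
\bigl|\,\E_n|\pi_1-\pi_2|-\E|\pi_1-\pi_2|\,\bigr|\;\le\;\const\Bigl(\alpha\sqrt{\E_n|\pi_1-\pi_2|}+\alpha^2\Bigr),
\]
which is the second claim.

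I do not expect a genuine obstacle here; the argument is essentially bookkeeping and can be quoted almost verbatim from the proof of \cref{lem:conc2}. The only two points that deserve a line of care are (i) verifying that the constant–zero function belongs to the class induced from $\mathcal{H}$, so that \cref{lem:main_conc} is invoked with a legitimate fixed anchor, and (ii) confirming that the $10d$ VC bound for $\mathcal{H}$ affects only the logarithmic factor inside $\alpha$ and hence does not change the stated rate. The key conceptual point—worth stating explicitly—is that the DR modification (replacing $Y$ by $\wh\varphi$) never touches the covariate–only quantity $|\pi_1-\pi_2|$, which is why this lemma is identical to its IPW counterpart.
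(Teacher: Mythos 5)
Your proof is correct, and it uses the same key tool as the paper (\cref{lem:main_conc} applied to a covariate-only class, after the correct observation that the DR pseudo-outcomes never enter the quantity $|\pi_1-\pi_2|$), but the instantiation differs. The paper applies \cref{lem:main_conc} to the identity class $\{z\mapsto \pi(x)\cdot 1+(1-\pi(x))\cdot 0:\pi\in\Pi\}$ with $B=1$, anchoring at $f_{\pi_2}$ and taking $f_{\pi_1}$ as the varying element, and then asserts both displayed inequalities "directly" (first with the population square-root term, then with the empirical one). You instead run the argument on the disagreement-indicator class $\mathcal{H}=\{\indic{\pi_1\neq\pi_2}:\pi_1,\pi_2\in\Pi\}$ with the zero function as anchor, which is exactly the route the paper itself takes for the IPW analogue \cref{lem:conc2}, and you make the passage to the empirical square-root version explicit via the quadratic (self-bounding) rearrangement $a\le \hat a+\const(\alpha\sqrt a+\alpha^2)\Rightarrow \sqrt a\le\const(\sqrt{\hat a}+\alpha)$. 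Your route costs a VC dimension of $10d$ rather than $d$ (harmless, as you note, since it only perturbs the logarithmic factor inside $\alpha$), but it buys two things: the function class you concentrate is literally $\indic{\pi_1\neq\pi_2}=|\pi_1-\pi_2|$, so the bound applies verbatim to the quantity in the statement and holds uniformly over pairs, whereas the paper's choice controls the deviation of the signed difference $\pi_2-\pi_1$ and leaves the identification with the absolute disagreement, as well as the empirical-norm variant, implicit; and your rearrangement step supplies the missing justification for that second inequality. So this is a sound, slightly more careful variant of the intended argument rather than a different proof strategy.
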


\begin{proof}
Apply \cref{lem:main_conc} to the class
\[
\cF_\Pi^{\mathrm{id}} \;=\; \Big\{\, z \mapsto \pi(x)\cdot 1 + \big(1-\pi(x)\big)\cdot 0 :\ \pi\in\Pi \,\Big\},
\]
with $\gamma_1\equiv 1$, $\gamma_2\equiv 0$, hence $B=1$ and $f_\pi(z)=\pi(x)$.  
Fix $f_{\pi^*}$ with $\pi^*=\pi_2$ and take $f_\pi$ with $\pi=\pi_1$; then
\[
|f_{\pi_1}(Z)-f_{\pi_2}(Z)| \;=\; |\pi_1(X)-\pi_2(X)| \;=\; \indic{\pi_1(X)\neq \pi_2(X)}.
\]
The two displayed bounds follow directly from \cref{lem:main_conc} (first with population square-root term, then with the empirical one).
\end{proof}

\begin{lemma}
    For any policy $\pi$, with probability at least $1-\delta$, we have
\[
\l| V_\dr(\pi^*) - V_\dr(\pi) - \l( V_{\dr,n}(\pi^*) - V_{\dr,n}(\pi) \r) \r| \leq \frac{\const}{\kappa} \l( \alpha \sqrt{\E_n\l| \pi - {\pi^*} \r|} + \alpha^2 \r).
\]
\end{lemma}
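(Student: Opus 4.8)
The plan is to reproduce, for the doubly robust scores, the standard population-norm $\to$ empirical-norm conversion that was used for the IPW estimator in \cref{appendix:main}. The starting point is \cref{lem:conc_dr1}, which already gives, on an event $\mathcal{E}_1$ of probability at least $1-\delta/2$,
\[
\l| V_\dr(\pi^*) - V_\dr(\pi) - \l( V_{\dr,n}(\pi^*) - V_{\dr,n}(\pi) \r) \r| \;\le\; \frac{\const}{\kappa}\l(\alpha\sqrt{\E|\pi-\pi^*|} + \alpha^2\r),
\]
so all that remains is to replace the population disagreement $\E|\pi-\pi^*|$ by its empirical analogue $\E_n|\pi-\pi^*|$ inside the square root.

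To do this I would invoke the second inequality of \cref{lem:conc_dr2} with $(\pi_1,\pi_2)=(\pi,\pi^*)$: on an event $\mathcal{E}_2$ of probability at least $1-\delta/2$,
\[
\E|\pi-\pi^*| \;\le\; \E_n|\pi-\pi^*| + \const\l(\alpha\sqrt{\E_n|\pi-\pi^*|}+\alpha^2\r).
\]
Taking square roots and using $\sqrt{a+b+c}\le\sqrt a+\sqrt b+\sqrt c$ bounds $\alpha\sqrt{\E|\pi-\pi^*|}$ by $\const\l(\alpha\sqrt{\E_n|\pi-\pi^*|} + \alpha^{3/2}(\E_n|\pi-\pi^*|)^{1/4} + \alpha^2\r)$; the only term not already of the target shape is the middle one, which I would absorb by AM--GM, writing $\alpha^{3/2}(\E_n|\pi-\pi^*|)^{1/4} = \l(\alpha\sqrt{\E_n|\pi-\pi^*|}\cdot\alpha^2\r)^{1/2}\le\tfrac12\alpha\sqrt{\E_n|\pi-\pi^*|}+\tfrac12\alpha^2$. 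Substituting back into the bound from \cref{lem:conc_dr1} and working on $\mathcal{E}_1\cap\mathcal{E}_2$ (a union bound, which only inflates absolute constants) yields the stated inequality with a single new constant $\const$.

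There is no genuinely hard step: the statement is the doubly robust mirror of the empirical-norm form of \cref{lem:conc1}, and both of its ingredients, \cref{lem:conc_dr1} and \cref{lem:conc_dr2}, are already in hand. The points that need care are purely bookkeeping. First, $V_\dr$ and $V_{\dr,n}$ are both defined through the \emph{same} fixed (sample-independent) estimated nuisance $\wh\varphi$, so $\E[f^{\dr}_\pi]=V_\dr(\pi)$ holds exactly and \emph{no} nuisance-bias term enters this purely-concentration statement; the bias surfaces only when $V_\dr$ is later compared to the true value $V$, which is handled separately in \cref{lem:did-V}. Second, the two AM--GM absorptions and the union bound each generate constants that must be folded into the final $\const/\kappa$ prefactor, exactly as in the known-propensity argument.
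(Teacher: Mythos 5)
Your proposal is correct and follows essentially the same route as the paper: the paper's proof is a one-line combination of \cref{lem:conc_dr1} with the second inequality of \cref{lem:conc_dr2} to swap the population disagreement for its empirical counterpart, which is precisely your argument, with the square-root/AM--GM absorption and union-bound bookkeeping (correctly) spelled out rather than left implicit.
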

\begin{proof}
Combine \cref{lem:conc_dr1} with \cref{lem:conc_dr2} by replacing the population square-root term $\sqrt{\E|\pi-\pi^*|}$ in \cref{lem:conc_dr1} with its empirical counterpart via the second inequality of \cref{lem:conc_dr2}.
\end{proof}

Recall the definition, 
\[
\errdr \;\coloneqq\;
\Bigg\{\E\!\Big[\big(\hat p(X)-p_o(X)\big)^2\;\sum_{d\in\{0,1\}}\big(\hat g(d,X)-g_o(d,X)\big)^2\Big]\Bigg\}^{1/2}.
\]

\begin{lemma}\label{lem:did-V}
For any two binary policies $\pi_1,\pi_2\in\Pi$,
\[
\big(V_{\dr}(\pi_1)-V_{\dr}(\pi_2)\big)\;-\;\big(V(\pi_1)-V(\pi_2)\big)
\;\le\; 2\,\kappa^{-1}\,\errdr\;\|\pi_1-\pi_2\|_{P_X,2}.
\]
\end{lemma}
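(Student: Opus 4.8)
The plan is to reduce the claim to an exact conditional-bias formula for the doubly-robust pseudo-outcome, followed by a single application of Cauchy--Schwarz. Since the nuisances $\wh g,\wh p$ are assumed independent of the sample, throughout the argument we treat them as fixed deterministic functions, and we use (as is standard, obtained by clipping) that the estimated propensity also satisfies overlap, $\wh p(x)\in[\kappa,1-\kappa]$.

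\textbf{Step 1 (conditional bias).} Fix $X=x$. By unconfoundedness (Assumption~\ref{assump:data}(i)), $\E[D\,Y\mid X=x]=p_o(x)g_o(1,x)$ and $\E[(1-D)Y\mid X=x]=(1-p_o(x))g_o(0,x)$, so a short computation gives the exact drift identity
\[
\E\big[\wh\varphi(X,d,Y)\mid X=x\big]=g_o(d,x)+b_d(x),\qquad
b_1(x)=\big(\wh g(1,x)-g_o(1,x)\big)\frac{\wh p(x)-p_o(x)}{\wh p(x)},\quad
b_0(x)=\big(\wh g(0,x)-g_o(0,x)\big)\frac{p_o(x)-\wh p(x)}{1-\wh p(x)}.
\]
Taking expectations via the tower property and using $V(\pi)=\E[\pi(X)g_o(1,X)+(1-\pi(X))g_o(0,X)]$ for binary $\pi$ yields $V_\dr(\pi)-V(\pi)=\E[\pi(X)b_1(X)+(1-\pi(X))b_0(X)]$.

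\textbf{Step 2 (difference-in-differences and Cauchy--Schwarz).} The right-hand side above is affine in $\pi$, so differencing the identity at $\pi_1$ and $\pi_2$ cancels the $b_0$ offset:
\[
\big(V_\dr(\pi_1)-V_\dr(\pi_2)\big)-\big(V(\pi_1)-V(\pi_2)\big)
&=\E\big[(\pi_1(X)-\pi_2(X))\,(b_1(X)-b_0(X))\big] \\
&\le\|\pi_1-\pi_2\|_{P_X,2}\,\big(\|b_1\|_{P_X,2}+\|b_0\|_{P_X,2}\big),
\]
using Cauchy--Schwarz and the triangle inequality in $L^2(P_X)$. \textbf{Step 3 (bounding the bias).} By overlap for both $p_o$ and $\wh p$, $|b_d(x)|\le\kappa^{-1}\,|\wh p(x)-p_o(x)|\,|\wh g(d,x)-g_o(d,x)|$. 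Squaring, summing over $d\in\{0,1\}$, and using $\sqrt a+\sqrt b\le\sqrt{2(a+b)}$,
\[
\|b_1\|_{P_X,2}+\|b_0\|_{P_X,2}
\le\sqrt2\,\kappa^{-1}\Big(\E\big[(\wh p(X)-p_o(X))^2\,\textstyle\sum_{d\in\{0,1\}}(\wh g(d,X)-g_o(d,X))^2\big]\Big)^{1/2}
=\sqrt2\,\kappa^{-1}\errdr\le 2\kappa^{-1}\errdr .
\]
Combining with Step 2 gives the claim.

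The only real care-point is getting the conditional-bias identity of Step 1 exactly right --- tracking which propensity sits in each denominator and invoking unconfoundedness at the right moment --- together with the (standard, via clipping) assumption that $\wh p$ obeys the same overlap bound as $p_o$; everything after that is Cauchy--Schwarz and bookkeeping. The same computation, specialized to the abstaining value, will also drive the companion bound \cref{lem:did-Vp}.
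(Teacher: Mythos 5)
Your proof is correct and follows essentially the same route as the paper's: compute the conditional bias of the DR score via the tower property and unconfoundedness, difference across $\pi_1,\pi_2$, bound the propensity ratios by $\kappa^{-1}|\wh p - p_o|$, and finish with Cauchy--Schwarz (your $L^2$ triangle inequality versus the paper's $(a+b)^2\le 2(a^2+b^2)$ is an immaterial bookkeeping difference, both yielding the factor $2$). Your explicit remark that $\wh p$ must itself satisfy overlap (e.g.\ by clipping) is the same condition the paper uses implicitly when it invokes ``strict overlap'' to bound $|1-p_o/\wh p|$.
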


\begin{proof}
By the definitions of $V_{\dr}$ and $V$,
\begin{align*}
V_{\dr}(\pi)
&= \E\!\Big[\pi(X)\!\Big(\hat g(1,X)+\tfrac{\indic{D=1}}{\hat p(X)}\big(Y-\hat g(1,X)\big)\Big)
\\[-0.25em]&\hspace{3.2em}
+\ (1-\pi(X))\!\Big(\hat g(0,X)+\tfrac{\indic{D=0}}{1-\hat p(X)}\big(Y-\hat g(0,X)\big)\Big)\Big] \\
&= \E\!\Big[\pi(X)\!\Big(\hat g(1,X)+\tfrac{p_o(X)}{\hat p(X)}\big(g_o(1,X)-\hat g(1,X)\big)\Big) \\
&\hspace{3.2em}
+\ (1-\pi(X))\!\Big(\hat g(0,X)+\tfrac{1-p_o(X)}{1-\hat p(X)}\big(g_o(0,X)-\hat g(0,X)\big)\Big)\Big],\\
V(\pi) &= \E\!\big[\pi(X)g_o(1,X)+(1-\pi(X))g_o(0,X)\big].
\end{align*}
Subtracting the two displays and taking the difference between $\pi_1$ and $\pi_2$ gives
\begin{align*}
&\big(V_{\dr}(\pi_1)-V_{\dr}(\pi_2)\big) - \big(V(\pi_1)-V(\pi_2)\big) \\
&= \E\Big[(\pi_1-\pi_2)(X)\Big(1-\tfrac{p_o(X)}{\hat p(X)}\Big)\big(g_o(1,X)-\hat g(1,X)\big) \\
&\qquad\qquad\ + (\pi_2-\pi_1)(X)\Big(1-\tfrac{1-p_o(X)}{1-\hat p(X)}\Big)\big(g_o(0,X)-\hat g(0,X)\big)\Big].
\end{align*}
Taking absolute values and using the triangle inequality,
\begin{align*}
\cdots
&\le \E\Big[|\pi_1-\pi_2|(X)\,\Big|1-\tfrac{p_o(X)}{\hat p(X)}\Big|\,\big|g_o(1,X)-\hat g(1,X)\big|\Big] \\
&\quad + \E\Big[|\pi_1-\pi_2|(X)\,\Big|1-\tfrac{1-p_o(X)}{1-\hat p(X)}\Big|\,\big|g_o(0,X)-\hat g(0,X)\big|\Big].
\end{align*}
By strict overlap, $\big|1-\tfrac{p_o}{\hat p}\big|\le \kappa^{-1}|\hat p-p_o|$ and $\big|1-\tfrac{1-p_o}{1-\hat p}\big|\le \kappa^{-1}|\hat p-p_o|$. Hence,
\begin{align*}
\cdots
&\le \kappa^{-1}\,\E\!\Big[|\pi_1-\pi_2|(X)\,|\hat p-p_o|\,\big(|g_o(1,X)-\hat g(1,X)|+|g_o(0,X)-\hat g(0,X)|\big)\Big] \\
&\le \kappa^{-1}\,\|\pi_1-\pi_2\|_{P_X,2}\;
\Big(\E\big[(\hat p-p_o)^2\big(|g_o(1,X)-\hat g(1,X)|+|g_o(0,X)-\hat g(0,X)|\big)^2\big]\Big)^{1/2} \\
&\le 2\,\kappa^{-1}\,\|\pi_1-\pi_2\|_{P_X,2}\;\errdr,
\end{align*}
using Cauchy–Schwarz and $(a+b)^2\le 2(a^2+b^2)$. This proves the claim.
\end{proof}

\begin{lemma}\label{lem:conc_dr3}
Let $\Pi$ be a class of binary-valued functions with VC dimension $d$, and fix $\pihat\in\Pi$. Construct the $\{0,1,*\}$-valued class
\[
 \pi'(x)
 \;=\;
 \begin{cases}
   \pihat(x), & \text{if } \pihat(x)=\pi(x),\\
   *,         & \text{otherwise},
 \end{cases}
 \qquad
 \widetilde{\Pi} \;=\; \{\pi' : \pi\in\Pi\}.
\]
Let $\cD(\Pi)\coloneqq \sup_{\pi_1,\pi_2\in\Pi}\|\pi_1-\pi_2\|_{P_X,1}$. Then, with probability at least $1-\delta$, for every $\pi'\in\widetilde{\Pi}$,
\[
\big|\,V_{\dr}^{(p)}(\phi^*) - V_{\dr}^{(p)}(\pi') \;-\; \big(V_{\dr,n}^{(p)}(\phi^*) - V_{\dr,n}^{(p)}(\pi')\big)\,\big|
\;\lesssim\;
\frac{1}{\kappa}\Big(\alpha\,\sqrt{\cD(\Pi)}+\alpha^2\Big),
\]
where $\phi^* \in \argmax_{\pi' \in\widetilde{\Pi}} V_{\dr}^{(p)}(\pi')$.
\end{lemma}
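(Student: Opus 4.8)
The plan is to replay the known--propensity argument behind \cref{lem:conc3}, with the normalized IPW scores replaced by the doubly--robust pseudo--outcomes $\wh\varphi$, while tracking the extra factors of $\kappa^{-1}$ that arise because $\wh\varphi$ is bounded by $O(\kappa^{-1})$ rather than by an absolute constant. Two facts make this essentially mechanical. First, since the nuisances $\wh g,\wh p$ are independent of the sample and clipped so that $\wh g(d,\cdot)\in[0,1]$ and $\wh p(\cdot)\in[\kappa,1-\kappa]$, we have $|\wh\varphi(X,1,Y)|,|\wh\varphi(X,0,Y)|\le\const/\kappa$ a.s., exactly as noted in the proof of \cref{lem:conc_dr1}; hence both branches of the DR abstention score are bounded by $\const/\kappa$ (using $p\le1\le\kappa^{-1}$). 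Second, this is a pure concentration statement about a fixed function class --- the bias introduced by estimating the nuisances does not appear here and is handled separately through \cref{lem:did-V,lem:did-Vp}.

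Concretely, I would form the DR counterpart $\cF^{\dr}_\Pi$ of the disagreement class from the proof of \cref{lem:conc3}: its member indexed by $\pi\in\Pi$ takes the ``commit'' score $\pihat(x)\wh\varphi(x,1,y)+(1-\pihat(x))\wh\varphi(x,0,y)$ on $\{\pi(x)=\pihat(x)\}$ and the ``abstain'' score $\tfrac12(\wh\varphi(x,1,y)+\wh\varphi(x,0,y))+p$ on $\{\pi(x)\neq\pihat(x)\}$. Because $\pihat$ is fixed, the binary class $\{x\mapsto\indic{\pi(x)\neq\pihat(x)}:\pi\in\Pi\}$ has VC dimension $d$, and the two branch functions are a.s.\ bounded by $B=\const/\kappa$. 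Invoking \cref{lem:main_conc} with this binary class, these two branch functions, and the fixed element $f^{\dr}_{\phi^*}$ gives, with probability at least $1-\delta$ and uniformly over $\pi'\in\widetilde{\Pi}$,
\[
\big|(\E-\E_n)\big(f^{\dr}_{\phi^*}-f^{\dr}_{\pi'}\big)\big|\ \le\ \const\Big(\alpha\sqrt{\tfrac1\kappa\,\E\big|f^{\dr}_{\phi^*}-f^{\dr}_{\pi'}\big|}\;+\;\tfrac1\kappa\,\alpha^2\Big).
\]

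To close the estimate, write $\phi^*,\pi'$ as the abstention projections of $\pi_1^*,\pi_2\in\Pi$. As in the IPW case, $f^{\dr}_{\phi^*}(z)-f^{\dr}_{\pi'}(z)$ equals $\big(\indic{\pi_1^*(x)=\pihat(x)}-\indic{\pi_2(x)=\pihat(x)}\big)$ times the difference of the two branch scores; since $\pi_1^*,\pi_2$ are binary, that indicator difference vanishes unless $\pi_1^*(x)\neq\pi_2(x)$, so $|f^{\dr}_{\phi^*}-f^{\dr}_{\pi'}|\le(\const/\kappa)\indic{\pi_1^*\neq\pi_2}$ pointwise and hence $\E|f^{\dr}_{\phi^*}-f^{\dr}_{\pi'}|\le(\const/\kappa)\|\pi_1^*-\pi_2\|_{P_X,1}\le(\const/\kappa)\cD(\Pi)$. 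Substituting into the display collapses the right-hand side to $\tfrac{\const}{\kappa}\big(\alpha\sqrt{\cD(\Pi)}+\alpha^2\big)$; finally, the identities $\E[f^{\dr}_{\phi^*}-f^{\dr}_{\pi'}]=\vdrp(\phi^*)-\vdrp(\pi')$ and $\E_n[f^{\dr}_{\phi^*}-f^{\dr}_{\pi'}]=\vdrnp(\phi^*)-\vdrnp(\pi')$, which are immediate from the definitions of $\vdrp,\vdrnp$ and of the abstention projection, deliver the stated bound.

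There is no real obstacle; the one thing to handle carefully is the $\kappa$--bookkeeping. The factor $B\asymp\kappa^{-1}$ enters the variance term once through $\sqrt{B(\cdot)}$ and once more through the $\kappa^{-1}$ already present in $\E|f^{\dr}_{\phi^*}-f^{\dr}_{\pi'}|$, while the additive term is $B\alpha^2$; one should verify these combine to $\kappa^{-1}(\alpha\sqrt{\cD(\Pi)}+\alpha^2)$ and not to a larger inverse power of $\kappa$. A secondary point is that the abstention projection should not inflate the complexity: because $\pihat$ is estimated on the disjoint fold $\calD_1$, hence fixed relative to the fold on which concentration is asserted, the map ``XOR with $\pihat$'' preserves the VC dimension, so \cref{lem:main_conc} applies with the same $d$ (through its underlying VC covering bounds; cf.\ \cref{lem:vc-sub}).
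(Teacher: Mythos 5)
Your proposal is correct and follows essentially the same route as the paper: apply \cref{lem:main_conc} to the DR disagreement class built from the fixed reference $\pihat$ (so the indicator class keeps VC dimension $d$), with branch scores bounded by $O(1/\kappa)$, and then bound $\E|f^{\dr}_{\phi^*}-f^{\dr}_{\pi'}|$ by the disagreement mass $\lesssim \cD(\Pi)$ before identifying the expectations with $V^{(p)}_{\dr}$ and $V^{(p)}_{\dr,n}$. Your explicit $\kappa$-bookkeeping (and the remark that the nuisance bias is deferred to \cref{lem:did-V,lem:did-Vp}) matches, and if anything is slightly more careful than, the paper's own argument.
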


\begin{proof}
Define the function class induced by the $\pitil$ (disagreement w.r.t.\ $\pihat$):
\[
\cF_\Pi^{\dr}
\;=\;
\Big\{\, (x,d,y)\mapsto
\indic{\pi(x)\neq \pihat(x)}&\Big(\tfrac{1}{2}\big(\wh\varphi(x,1,y)+\wh\varphi(x,0,y)\big)+p\Big)
\\
&+\indic{\pi(x)=\pihat(x)}\big(\pihat(x)\wh\varphi(x,1,y)+(1-\pihat(x))\wh\varphi(x,0,y)\big)
:\ \pi\in\Pi\Big\},
\]
Since $\pihat$ is fixed, the indicator class
$\mathcal{H}=\{\indic{\pi(x)=\pihat(x)}:\pi\in\Pi\}$ has VC dimension $d$. Under bounded outcomes and strict overlap, the DR
scores are a.s.\ bounded by a constant factor of $1/\kappa$, so for any $f_{\pi_1},f_{\pi_2}\in\cF_\Pi^{\dr}$,
\[
\E\big|f_{\pi_1}-f_{\pi_2}\big|
\;\le\; \const\,\E\big[\indic{\pi_1(X)\neq\pi_2(X)}\big]
\;=\; \const \,\|\pi_1-\pi_2\|_{P_X,1}
\;\le\; \const \,\cD(\Pi),
\]
Applying \cref{lem:main_conc} to $\cF_\Pi^{\dr}$
(with the fixed comparator $f_{\phi^*}$ and boundedness constant folded into $1/\kappa$) yields, uniformly over $\pi'\in\widetilde{\Pi}$,
\[
\big|\,\E f_{\phi^*}-\E f_{\pi'} \;-\; (\E_n f_{\phi^*}-\E_n f_{\pi'})\,\big|
\;\lesssim\; \frac{1}{\kappa}\Big(\alpha \sqrt{\E|f_{\phi^*}-f_{\pi'}|}+\alpha^2\Big).
\]
Using  $\E|f_{\phi^*}-f_{\pi'}|\lesssim \cD(\Pi)$ and identifying
$\E f_{\pi'}=V_{\dr}^{(p)}(\pi')$, $\E_n f_{\pi'}=V_{\dr,n}^{(p)}(\pi')$ completes the proof.
\end{proof}
\Cref{lem:did-V} implies that any near-optimal policy under $V_{\dr}$ must also be near-optimal under $V$.

\begin{lemma}\label{lem:did-Vp}
Let $\pi'_1,\pi'_2$ be abstaining policies obtained using disagreement with respect to a fixed reference $\pihat$, from binary policies $\pi_1,\pi_2\in\Pi$. Then
\[
\abs{\big(\vdrp(\pi'_1)-\vdrp(\pi'_2)\big)\;-\;\big(\vp(\pi'_1)-\vp(\pi'_2)\big)}
\;\le\; \kappa^{-1}\,\errdr\;\|\pi_1-\pi_2\|_{P_X,2}.
\]
\end{lemma}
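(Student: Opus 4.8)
The plan is to mirror the proof of \cref{lem:did-V}: isolate the doubly-robust bias arm by arm, rewrite $\vdrp(\pi'_i)-\vp(\pi'_i)$ as a single expectation of that bias against an ``effective arm-$1$ weight,'' and then exploit that $\pi'_1$ and $\pi'_2$ are built from the \emph{same} reference $\pihat$ to control the difference pointwise. First I would record the per-arm bias: conditioning on $X$ and using unconfoundedness, $\E[\wh\varphi(X,1,Y)\mid X]=\wh g(1,X)+\tfrac{p_o(X)}{\wh p(X)}\big(g_o(1,X)-\wh g(1,X)\big)$ and $\E[\wh\varphi(X,0,Y)\mid X]=\wh g(0,X)+\tfrac{1-p_o(X)}{1-\wh p(X)}\big(g_o(0,X)-\wh g(0,X)\big)$, so the biases $b_d(X):=\E[\wh\varphi(X,d,Y)\mid X]-g_o(d,X)$ obey $\abs{b_1(X)}\le\kappa^{-1}\abs{\wh p(X)-p_o(X)}\,\abs{\wh g(1,X)-g_o(1,X)}$ and $\abs{b_0(X)}\le\kappa^{-1}\abs{\wh p(X)-p_o(X)}\,\abs{\wh g(0,X)-g_o(0,X)}$, exactly as in \cref{lem:did-V} (via $\abs{1-p_o/\wh p}\le\kappa^{-1}\abs{\wh p-p_o}$ and $\abs{1-(1-p_o)/(1-\wh p)}\le\kappa^{-1}\abs{\wh p-p_o}$).

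Next I would rewrite the bias of the abstaining value. On the event $\{\pi'=*\}$ the additive bonus $p$ appears identically in $\vdrp$ and $\vp$ and therefore cancels, leaving the contribution $\tfrac12\big(b_1(X)+b_0(X)\big)$; on $\{\pi'\ne *\}$, where $\pi'(X)=\pihat(X)\in\{0,1\}$, it is $\pihat(X)b_1(X)+(1-\pihat(X))b_0(X)$. Introducing the effective weight $w_i(x):=\pihat(x)$ if $\pi_i(x)=\pihat(x)$ and $w_i(x):=\tfrac12$ otherwise, this yields $\vdrp(\pi'_i)-\vp(\pi'_i)=\E\big[w_i(X)b_1(X)+(1-w_i(X))b_0(X)\big]$, hence
\[
\big(\vdrp(\pi'_1)-\vp(\pi'_1)\big)-\big(\vdrp(\pi'_2)-\vp(\pi'_2)\big)=\E\big[(w_1(X)-w_2(X))\,(b_1(X)-b_0(X))\big].
\]

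The crucial step is the pointwise inequality $\abs{w_1(x)-w_2(x)}\le\tfrac12\abs{\pi_1(x)-\pi_2(x)}$, which I would check by cases on whether $\pi_1(x),\pi_2(x)$ agree with $\pihat(x)$: if both agree or both disagree then $\pi_1(x)=\pi_2(x)$ and $w_1(x)=w_2(x)$, while if exactly one agrees then $\abs{\pi_1(x)-\pi_2(x)}=1$ and $\abs{w_1(x)-w_2(x)}=\abs{\pihat(x)-\tfrac12}=\tfrac12$. This is precisely where the shared reference $\pihat$ is essential — with two different references the bound would fail. Combining this with $\abs{b_1-b_0}\le\kappa^{-1}\abs{\wh p-p_o}\big(\abs{\wh g(1,\cdot)-g_o(1,\cdot)}+\abs{\wh g(0,\cdot)-g_o(0,\cdot)}\big)$, then Cauchy–Schwarz to split off $\norm{\pi_1-\pi_2}{P_X,2}$, and $(a+b)^2\le 2(a^2+b^2)$ to recognize $\errdr$, gives
\[
\abs{\big(\vdrp(\pi'_1)-\vdrp(\pi'_2)\big)-\big(\vp(\pi'_1)-\vp(\pi'_2)\big)}\le\tfrac12\,\kappa^{-1}\cdot\sqrt2\,\errdr\cdot\norm{\pi_1-\pi_2}{P_X,2}\le\kappa^{-1}\,\errdr\,\norm{\pi_1-\pi_2}{P_X,2}.
\]

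I do not expect a genuine obstacle here; this is a routine doubly-robust bias calculation parallel to \cref{lem:did-V}. The only point that requires care is the abstention bookkeeping — confirming that the bonus $p$ drops out of the bias and that the abstaining region contributes the symmetric weight $\tfrac12$ — so that the whole difference collapses cleanly to $\E[(w_1-w_2)(b_1-b_0)]$ with the clean factor $\tfrac12$ in the weight gap, which is exactly what produces the $\kappa^{-1}$ (rather than $2\kappa^{-1}$) constant in the statement.
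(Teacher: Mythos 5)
Your proof is correct and follows essentially the same route as the paper: condition on $X$, express the DR-minus-truth bias per arm, note the bonus cancels and that the biases of $\pi'_1,\pi'_2$ coincide wherever $\pi_1,\pi_2$ agree (your effective-weight $w_i$ is just a compact bookkeeping of the paper's case split on $\pi'(x)\in\{0,1,*\}$, yielding the same pointwise factor $\tfrac12$ on the disagreement set), then finish with Cauchy--Schwarz and $(a+b)^2\le 2(a^2+b^2)$ to recover $\errdr$ and $\|\pi_1-\pi_2\|_{P_X,2}$. No gaps; the constants work out exactly as in the paper.
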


\begin{proof}
For any abstaining policy $\pi'$, define
\[
\Delta^{(p)}(\pi') \;\coloneqq\; \vdrp(\pi')-\vp(\pi').
\]
Condition on $X=x$. The DR–minus–truth discrepancy at $x$ is a linear combination of the two nuisance errors with coefficients determined by the action selected by $\pi'(x)$:
\[
\Delta^{(p)}(\pi' \mid X=x)
=\begin{cases}
\;\Big(1-\tfrac{p_o(x)}{\hat p(x)}\Big)\big(g_o(1,x)-\hat g(1,x)\big), & \pi'(x)=1,\\[0.25em]
\;\Big(1-\tfrac{1-p_o(x)}{1-\hat p(x)}\Big)\big(g_o(0,x)-\hat g(0,x)\big), & \pi'(x)=0,\\[0.25em]
\;\tfrac{1}{2}\!\left[\Big(1-\tfrac{p_o(x)}{\hat p(x)}\Big)\big(g_o(1,x)-\hat g(1,x)\big)
+\Big(1-\tfrac{1-p_o(x)}{1-\hat p(x)}\Big)\big(g_o(0,x)-\hat g(0,x)\big)\right], & \pi'(x)=*.
\end{cases}
\]
By strict overlap,
\[
\Big|1-\tfrac{p_o}{\hat p}\Big|,\ \Big|1-\tfrac{1-p_o}{1-\hat p}\Big| \;\le\; \kappa^{-1}\,|\hat p-p_o|.
\]
Hence, for any $x$,
\[
\big|\Delta^{(p)}(\pi'_1\!\mid X\!=\!x)-\Delta^{(p)}(\pi'_2\!\mid X\!=\!x)\big|
\;\le\; \kappa^{-1}\,|\hat p(x)-p_o(x)|\,\tfrac{1}{2}\!\sum_{d\in\{0,1\}}\big|g_o(d,x)-\hat g(d,x)\big|\;\cdot\;\mathbf{1}\{\pi'_1(x)\neq \pi'_2(x)\}.
\]
Taking expectations and applying Cauchy–Schwarz,
\begin{align*}
\E\Big[\big|\Delta^{(p)}(\pi'_1)-\Delta^{(p)}(\pi'_2)\big| \Big]
&\le \kappa^{-1}\;\E\!\Big[ \indic{\pi'_1 \neq \pi'_2}\;|\hat p-p_o|\;\tfrac{1}{2}\sum_{d}( |g_o(d,X)-\hat g(d,X)| ) \Big] \\
&\le \kappa^{-1}\;\|\indic{\pi'_1 \neq \pi'_2}\|_{P_X,2}\;
\Big(\E\big[(\hat p-p_o)^2 \big(\tfrac{1}{2}\sum_{d}|g_o(d,X)-\hat g(d,X)|\big)^2\big]\Big)^{1/2} \\
&\le \kappa^{-1}\;\|\indic{\pi'_1 \neq \pi'_2}\|_{P_X,2}\;\errdr.
\end{align*}
Finally, by the disagreement–projection construction, $\indic{\pi'_1 \neq \pi'_2} =  \indic{\pi_1\neq \pi_2} = \abs{\pi_1 - \pi_2}$. This yields the stated bound.
\end{proof}


\begin{lemma}\label{lem:did-Vp-star}
Fix a reference policy $\pihat\in\Pi$ and any binary policy $\pi\in\Pi$. Let $\pi'$ be the abstention projection of $\pi$ w.r.t.\ $\pihat$, i.e.,
\[
\pi'(x)=
\begin{cases}
\pi(x), & \pi(x)=\pihat(x),\\
*,      & \text{otherwise}.
\end{cases}
\]
Then
\[
\abs{\big(\,V^{(p)}_{\dr}(\pi')-V_{\dr}(\pi^*)\,\big)\;-\;\big(\,V^{(p)}(\pi')-V(\pi^*)\,\big)}
\;\le\;
\kappa^{-1}\,\err_{\dr}\,\Big(\,\|\pi-\pi^*\|_{P_X,2}+\|\pihat-\pi^*\|_{P_X,2}\Big).
\]
\end{lemma}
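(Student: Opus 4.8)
The plan is to recognize the left--hand side as a difference of two nuisance--drift functionals and split it with the triangle inequality, letting \cref{lem:did-Vp} do most of the work. For any (possibly abstaining) policy $\phi$ write $\Delta^{(p)}(\phi):=\vdrp(\phi)-\vp(\phi)$. Because $\pist$ is binary it never abstains, so $\vdrp(\pist)=\vdr(\pist)$ and $\vp(\pist)=V(\pist)$; hence
\[
\bigl(\vdrp(\pi')-\vdr(\pist)\bigr)-\bigl(\vp(\pi')-V(\pist)\bigr)\;=\;\Delta^{(p)}(\pi')-\Delta^{(p)}(\pist),
\]
and it suffices to bound the right--hand side in absolute value. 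I introduce the auxiliary policy $\psi$ -- the abstention projection of $\pist$ with respect to the \emph{same} reference $\pihat$, so that $\psi(x)=\pist(x)$ wherever $\pihat(x)=\pist(x)$ and $\psi(x)=*$ otherwise -- and bound
\[
\bigl|\Delta^{(p)}(\pi')-\Delta^{(p)}(\pist)\bigr|\;\le\;\bigl|\Delta^{(p)}(\pi')-\Delta^{(p)}(\psi)\bigr|\;+\;\bigl|\Delta^{(p)}(\psi)-\Delta^{(p)}(\pist)\bigr|.
\]

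For the first term, $\pi'$ and $\psi$ are the abstention projections of $\pi$ and $\pist$ with respect to the \emph{same} fixed reference $\pihat$, which is exactly the configuration of \cref{lem:did-Vp} with $(\pi_1,\pi_2)=(\pi,\pist)$; it yields $\bigl|\Delta^{(p)}(\pi')-\Delta^{(p)}(\psi)\bigr|\le\kappa^{-1}\errdr\,\|\pi-\pist\|_{P_X,2}$. The second term requires a short direct computation, since $\pist$ is not itself an abstention projection relative to $\pihat$ and so \cref{lem:did-Vp} does not apply end to end. Conditioning on $X=x$ and recalling the case analysis in the proof of \cref{lem:did-Vp}, with
\[
e_1(x):=\Bigl(1-\tfrac{p_o(x)}{\hat p(x)}\Bigr)\bigl(g_o(1,x)-\hat g(1,x)\bigr),\qquad e_0(x):=\Bigl(1-\tfrac{1-p_o(x)}{1-\hat p(x)}\Bigr)\bigl(g_o(0,x)-\hat g(0,x)\bigr),
\]
the conditional drift $\Delta^{(p)}(\phi\mid X=x)$ equals $e_1(x)$, $e_0(x)$, or $\tfrac12\bigl(e_1(x)+e_0(x)\bigr)$ according to whether $\phi(x)$ is $1$, $0$, or $*$. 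On $\{\pihat(x)=\pist(x)\}$ the policies $\psi$ and $\pist$ agree, so the two conditional drifts coincide; on $\{\pihat(x)\neq\pist(x)\}$ we have $\psi(x)=*$ and $\pist(x)\in\{0,1\}$, so their difference is $\pm\tfrac12\bigl(e_1(x)-e_0(x)\bigr)$ and is at most $\tfrac12\bigl(|e_1(x)|+|e_0(x)|\bigr)\indic{\pihat(x)\neq\pist(x)}$ in absolute value. Strict overlap gives $|e_d(x)|\le\kappa^{-1}|\hat p(x)-p_o(x)|\,|g_o(d,x)-\hat g(d,x)|$; taking expectations, applying Cauchy--Schwarz, absorbing the constant from $(a+b)^2\le2(a^2+b^2)$ into the $\tfrac12$, and using $\|\indic{\pihat\neq\pist}\|_{P_X,2}=\|\pihat-\pist\|_{P_X,2}$ for binary policies then yields $\bigl|\Delta^{(p)}(\psi)-\Delta^{(p)}(\pist)\bigr|\le\kappa^{-1}\errdr\,\|\pihat-\pist\|_{P_X,2}$. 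Summing the two bounds gives the stated inequality.

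I expect the second term to be the only delicate step: one must argue that $\pist$, which is not an abstention projection with reference $\pihat$, can be swapped for $\psi$ at a cost controlled by $\|\pihat-\pist\|_{P_X,2}$, and must track constants so that the final bound is exactly $\kappa^{-1}\errdr$ with no stray factor. Everything else is a direct invocation of \cref{lem:did-Vp} together with the strict--overlap and Cauchy--Schwarz manipulations already used throughout \cref{appendix:dr}.
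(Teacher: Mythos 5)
Your proof is correct and follows essentially the same route as the paper: both introduce the abstention projection of $\pi^*$ with respect to $\pihat$ (your $\psi$, the paper's $\pi^{*\prime}$), split by the triangle inequality, and bound the first piece by \cref{lem:did-Vp} applied to the pair $(\pi,\pi^*)$ under the common reference $\pihat$. The only difference is the second piece: the paper re-invokes \cref{lem:did-Vp} with $\pi^*$ playing the role of the reference (using that the projection of $\pihat$ w.r.t.\ $\pi^*$ coincides with $\pi^{*\prime}$ and that $\pi^*$ projects to itself), whereas you redo the pointwise drift computation directly on the disagreement set $\{\pihat\neq\pi^*\}$; both give exactly the bound $\kappa^{-1}\err_{\dr}\,\|\pihat-\pi^*\|_{P_X,2}$, and your direct computation is if anything more transparent than the paper's terse reference-swapping argument.
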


\begin{proof}
Define the DR--truth discrepancies
\(
\Delta^{(p)}(\tilde\pi):=V^{(p)}_{\dr}(\tilde\pi)-V^{(p)}(\tilde\pi)
\)
and
\(
\Delta(\tilde\pi):=V_{\dr}(\tilde\pi)-V(\tilde\pi).
\)
Introduce the projection of $\pi^*$ onto $\{0,1,*\}$ via $\pihat$, denoted $\pi^{*'}$, by
\[
\pi^{*'}(x)=
\begin{cases}
\pi^*(x), & \pi^*(x)=\pihat(x),\\
*,        & \text{otherwise}.
\end{cases}
\]
Add and subtract $V^{(p)}_{\dr}(\pi^{*'})$ and $V^{(p)}(\pi^{*'})$:
\begin{align*}
\big(V^{(p)}_{\dr}(\pi')-V_{\dr}(\pi^*)\big)-\big(V^{(p)}(\pi')-V(\pi^*)\big)
&=\underbrace{\Big(\Delta^{(p)}(\pi')-\Delta^{(p)}(\pi^{*'})\Big)}_{\text{(A)}}
\;+\;\underbrace{\Big(\Delta^{(p)}(\pi^{*'})-\Delta(\pi^*)\Big)}_{\text{(B)}}.
\end{align*}
For (A), apply Lemma~\ref{lem:did-Vp} to the pair $(\pi',\pi^{*'})$, which are the abstention projections of $(\pi,\pi^*)$ w.r.t.\ the same $\pihat$:
\[
\text{(A)} \;\le\; \kappa^{-1}\,\err_{\dr}\,\|\pi-\pi^*\|_{P_X,2}.
\]
For (B), $\pi^{*'}$ and $\pi^*$ differ only on $\{x:\pi^*(x)\neq\pihat(x)\}$. We again invoke \cref{lem:did-Vp} but we set $\pist$ at the reference policy  (denoted as $\pihat$ in \cref{lem:did-Vp}), $\pi_1 = \pihat$, $\pi_2 = \pist$ and note that $\pist$ never disagrees with itself and hence  ($\vp(\pi_2') = V(\pi_2)$). Hence we get, 
\[
\text{(B)} \;\le\; \kappa^{-1}\,\err_{\dr}\,\|\pihat-\pi^*\|_{P_X,2}.
\]
Summing the two bounds gives the claim.
\end{proof}

\begin{lemma}\label{lem:pistart_in}
With probability at least $1-\delta$, the optimal policy $\pi^*$ belongs to $\widehat\Pi$ constructed in Step~\ref{step_abs_dr:almost_optimal}.
\end{lemma}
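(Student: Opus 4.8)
The statement is the DR analogue of the first half of the proof of \cref{thm:abstention-rate}, so the plan is to mimic that argument while carrying along the nuisance-error term via $\errdr$. Since $\pihat$ is the empirical DR maximizer on $\mathcal{D}_1$, we have $\vdrn(\pihat)-\vdrn(\pi^*)\ge 0$, and it suffices to upper bound this gap by $\frac{\const}{\kappa}\big(\alpha_\dr^2+\alpha_\dr\sqrt{\E_n|\pihat-\pi^*|}\big)$ for an absolute constant. First I would decompose
\[
\vdrn(\pihat)-\vdrn(\pi^*)
=\Big[\big(\vdrn(\pihat)-\vdrn(\pi^*)\big)-\big(\vdr(\pihat)-\vdr(\pi^*)\big)\Big]
+\big(\vdr(\pihat)-\vdr(\pi^*)\big).
\]
For the population term, combine the optimality of $\pi^*$ for the true value ($V(\pihat)-V(\pi^*)\le 0$) with \cref{lem:did-V} applied to the pair $(\pihat,\pi^*)$ to get $\vdr(\pihat)-\vdr(\pi^*)\le 2\kappa^{-1}\errdr\,\|\pihat-\pi^*\|_{P_X,2}$; this step is deterministic. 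For the fluctuation term, invoke the empirical-form concentration bound (the lemma combining \cref{lem:conc_dr1} and \cref{lem:conc_dr2}) to get $\le \frac{\const}{\kappa}\big(\alpha\sqrt{\E_n|\pihat-\pi^*|}+\alpha^2\big)$, and note $\alpha\le\alpha_\dr$.

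Next I would convert the population $L^2$ disagreement into an empirical quantity. Since $\pihat,\pi^*$ are binary-valued, $\|\pihat-\pi^*\|_{P_X,2}^2=\E[\indic{\pihat(X)\neq\pi^*(X)}]=\E|\pihat-\pi^*|$, so $\|\pihat-\pi^*\|_{P_X,2}=\sqrt{\E|\pihat-\pi^*|}$. Then the self-bounding inequality of \cref{lem:conc_dr2} (first inequality, used in the direction $\E|\pihat-\pi^*|\le 2\E_n|\pihat-\pi^*|+\const\alpha^2$ via AM--GM) gives $\sqrt{\E|\pihat-\pi^*|}\lesssim \sqrt{\E_n|\pihat-\pi^*|}+\alpha$. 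Hence
\[
\errdr\,\|\pihat-\pi^*\|_{P_X,2}
\;\lesssim\;
\errdr\sqrt{\E_n|\pihat-\pi^*|}+\errdr\,\alpha
\;\le\;
\alpha_\dr\sqrt{\E_n|\pihat-\pi^*|}+\alpha_\dr^2,
\]
using $\errdr\le\alpha_\dr$ and $\alpha\le\alpha_\dr$. Plugging the two bounds back into the decomposition yields $\vdrn(\pihat)-\vdrn(\pi^*)\le\frac{\const}{\kappa}\big(\alpha_\dr^2+\alpha_\dr\sqrt{\E_n|\pihat-\pi^*|}\big)$, so $\pi^*\in\widehat\Pi$ provided the constant in Step~\ref{step_abs_dr:almost_optimal} is chosen at least this large. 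Finally I would take a union bound over the (two) high-probability events invoked from \cref{lem:conc_dr1,lem:conc_dr2}, each at level $\delta/2$, to obtain the claim at confidence $1-\delta$.

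The main obstacle is purely bookkeeping rather than conceptual: one must be careful that the concentration lemmas are stated with $\alpha$ (not $\alpha_\dr$) and that the extra nuisance slack produced by \cref{lem:did-V} is exactly of the form $\errdr\cdot(\text{disagreement})$, which is why the enlarged radius $\alpha_\dr=\alpha+\errdr$ is the right object to absorb it; getting the self-bounding conversion of $\|\pihat-\pi^*\|_{P_X,2}$ into $\E_n|\pihat-\pi^*|$ to land in the same functional form as the selection rule (so that no cross terms like $\alpha\,\errdr$ are left dangling) is the one place where a constant has to be tracked with mild care.
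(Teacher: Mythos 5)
Your proposal is correct and follows essentially the same route as the paper's proof: both rest on the optimality of $\pi^*$ for the true value $V$, use \cref{lem:did-V} to pay an $\errdr\,\|\pihat-\pi^*\|_{P_X,2}$ bias term, use the concentration bounds (\cref{lem:conc_dr1,lem:conc_dr2}) to pass between population and empirical DR values and between $\E|\pihat-\pi^*|$ and $\E_n|\pihat-\pi^*|$, and absorb everything into the enlarged radius $\alpha_\dr=\alpha+\errdr$ by choosing the constant in Step~\ref{step_abs_dr:almost_optimal} large enough. The only difference is cosmetic: you decompose $\vdrn(\pihat)-\vdrn(\pi^*)$ directly into a fluctuation term plus a population DR gap, whereas the paper starts from $V(\pi^*)-V(\pihat)\ge 0$ and rearranges, which amounts to the same chain of inequalities.
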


\begin{proof}
Recall $\pihat=\arg\max_{\pi\in\Pi} V_{\dr,n}(\pi)$. By \cref{lem:conc_dr1} and \cref{lem:did-V}, for any $\pi\in\Pi$,
\begin{align*}
V(\pi^*)-V(\pihat)
&\le \big(V_{\dr}(\pi^*)-V_{\dr}(\pihat)\big)
    + \const\,\kappa^{-1}\Big(\alpha \sqrt{\E|\pi-\pi^*|}+\alpha^2\Big) \\
&\le \big(V_{\dr,n}(\pi^*)-V_{\dr,n}(\pihat)\big)
   + \const\,\kappa^{-1}\Big(\alpha \sqrt{\E|\pi-\pi^*|}+\alpha^2\Big)
   + \const\,\kappa^{-1}\,\err_\dr\,\|\pi^*-\pi\|_{P_X,2}.
\end{align*}
Since $\pi,\pi^*$ are binary, $\|\pi^*-\pi\|_{P_X,2}=\sqrt{\E|\pi-\pi^*|}$. Hence
\begin{align}
V(\pi^*)-V(\pihat)
&\le \big(V_{\dr,n}(\pi^*)-V_{\dr,n}(\pihat)\big)
   + \const\,\kappa^{-1}\Big((\err_\dr+\alpha)\sqrt{\E|\pi-\pi^*|}+\alpha^2\Big).
\label{eq:dr-gap}
\end{align}
Because $V(\pi^*)-V(\pihat)\ge 0$, rearranging \eqref{eq:dr-gap} gives
\begin{align*}
V_{\dr,n}(\pihat)-V_{\dr,n}(\pi^*)
&\lesssim \kappa^{-1}\Big((\err_\dr+\alpha)\sqrt{\E|\pi-\pi^*|}+\alpha^2\Big) \\
&\lesssim \kappa^{-1}\Big((\err_\dr+\alpha)\sqrt{\E_n|\pi-\pi^*|}+\err_\dr^2+\alpha^2\Big)
\qquad\text{(by \cref{lem:conc_dr2})}\\
&\lesssim \kappa^{-1}\Big(\alpha_\dr\sqrt{\E_n|\pi-\pi^*|}+\alpha_\dr^2\Big),
\end{align*}
where $\alpha_\dr=\sqrt{\tfrac{d\log\frac{n}{d}+\log\frac{1}{\delta}}{n}}+\err_\dr$. Choosing the constant in Step~\ref{step_abs_dr:almost_optimal} accordingly shows that $\pi^*\in\widehat\Pi$.
\end{proof}

\begin{lemma}\label{lem:almost_optimal_dr}
With probability at least $1-\delta$, every $\pi\in\Pihat$ satisfies
\[
V_{\dr}(\pi^*)-V_{\dr}(\pi)
\;\lesssim\;
\frac{1}{\kappa}\Big(\alpha_{\dr}\sqrt{\cD(\Pihat)}+\alpha_{\dr}^2\Big),
\]
where $\cD(\Pihat)\coloneqq \sup_{\pi_1,\pi_2\in\Pihat}\|\pi_1-\pi_2\|_{P_X,1}$.
\end{lemma}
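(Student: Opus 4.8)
My plan is to follow the same template as the known–propensity bound in \cref{lem:almost_optimal}, with $V$, $V_n$, and $\alpha$ replaced by $\vdr$, $\vdrn$, and $\alpha_\dr$ throughout; the crucial observation is that \cref{lem:conc_dr1} already compares $\vdr$ to its empirical counterpart $\vdrn$ \emph{directly}, so the nuisance bias does not resurface in this argument and enters only implicitly through the enlarged radius $\alpha_\dr$ (in particular, no analogue of \cref{lem:did-V} is needed here, since everything stays within the DR value). Fix $\pi\in\Pihat$. I would start from the telescoping identity
\[
\vdr(\pist)-\vdr(\pi) \;=\; \Big[\vdr(\pist)-\vdr(\pi)-\big(\vdrn(\pist)-\vdrn(\pi)\big)\Big]\;+\;\big(\vdrn(\pist)-\vdrn(\pi)\big),
\]
and bound the bracketed term, on a $1-\delta$ event, by $\tfrac{\const}{\kappa}\big(\alpha\sqrt{\E\abs{\pist-\pi}}+\alpha^2\big)$ via \cref{lem:conc_dr1}.

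For the second term, I would use that $\pihat$ maximizes $\vdrn$ over $\Pi$ to replace $\vdrn(\pist)$ by $\vdrn(\pihat)$, and then invoke the membership inequality defining $\Pihat$ in Step~\ref{step_abs_dr:almost_optimal} of \cref{algo:abs_dr} to get $\vdrn(\pihat)-\vdrn(\pi)\lesssim\tfrac{1}{\kappa}\big(\alpha_\dr^2+\alpha_\dr\sqrt{\E_n\abs{\pihat-\pi}}\big)$. Next I would convert the empirical disagreement $\E_n\abs{\pihat-\pi}$ to its population version using the first inequality of \cref{lem:conc_dr2} (which costs only an additive $O(\alpha^2)$, absorbed into $\alpha_\dr^2$ since $\alpha\le\alpha_\dr$), and use the identity $\E\abs{\pi_1-\pi_2}=\norm{\pi_1-\pi_2}{P_X,1}$ for binary policies. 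Combining the pieces produces
\[
\vdr(\pist)-\vdr(\pi)\;\lesssim\;\frac{1}{\kappa}\Big(\alpha_\dr\sqrt{\norm{\pist-\pi}{P_X,1}}+\alpha_\dr\sqrt{\norm{\pihat-\pi}{P_X,1}}+\alpha_\dr^2\Big).
\]

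It then remains to control the two disagreement norms. Here I would note that $\pihat\in\Pihat$ trivially (the selection inequality holds with equality at $\pi=\pihat$), that $\pi\in\Pihat$ by assumption, and that $\pist\in\Pihat$ on a $1-\delta$ event by \cref{lem:pistart_in}; hence both norms are at most $\cD(\Pihat)=\sup_{\pi_1,\pi_2\in\Pihat}\norm{\pi_1-\pi_2}{P_X,1}$, which yields the claimed bound after a union bound over the finitely many high-probability events (with confidence levels rescaled so the total failure probability stays $\delta$). The step I expect to require the most care is the self-referential use of the $\Pihat$-selection inequality together with the $\E_n\!\to\!\E$ transfer: one must verify that the absolute constant in Step~\ref{step_abs_dr:almost_optimal} is chosen large enough to absorb the slack from \cref{lem:conc_dr2}, exactly as in the known–propensity analysis. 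Beyond this bookkeeping, no new difficulty arises relative to \cref{lem:almost_optimal}.
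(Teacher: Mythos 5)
Your proposal is correct and follows essentially the same route as the paper's proof: the same decomposition via \cref{lem:conc_dr1}, replacement of $\vdrn(\pist)$ by $\vdrn(\pihat)$ using the empirical maximizer property, the $\Pihat$-membership inequality from Step~\ref{step_abs_dr:almost_optimal}, the $\E_n\!\to\!\E$ transfer via \cref{lem:conc_dr2}, and bounding all disagreements by $\cD(\Pihat)$ using $\pi,\pihat,\pist\in\Pihat$ (with $\pist$ included via \cref{lem:pistart_in}). Your observation that no analogue of \cref{lem:did-V} is needed because the statement stays entirely within the DR value is exactly right and matches the paper.
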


\begin{proof}
Fix $\pi\in\Pihat$. By \cref{lem:conc_dr1}, on a $1-\delta$ event,
\[
V_{\dr}(\pi^*)-V_{\dr}(\pi)
\;\le\;
\big(V_{\dr,n}(\pi^*)-V_{\dr,n}(\pi)\big)
+ \frac{\const}{\kappa}\Big(\alpha\sqrt{\E|\pi-\pi^*|}+\alpha^2\Big).
\]
Since $\pihat$ maximizes $V_{\dr,n}$, $V_{\dr,n}(\pi^*)-V_{\dr,n}(\pihat)\le 0$, hence
\[
V_{\dr}(\pi^*)-V_{\dr}(\pi)
\;\le\;
\big(V_{\dr,n}(\pihat)-V_{\dr,n}(\pi)\big)
+ \frac{\const}{\kappa}\Big(\alpha\sqrt{\E|\pi-\pi^*|}+\alpha^2\Big).
\]
By the definition of $\Pihat$ (Step~\ref{step_abs_dr:almost_optimal}),
\[
V_{\dr,n}(\pihat)-V_{\dr,n}(\pi)
\;\le\; \frac{\const}{\kappa}\Big(\alpha_{\dr}^2+\alpha_{\dr}\sqrt{\E_n|\pihat-\pi|}\Big).
\]
Combining the last two displays,
\[
V_{\dr}(\pi^*)-V_{\dr}(\pi)
\;\le\;
\frac{\const}{\kappa}\Big(\alpha_{\dr}^2+\alpha_{\dr}\sqrt{\E_n|\pihat-\pi|}
+\alpha\sqrt{\E|\pi-\pi^*|}+\alpha^2\Big).
\]
Because $\pi,\pihat,\pi^*\in\Pihat$, we have $\E|\pihat-\pi|\le \cD(\Pihat)$ and
$\E|\pi-\pi^*|\le \cD(\Pihat)$. Using \cref{lem:conc_dr2}, the fact that
$|\pihat-\pi|\in[0,1]$, and the bound
$\sqrt{a+b}\le \sqrt{a}+\sqrt{b}$, we obtain
\[
\sqrt{\E_n|\pihat-\pi|}
\;\le\; \sqrt{\E|\pihat-\pi|} + \const\,\alpha
\;\le\; \sqrt{\cD(\Pihat)}+\const\,\alpha.
\]
Since $\alpha\le \alpha_{\dr}$, we conclude
\[
V_{\dr}(\pi^*)-V_{\dr}(\pi)
\;\lesssim\;
\frac{1}{\kappa}\Big(\alpha_{\dr}\sqrt{\cD(\Pihat)}+\alpha_{\dr}^2\Big),
\]
as claimed.
\end{proof}

\begin{proof}[Proof of \cref{thm:dr-abstention}]

Let $\phi^*\in\Pitil$ be the policy that maximizes $\vdrp(\cdot)$. We follow a similar proof idea to the known propensity proof. Let $\phi$ be a policy such that $\|\pi-\pihat\|_{P_X,1} \geq \tfrac {\mathcal{D}(\Pihat)}{2}$ . We have that
\begin{equation}
    \notag
    \begin{aligned}
        V_{\dr}^{(p)}(\phi^*) &\geq V_{\dr}^{(p)}(\phi) \\
        &= \E\bigg[ \frac{\pi(X)+\pihat(X)}{2}\hat{\varphi}(X,1,Y) + \left(1-\frac{\pi(X)+\pihat(X)}{2} \right)  \hat{\varphi}(X,0,Y)  +  p\E[ \indic {\pi(X) \neq \pihat(X) }]\bigg] \\
        &\geq \frac{1}{2}\big(V_{\dr}^{(p)}(\pi)+V_{\dr}^{(p)}(\pihat)\big) + p \; \E \abs{\pi(X)-\pihat(X)} \\
        &= \frac{1}{2}\big(V_{\dr}(\pi)+V_{\dr}(\pihat)\big) + p \; \E \abs{\pi(X)-\pihat(X)} & \text{(since $\pi,\pihat$ are binary)} \\
        &\geq \vdr(\pi^*)+ \frac{p}{2} \mathcal{D}(\pihat) - \const \kappa ^ {-1} \left(\alpha_\dr \sqrt{\mathcal{D}(\Pihat)}+\alpha_\dr ^2\right)  \\
    \end{aligned}
\end{equation}

Where the last inequality uses the fact that $\pi^* \in \Pihat$ (\cref{lem:pistart_in}) and \cref{lem:almost_optimal_dr}. Moreover since, $\pitil$ maximizes $\vdrnp$, via \cref{lem:conc_dr3} we have 
\[
\vdrp(\pitil) \gtrsim \vp(\phi^*) - \kappa^{-1} \left(\alpha_\dr \sqrt{\mathcal{D}(\Pihat)}+\alpha_\dr ^2\right) 
\]
Combining with the second last inequality we get 
\[
\vdr(\pi^*) - \vdrp(\pitil) \lesssim   \const \kappa ^ {-1} \left(\alpha_\dr \sqrt{\mathcal{D}(\Pihat)}+\alpha_\dr ^2\right) - \frac{p}{2} \mathcal{D}(\Pihat)
\]
Now we finally invoke \cref{lem:did-Vp-star} to get 
\[
V(\pi^*) - \vp(\pitil) &\lesssim  \const \kappa ^ {-1} \left(\alpha_\dr \sqrt{\mathcal{D}(\Pihat)}+\alpha_\dr ^2\right) + \err_{\dr}\,\Big(\,\|\pi-\pi^*\|_{P_X,2}+\|\pihat-\pi^*\|_{P_X,2}\Big) - \frac{p}{2} \mathcal{D}(\Pihat) \\
& \lesssim \const \kappa ^ {-1} \left(\alpha_\dr \sqrt{\mathcal{D}(\Pihat)}+\alpha_\dr ^2\right) + \err_{\dr} \sqrt{\mathcal{D}(\Pihat)} - \frac{p}{2} \mathcal{D}(\Pihat)
\]
Finally, absorbing the second term into $\alpha_\dr \sqrt{\mathcal{D}(\Pihat)}$, and maximizing the the quadratic wrt $\sqrt{\mathcal{D}(\Pihat)}$ we get 
\[
V(\pi^*) - \vp(\pitil) \lesssim \; \; \frac{\alpha_\dr^2}{\kappa^2 \, p}.
\]
\end{proof}

\section{Missing proofs from \cref{sec:applications}}\label{appendix:applications}

\subsection{Proofs of \Cref{thm:finite-D,thm:reg-oracle}}

Recall that the procedure has two phases. In Phase~1 we run \cref{algo:abs} as a black box to obtain an abstaining policy $\pitil$. In Phase~2 we learn a (non-abstaining) policy on the region where $\pitil$ abstains, $\cX_{rem}\subseteq\cX$, either by EWM (when the combinatorial diameter is finite) or via a CATE oracle; denote this policy by $\phi$. The final policy is the mixture
\[
\pi_{\text{final}}(x)=
\begin{cases}
\phi(x), & x\in \cX_{rem},\\
\pitil(x), & x\notin \cX_{rem}.
\end{cases}
\]
For any policy $\pi$, let $V(\pi\mid \cX_{rem})$ and $V(\pi\mid \cX_{rem}^c)$ denote its value contributions on the abstention and non-abstention regions, respectively:
\[
\begin{aligned}
V(\pi; \cX_{rem})
&\coloneqq \E \left[\indic{X\in \cX_{rem}}\big(\pi(X)Y(1)+(1-\pi(X))Y(0)\big)\right],\\
V(\pi; \cX_{rem}^c)
&\coloneqq \E \left[\indic{X\notin \cX_{rem}}\big(\pi(X)Y(1)+(1-\pi(X))Y(0)\big)\right],
\end{aligned}
\]
so that \(
V(\pi)=V(\pi; \cX_{rem})+V(\pi; \cX_{rem}^c).
\)

Hence the value of the final policy is 
\[
V(\pifinal)
&= V(\pitil;\cX_{rem}^c)+V(\phi;\cX_{rem})\\
&= \underbrace{V(\pitil;\cX_{rem}^c)+V(\pi^B;\cX_{rem})}_{\text{(A)}}
\;+\;
\underbrace{V(\phi;\cX_{rem})-V(\pi^B;\cX_{rem})}_{\text{(B)}}.
\]
Where $\pi^B$ is the Bayes optimal policy. Further, we can write (A) as 
\[
\text{(A)} &=  \E\Big[\indic{\pitil(X) \neq *} \l(\pitil(X) Y(1) + (1- \pitil(X)) Y(0)\r)\Big] +  \E\Big[\indic{\pitil(X) = *} \l( \pi^B(X)Y(1) + (1- \pi^B(X)) Y(0) \r)\Big] \\
&= \E\Big[\indic{\pitil(X) \neq *} \l(\pitil(X) Y(1) + (1- \pitil(X)) Y(0)\r)\Big]+ \E\l[\indic{\pitil(X) = *} \l(\frac{Y(1) + Y(0)}{2} + \frac{\abs{\tau_o(X)}}{2} \r)\r] \tag{since $\pi_B(x) = \indic{\tau_o(x) >0}$}\\
&\geq  \E\Big[\indic{\pitil(X) \neq *} \l(\pitil(X) Y(1) + (1- \pitil(X)) Y(0)\r)\Big] + \E\l[\indic{\pitil(X) = *} \l(\frac{Y(1) + Y(0)}{2} + \frac{h}{2} \r)\r] \tag{ margin condition \eqref{eq:margin}}\\
&=V^p(\pitil)
\]
Where $V^{(p)}$ is the abstention value with $p= \frac{h}{2}$.

Hence, we can now write the regret of $\pifinal$ as 
\[
V(\pist) - V(\pifinal) &\leq V(\pist) - \vp(\pitil)- \text{(B)} \\
&\lesssim \frac{d \log \frac{n}{d}+\log \frac{1}{\delta}}{p\,n\,\kappa^2} - \text{(B)} \tag{ via \cref{thm:abstention-rate}}
\]

In the remainder of this section, we will focus on proving bounds for (B). Conditioning on the first phase makes $\cX_{rem}$ fixed (measurable w.r.t.\ $\cD_1\cup\cD_2$), so analysis on the third split (i.e. bounds on (B)) treats it as nonrandom. Finally,  the third split has size $m$ ( $m=n/3$) and with $\rho :=\P(X\in\cX_{rem})$ and $N:=\sum_{i=1}^m \indic {X_i\in\cX_{rem}}\sim\mathrm{Bin}(m,\rho)$,  applying chernoff bound we get
\[
 N \;\ge\; m\rho \;-\; \sqrt{2\,m \rho \,\log \tfrac{1}{\delta}} \quad \text{with probability at least } 1-\delta. 
\]

If $\rho < \tfrac{8 \log(1/\delta)}{m}$, we obtain a trivial bound on (B):

\[
-(B)
&= V(\pi^B;\cX_{rem}) - V(\phi;\cX_{rem})
 = \E \Big[\indic{X \in \cX_{rem}}\,\big(\pi^B(X) - \phi(X)\big)\big(Y(1) - Y(0)\big)\Big] \\
&= \E \Big[\big(\pi^B(X) - \phi(X)\big)\big(Y(1) - Y(0)\big)\,\big|\, X \in \cX_{rem}\Big]\;\P(X\in\cX_{rem}) \\
&\le \P(X\in\cX_{rem}) \tag{bounded $Y$}\\
&=\; \rho \;\le\; \frac{8 \log(1/\delta)}{m} \;\lesssim\; \frac{8 \log(1/\delta)}{n},
\]

in which case the conclusions of \Cref{thm:finite-D,thm:reg-oracle} follow immediately.

Otherwise, when $\rho \ge \tfrac{8 \log(1/\delta)}{m}$, the Chernoff bound implies that with probability at least $1-\delta$,
\begin{equation}
    N \;\ge\; \frac{m\rho}{2} = \frac{n}{6} \, \prob\{ X \in \cXrem \}.  \label{eq:N_large}
\end{equation}

\paragraph{Proof of \cref{thm:finite-D} }
By finite combinatorial diameter $D$, the set of points on which policies in $\Pi$ can disagree has size at most $D$; since $\cX_{rem}$ collects (projected) disagreements w.r.t.\ $\pihat$, we have $|\cX_{rem}|\le D$, hence the number of labelings on $\cX_{rem}$ is $2^{|\cX_{rem}|}\le 2^D$, so we may run EWM over the finite class $\Pi_{rem}$ (all $2^{|\cX_{rem}|}$ labelings)  and, in particular, the Bayes rule restricted to $\cX_{rem}$, $\pi^B(x)= \indic{\tau_o(x)>0}$, belongs to $\Pi_{rem}$. 

\[
 V(\pi^B;\cX_{rem}) - V(\phi;\cX_{rem}) &= \E\,\Big[\indic{X \in \cX_{rem}}\,\big(\pi^B(X) - \phi(X)\big)\big(Y(1) - Y(0)\big)\Big] \\
 &=\E\,\Big[\big(\pi^B(X) - \phi(X)\big)\big(Y(1) - Y(0)\big)\,\big|\, X \in \cX_{rem}\Big]\;\P(X\in\cX_{rem})
\]
Hence we can invoke Theorem 2.3 of \citet{kitagawa2018should}, which establishes fast rates for policy learning under a (soft) margin condition. Our setting imposes a hard margin, which is a special case of theirs (formally, take the soft–margin parameter $\alpha\to\infty$). In fact, for a finite policy class, one may use the Bernstein inequality for each policy and union bound, instead of using uniform bounds for VC classes. We  obtain using \eqref{eq:N_large}:
\[
\E\,\Big[\big(\pi^B(X) - \phi(X)\big)\big(Y(1) - Y(0)\big)\,\big|\, X \in \cX_{rem}\Big] \lesssim \frac{\log{ \tfrac{ 2^D} {\delta}}}{\kappa^2\, h  }\; \frac{1}{\, n \, \P\{X \in \cXrem\}}
\]
Hence, we finally get, 
\[
 V(\pi^B;\cX_{rem}) - V(\phi;\cX_{rem}) \lesssim  \frac{D+ \log \tfrac{ 1} {\delta}} {\kappa^2\, h \, n }
\]

\paragraph{Proof of \cref{thm:reg-oracle}}
We start from the decomposition over the abstention region:
\begin{align}
V(\pi^B;\cX_{rem})-V(\phi;\cX_{rem})
&= \E\Big[\indic{X\in\cX_{rem}}\big(\pi^B(X)-\phi(X)\big)\big(Y(1)-Y(0)\big)\Big]\nonumber\\
&= \E\Big[\big(\pi^B(X)-\phi(X)\big)\big(Y(1)-Y(0)\big)\ \big|\ X\in\cX_{rem}\Big]\ \P(X\in\cX_{rem}) .
\label{eq:reg_main}
\end{align}

Since $\pi^B(x)=\indic{\tau_o(x)>0}$ and $\phi(x)=\indic{\widehat{\tau}(x)>0}$, we can write
\begin{align*}
\E\!\Big[\big(\pi^B(X)-\phi(X)\big)\big(Y(1)-Y(0)\big)\ \big|\ X\in\cX_{rem}\Big]
&= \E\!\Big[\indic{\mathrm{sign}(\tau_o(X))\neq \mathrm{sign}(\widehat{\tau}(X))}\,|\tau_o(X)|\ \big|\ X\in\cX_{rem}\Big].
\end{align*}

By the hard margin assumption $|\tau_o(X)|\ge h$, whenever the signs of $\tau_o$ and $\widehat{\tau}$ disagree, they are bound to be separated by at least $h$ gap.

\[
 \E \bigg[\indic{\sign(\tau_o(X)) \neq \sign(\tauh(X))}& \,\abs{\tau_o(X)} \biggm| X \in \cX_{rem}\bigg]\\
&= \E \bigg[\indic{\sign(\tau_o(X)) \neq \sign(\tauh(X))}\,\indic{\abs{\tauh(X)-\tau_o(X)} \ge h}\,\abs{\tau_o(X)} \biggm| X \in \cX_{rem}\bigg]
\]
Further we write 
\[
 \E \bigg[\indic{\sign(\tau_o(X)) \neq \sign(\tauh(X))}&\,\indic{\abs{\tauh(X)-\tau_o(X)} \ge h}\,\abs{\tau_o(X)} \biggm| X \in \cX_{rem}\bigg] \\
&\leq \E \bigg[\indic{\abs{\tauh(X)-\tau_o(X)} \ge h}\,\abs{\tauh(X)-\tau_o(X)} \biggm| X \in \cX_{rem}\bigg] 
\]
To bound the last expectation, use the elementary inequality 
$ \indic{|u|\ge a}\,|u| \;\le\; \frac{u^2}{a} \text{ for all }u\in\mathbb{R},\ a>0$.
\begin{align}
   \E \bigg[\indic{\abs{\tauh(X)-\tau_o(X)} \ge h}\,\abs{\tauh(X)-\tau_o(X)} \biggm| X \in \cX_{rem}\bigg] &\le \frac{2}{h}\,\E \l[(\tauh(X)-\tau_o(X))^2 \biggm| X \in \cX_{rem}\r]  \nn \\
& \lesssim \frac{c_\delta}{h}  \l(n ~ \prob(X \in {\cal{X}}_{rem})\r)^{-2 \beta}  \label{eq:cond-gap} 
\end{align}
The last inequality follows via \eqref{eq:N_large} and CATE Oracle rate.
Next, we bound the mass of the abstention region. By construction of $\cX_{rem}$ there exists $\pi\in\Pihat$ such that
$\indic{X\in\cX_{rem}}=\indic{\pi(X)\neq \pihat(X)}$.  
\[
\prob \l( X \in \cX_{rem}\r) &=  \E\l[\indic{ \pi(X) \neq \pihat(X)} \r] \\
&= \E \l[ \indic{ \pi(X) \neq \pi^B(X)}\r] + \E \l[ \indic{ \pihat(X) \neq \pi^B(X)}\r] \\
&\leq \frac{1}{h}\l( E\l[ \indic{ \pi(X) \neq \pi^B(X)}h\r] + \E \l[ \indic{ \pihat(X) \neq \pi^B(X)}h\r]\r)\\
& \leq \frac{1}{h} \l( E\l[ \indic{ \pi(X) \neq \pi^B(X)} \abs{\tau_o(X)}\r] + \E \l[ \indic{ \pihat(X) \neq \pi^B(X)}\abs{\tau_o(X)}\r] \r)\\
& =\frac{1}{h} \l(  V(\pi^B) - V(\pi)  +   V(\pi^B) - V(\pihat) \r)\\
 &\lesssim \frac{1}{h} \l(V(\pi^*(X)) - V(\pi^B(X)) + \frac{\alpha}{\kappa} \r)
\]
using $|\tau_o|\ge h$ and that $\pihat, \pi$ belong in the almost optimal policy set $\Pihat$ (so $V(\pi^*)-V(\pihat)\lesssim \alpha/\kappa$ with $\alpha = \frac{1}{\kappa} \sqrt{\frac{d \log{\tfrac{n}{d}} + \log{\tfrac{1}{\delta}}}{n}}$.

Finally, insert \eqref{eq:cond-gap} into \eqref{eq:reg_main} and expand the right–hand side using the bound on $\P(X\in\cX_{rem})$. Collecting the terms arising from (i) $V^{(p)}$ regret, (ii) the oracle CATE estimation on $\cX_{rem}$ and the bound on $\P(X\in\cX_{rem})$ yields the three contributions $\mathrm{Reg}_1,\mathrm{Reg}_2,\mathrm{Reg}_3$ stated in the theorem.

\subsection{Proof of \cref{prop:dist-shift}}

We show that the policy abstention setting that we introduced has a natural relationship with distributionally robust policy learning. Specifically, consider a setting where the true data distribution of $(Y(0),Y(1))\mid X$ is different from our training data. This would happen when our observational data is outdated and cannot reflect the true effect of the current treatment of interest. For instance, we may want to find an optimal subpopulation for up-to-date vaccination, but we only have outcome data for an older version of the vaccine, that might be close to but different from the potential outcomes of the latest vaccine.

While deterministic policies are optimal without such an outcome distribution shift, they can be problematic otherwise. Intuitively, deterministic policies tend to ``put all eggs in one basket'', rendering them more vulnerable to any potential systematic shift of the potential outcomes. To mathematically formulate this intuition, we assume the true potential outcome distribution lies in some $W_1$-ball of the distribution that generates our observations, \emph{i.e.}
\begin{equation}
    \notag
    \prob_{\mathtt{test}} \in \mathcal{P}_{\alpha}(\prob_{\mathtt{train}}) := \left\{\prob: W_1(\prob,\prob_{\mathtt{train}})\leq\alpha\right\}
\end{equation}
and we would like to maximize the worst case value of a policy induced by the ambiguity set $\mathcal{P}_{\alpha}(\prob_{\mathtt{train}})$.

\begin{proof}[Proof of \cref{prop:dist-shift}]
    Fix $X$ and write $\mu_d(x) = \mathbb{E}[Y(d) \mid X = x]$, $d \in \{0,1\}$, and define, for a policy $\tilde\pi: X \to \{0,1,*\}$, the conditional reward at $x$ by 
\begin{align*}
r_{\tilde{\pi}}(x) &= \mathbf{1}\{\tilde{\pi}(x) \neq *\}[\tilde{\pi}(x)\mu_1(x) + (1-\tilde{\pi}(x))\mu_0(x)] \\
&\quad + \mathbf{1}\{\tilde{\pi}(x) = *\} \cdot 0.5(\mu_0(x) + \mu_1(x));
\end{align*}

let $P_\alpha(P_{\text{train}}) = \{P : W_1(P, P_{\text{train}}) \leq \alpha\}$ with the $W_1$ ball taken on the $(Y(0), Y(1))$ coordinates and the $X$-marginal fixed at $P_X^*$, so $V(\pi) = \mathbb{E}_P[r_\pi(X)]$ and 
$$\min_{P \in P_\alpha} V(\pi) = \mathbb{E}_{P_X^*}\left[\min_{P_X(x)} \mathbb{E}_P[r_\pi(X) \mid X = x]\right].$$

By Kantorovich--Rubinstein duality for the $\ell_1$ ground metric on $(Y(0), Y(1))$, for any affine $g(u,v) = au + bv$ we have 
$$\inf_{W_1(P, P_{\text{train}}) \leq \alpha} \mathbb{E}_P[g(Y(0), Y(1))] = \mathbb{E}_{P_{\text{train}}}[g] - \alpha \|(a,b)\|_\infty.$$

Apply this pointwise at $x$ with $g$ chosen according to $\pi(x)$, this becomes
\[
    \pi(x)\E[Y(1)\mid X=x] + (1-\pi(x))\E[Y(0)\mid X=x] - \alpha \max\{\pi(x), 1-\pi(x)\}.
\]

Since this is a linear function on $[0,0.5]$ and $[0.5,1]$, it must attains maximum value in $\{0,0.5,1\}$. If $\pi(x) = 1$ then $g(u,v) = v$ and $\|(0,1)\|_\infty = 1$, if $\pi(x) = 0$ then $g(u,v) = u$ and $\|(1,0)\|_\infty = 1$, and if $\pi(x) = 0.5$ then $g(u,v) = 0.5(u+v)$ and $\|(0.5, 0.5)\|_\infty = 0.5$; hence for every $x$,
\begin{align*}
\min_{P_X(x)} \mathbb{E}_P[r_\pi(X) \mid X = x] &= \mathbf{1}\{\pi(x) \neq 0.5\}[\pi(x)\mu_1(x) + (1-\pi(x))\mu_0(x) - \alpha] \\
&\quad + \mathbf{1}\{\pi(x) = 0.5\}[0.5(\mu_0(x) + \mu_1(x)) - \alpha/2] \\
&= \mathbf{1}\{\tilde{\pi}(x) \neq 0.5\}[\tilde{\pi}(x)\mu_1(x) + (1-\tilde{\pi}(x))\mu_0(x) - \alpha] \\
&\quad + \mathbf{1}\{\tilde{\pi}(x) = *\}[0.5(\mu_0(x) + \mu_1(x)) - \alpha/2]
\end{align*}

Taking expectation over $X$ yields 
\begin{align*}
\min_{P \in P_\alpha} V(\pi) &= \mathbb{E}[\mathbf{1}\{\tilde{\pi}(X) \neq *\}(\tilde{\pi}\mu_1 + (1-\tilde{\pi})\mu_0) + \mathbf{1}\{\tilde{\pi}(X) = *\} \cdot 0.5(\mu_0 + \mu_1)] \\
&\quad - \alpha \cdot \mathbb{P}(\tilde{\pi}(X) \neq *) - (\alpha/2) \cdot \mathbb{P}(\tilde{\pi}(X) = *).
\end{align*}

Comparing with the abstention objective 
$$V^{(p)}(\pi) = \mathbb{E}[\mathbf{1}\{\tilde{\pi} \neq *\}(\pi\mu_1 + (1-\pi)\mu_0) + \mathbf{1}\{\tilde{\pi} = *\}(0.5(\mu_0 + \mu_1) + p)]$$ 
at $p = \alpha/2$ gives $\min_{P \in P_\alpha} V(\pi) = V^{(\alpha/2)}(\pi) - \alpha$, because the difference at each $x$ is $\alpha$ in both the non-abstain case (no bonus vs $-\alpha$) and the abstain case ($+\alpha/2$ vs $-\alpha/2$).
\end{proof}




\section{Experiment Details}\label{appendix:experiments}

In this section, we provide further details of the experiments.

\subsection{Policy Learning with Abstention}

\textbf{Experimental setting.} We study Algorithm \ref{algo:abs} on synthetic policy learning problems with known ground truth. For each problem instance, $X$ are drawn i.i.d. (standard normal), treatment is assigned with an $X$‑dependent logistic propensity in $[0.1, 0.9]$, and observed outcomes are $Y = Y(D) + \epsilon$ that lie in $[0,1]$. We consider three reward regimes (linear, nonlinear, complex), multiple noise levels, and a range of feature dimensions and sample sizes.

\textbf{Policy class and hyperparameters.} The base class $\Pi$ contains simple threshold policies (including linear‑threshold with intercept). We follow Algorithm \ref{algo:abs} and select an empirical‑welfare maximizer from $\Pi$. Algorithm parameters are set to be $\kappa=0.1$, confidence $\delta=0.05$, and abstention bonus $p=0.05$. 

\textbf{Evaluation protocol.} For each configuration we report: (i) Monte‑Carlo ground‑truth $V(p)(\pi)$ computed with many draws; (ii) IPW estimates on observed data; (iii) abstention rate. We run multiple replications per configuration to produce reliable results.

\begin{figure}
    \centering
    \includegraphics[width=0.9\linewidth]{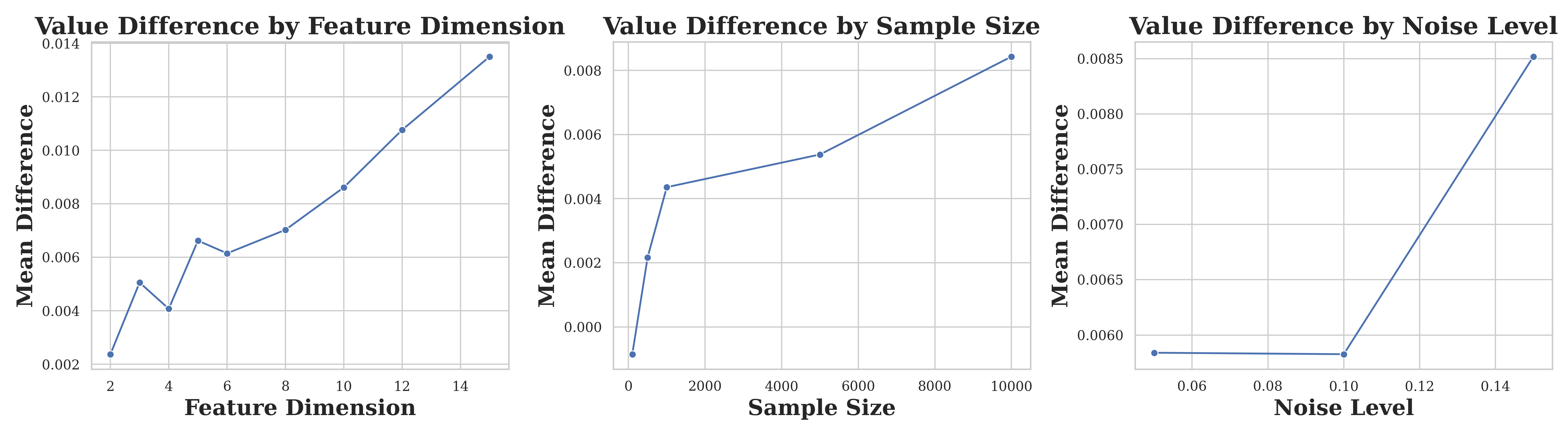}
    \caption{Mean value difference under abstention.}
    \label{fig:abstention-val-diff}
\end{figure}

\begin{figure}
    \centering
    \includegraphics[width=0.9\linewidth]{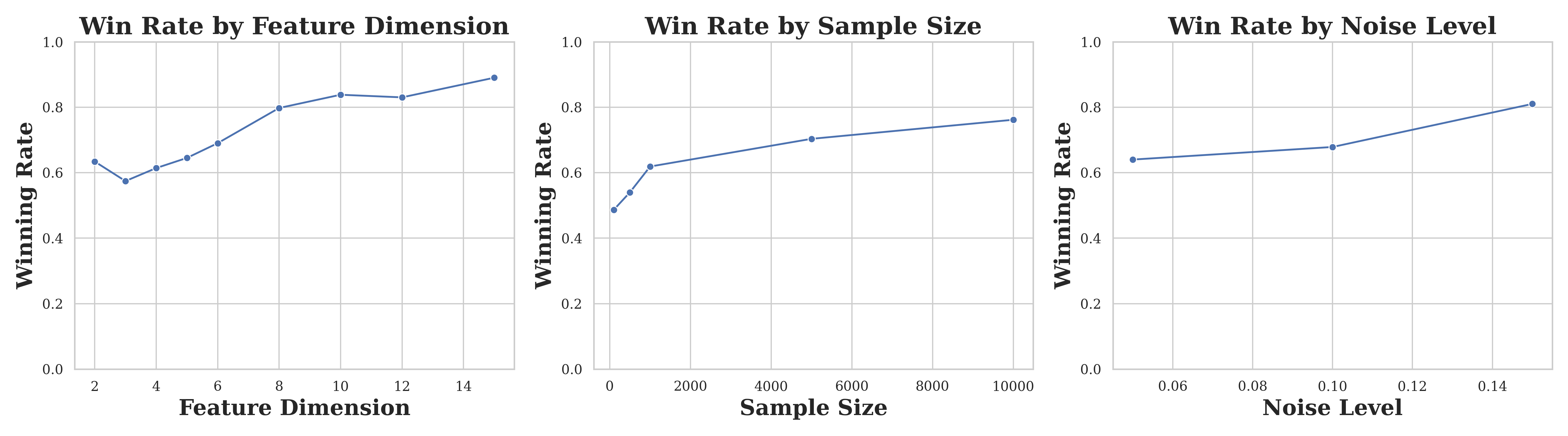}
    \caption{Winning rate of Algorithm \ref{algo:abs} compared with EWM.}
    \label{fig:abstention-win-rate}
\end{figure}

From Figures \ref{fig:abstention-val-diff} and \ref{fig:abstention-win-rate}, we can see that under all parameter configurations, abstention beats EWM in most instances, with positive mean difference relative to EWM, indicating its superiority.

\subsection{Safe Policy Learning}

We evaluate safe policy improvement methods that must avoid degrading a fixed baseline $\omega$. Experiments are conducted under varying reward variance and baseline optimality gap.

\textbf{Data‑generating process.} For each dataset we sample $X \sim U[0,1]^d (d=5)$. The CATE is chosen to be $\tau(X)=2(X_1+X_2-1)$;  and the potential outcomes are $Y(0)=X_3+\epsilon, Y(1)=Y(0)+\tau(X), \epsilon\sim N(0,\sigma^2)$; treatment $D\sim \text{Bernoulli}(p(X))$, with $p(X)$ either constant or logistic in $X$, clipped to $[0.1, 0.9]$. The true optimal policy is $\pi^*(x)=1\{\tau(X)>0\}$.

\textbf{Baselines.} We compare with several baseline algorithms for safe policy learning: (i) Safe EWM: direct EWM followed by comparing its estimated value with that of the baseline policy, and (ii) two  versions ($t$‑test LCB and clipped‑CI) of the HCPI algorithms proposed in \citet{thomas2015high}.

For each parameter value and sample size $n\in \{200, 500, 1000, 2000, 5000\}$, we run 500 independent replications. For each method we compute ground‑truth value using $Y(0),Y(1)$ and report mean true‑value gain $V(\pi_{chosen})-V(\omega)$, rate of mistake $V(\pi_{chosen})<V(\omega)$, and rate of improvement $P[V(\pi_{chosen})>V(\omega)]$. Results are aggregated across runs and visualized as in Figures \ref{fig:varying_noise} and \ref{fig:varying_base_opti_gap} respectively.

\end{document}